\documentclass{article}

\usepackage{microtype}
\usepackage{graphicx}
\usepackage{float}

\usepackage{subcaption}
\usepackage{booktabs}
\usepackage{epstopdf}
\usepackage{tikz}
\usetikzlibrary{arrows, positioning, trees}

\usepackage{hyperref}

\usepackage[accepted]{icml2026} 

\usepackage{amsmath}
\usepackage{amssymb}
\usepackage{placeins}
\usepackage{mathtools}
\usepackage{amsthm}

\newtheorem{proposition}{Proposition}
\newtheorem{remark}{Remark}

\usepackage[capitalize,noabbrev]{cleveref}
\usepackage[textsize=tiny]{todonotes}

\theoremstyle{plain}

\theoremstyle{remark}

\icmltitlerunning{Implicit Action Chunking for Smooth Continuous Control}

\begin{document}

\twocolumn[
  \icmltitle{Implicit Action Chunking for Smooth Continuous Control}



\begin{icmlauthorlist}
  \icmlauthor{Bosun Liang}{hku}
  \icmlauthor{Shuo Pei}{hku}
  \icmlauthor{Zirui Chen}{bit}
  \icmlauthor{Chuanzhi Fan}{bit}
  \icmlauthor{Chen Sun}{hku}
  \icmlauthor{Yuankai Wu}{scu}
  \icmlauthor{Huachun Tan}{bit}
  \icmlauthor{Yong Wang*}{hku}
\end{icmlauthorlist}

\icmlaffiliation{hku}{Department of Data and Systems Engineering, The University of Hong Kong, Hong Kong SAR, China}
\icmlaffiliation{bit}{Beijing Institute of Technology, Zhuhai, China}
\icmlaffiliation{scu}{College of Computer Science, Sichuan University, Chengdu, China}

\icmlcorrespondingauthor{Yong Wang}{wy0304@hku.hk}


  \icmlkeywords{Reinforcement Learning, Continuous Control, Action Smoothing, Autonomous Driving, CARLA, DeepMind Control Suite}

  \vskip 0.3in
]

\printAffiliationsAndNotice{}  

\begin{abstract}
Reinforcement learning often produces high-frequency oscillatory control signals that undermine the safety and stability required for physical deployment. Explicit action chunking addresses this by predicting fixed-horizon trajectories but scales the policy output dimension proportionally with the horizon length, leading to optimization difficulties and incompatibility with standard step-wise interaction. To overcome these challenges, this paper proposes Dual-Window Smoothing (DWS), an implicit action chunking framework for smooth continuous control. Unlike explicit methods, DWS enforces temporal coherence without expanding the action space. It uses a dual-window design: an execution window that ensures physical smoothness through deterministic modulation, and a value window that aligns temporal-difference targets over the horizon to correct critic bias caused by open-loop execution. DWS also includes a lightweight actor-side temporal regularizer based on first-order action differences to promote global continuity. This design effectively bridges the gap between temporal abstraction and reactive step-wise control. Experiments on benchmarks including the DeepMind Control Suite and industrial energy management tasks show that DWS outperforms state-of-the-art (SOTA) baselines. In complex vision-based autonomous driving tasks, DWS achieves smoother control, safer behavior with reduced jitter, and attains a 100\% success rate.
\end{abstract}

\section{Introduction}
\label{sec:intro}

Deep Reinforcement Learning (DRL) has achieved significant milestones in complex continuous control, from high-dimensional robotic manipulation to autonomous driving \citep{elguea2023review, luo2025precise, wu2024recent, xu2025deep}. Despite these capabilities, physical deployment imposes a strict requirement for \textit{temporal coherence}, defined as the generation of smooth and predictable trajectories that respect actuator limits \citep{mysore2021caps, shen2020sr2l}. However, a fundamental structural mismatch exists because standard DRL algorithms optimize decisions in a step-wise manner, whereas physical hardware demands temporally smooth actuation to manage inertia and bandwidth limitations \citep{mysore2021caps, pmlr-v78-dosovitskiy17a}. Consequently, learned policies often exhibit high-frequency oscillations that are difficult for physical actuators to execute, compromising system reliability \citep{tassa2018deepmind}. Such instability is prohibitive in dynamic scenarios like motion control, where control smoothness is a prerequisite for system safety.


Existing research addresses control smoothness primarily through explicit regularization or structured policy parameterizations. Regularization-based methods penalize temporal action derivatives to mitigate high-frequency fluctuations, but may limit policy expressiveness \citep{mysore2021caps, shen2020sr2l}. In parallel, architectural approaches seek intrinsic stability by enforcing Lipschitz continuity, exemplified by methods like L2C2 and MLP-SN \citep{kobayashi2022l2c2, wang2024smooth}. LipsNet and LipsNet++ further combine adaptive Lipschitz bounds with learnable frequency-domain filters to handle observation noise \citep{pmlr-v202-song23b, song2025lipsnet}, while SmODE uses ODE-based mechanisms to regulate trajectory dynamics \citep{wang2025odebased}. Despite these architectural enhancements, such approaches remain bound to a step-wise decision paradigm that infers actions based solely on instantaneous states. This reliance restricts the capacity to model long-horizon temporal dependencies and necessitates a compromise between maximizing returns and satisfying smoothness constraints, particularly in high-dimensional perception scenarios where systematic validation remains limited. 

To transcend step-wise limitations, recent work has explored Action Chunking \citep{bharadhwaj2024roboagent}, which redefines the control unit as a fixed-horizon sequence. This paradigm projects actions onto a trajectory manifold, enforcing intra-chunk coherence while mitigating the non-Markovian stochasticity of instantaneous decisions \citep{black2025real, black2025training}. In robotic imitation and Vision-Language-Action (VLA) models \citep{pi2024pi0,  chen2024end}, these open-loop chunks ensure spatiotemporal continuity and enable inference-time refinement to stabilize execution without retraining. Extending this temporal abstraction to RL, algorithms such as Q-Chunking reformulate the MDP to operate over action sequences,leveraging multi-step Bellman backups to accelerate value propagation and bridge the credit assignment gap in long-horizon tasks \citep{li2025action}. Subsequent advancements like Decoupled Q-Chunking further refine this mechanism by distinguishing the planning horizon of the actor from that of the critic, thereby mitigating open-loop execution errors while maintaining training efficiency \citep{li2025dqc}. These studies show that short-term temporal commitment is a useful inductive bias for stabilizing high-frequency control and facilitating exploration.


However, these methods fall under the paradigm of explicit action chunking, which necessitates the direct prediction of fixed-horizon trajectory segments. This approach introduces three problems. (1) It offers limited guarantees on execution-level smoothness, the open-loop nature of chunk execution allows for rapid intra-chunk variance and induces sharp discontinuities at replanning boundaries \cite{yang2025actor}. (2) It exacerbates optimization complexity by expanding the policy output from $\mathbb{R}^d$ to $\mathbb{R}^{hd}$, which can hamper exploration efficiency and training stability as the horizon $h$ grows. (3) Explicit chunks create an architectural incompatibility with standard step-wise interaction interfaces, complicating integration with established actor-critic backbones and experience replay pipelines. Crucially, this structural mismatch hinders effective integration with expert-guided DRL, since conventional safety shielding and fallback mechanisms depend on instantaneous, point-wise action overrides that conflict with fixed-horizon action chunks. \textit{These limitations motivate a pivotal inquiry: rather than explicitly inflating the action space, can we capture the stabilizing benefits of temporal abstraction implicitly, achieving superior smoothness while retaining the reactive feedback and modularity of step-wise control?}

To address these challenges, we propose \textbf{Dual-Window Smoothing (DWS)}, an implicit action chunking framework that bridges the gap between temporal abstraction and step-wise control by coupling an $h$-step execution window with a synchronized value window. Diverging from explicit trajectory prediction, DWS retains a standard $d$-dimensional action output to circumvent optimization instability, instead achieving smoothness via an execution operator that transforms reference actions into locally coherent motion, reinforced by temporal regularization. Structurally, this design preserves the established actor--critic interface and standard transition-based experience replay, while an auxiliary on-policy window buffer aligns critic training with the $h$-step execution commitment. Consequently, DWS functions as a plug-and-play module for common DRL backbones and maintains full compatibility with expert-guided DRL, as the executed action stream supports instantaneous, point-wise safety interventions without disrupting the windowed architecture. These attributes endow DWS with significant potential for complex real-world control tasks.

Our contributions are threefold: (1) We introduce the DWS framework, which enables implicit action chunking within standard step-wise interfaces by coupling a deterministic Execution Window with lightweight boundary regularization. This design enforces execution-level smoothness and temporal coherence without expanding the action manifold or sacrificing high-frequency reactivity. (2) We propose a horizon-aligned value learning mechanism via a synchronized Value Window, which constructs terminal-safe, window-consistent TD targets from contiguous executed segments. This effectively resolves the misalignment between step-wise value estimation and temporally committed execution, mitigating critic myopia. (3) We demonstrate superior performance across diverse domains, from vector-based control and real-world energy management to high-dimensional visual autonomous driving. DWS consistently outperforms SOTA baselines and exhibits seamless compatibility with expert-guided DRL, notably achieving a 100\% success rate in safety-critical overtaking scenarios.

\FloatBarrier

\section{Preliminaries}
\label{sec:prelim}

\paragraph{Actor-Critic Backbone.}
We formulate the continuous control problem as a Markov Decision Process (MDP) defined by $(\mathcal{S}, \mathcal{A}, P, r, \gamma)$, where $\mathcal{S}$ and $\mathcal{A} \subset \mathbb{R}^d$ denote the state and action spaces. The objective is to maximize the expected discounted return $J(\phi) = \mathbb{E}[\sum_{t=0}^\infty \gamma^t r(\mathbf{s}_t, \mathbf{a}_t)]$. Prominent off-policy algorithms, such as TD3 \cite{fujimoto2018addressing} and SAC \cite{haarnoja2018soft}, adopt an actor-critic architecture comprising a policy $\pi_\phi$ and a value function $Q_\theta$. The critic parameters $\theta$ are updated by minimizing the Bellman error against a temporal-difference (TD) target $y_t = r_t + \gamma (1 - d_t) Q_{\theta^-}(\mathbf{s}_{t+1}, \tilde{\mathbf{a}}_{t+1})$, where $\theta^-$ denotes a slowly updating target network. Concurrently, the actor $\pi_\phi$ is optimized to maximize the estimated return $Q_\theta(\mathbf{s}, \pi_\phi(\mathbf{s}))$. 
Crucially, these standard approaches perform optimization in a point-wise manner, determining actions based solely on instantaneous state evaluations. This absence of explicit temporal modeling frequently leads to high-frequency oscillations and control instability.

\paragraph{Expert-guided Reinforcement Learning.}
In safety-critical domains or tasks with sparse rewards, relying solely on stochastic exploration is often inefficient or hazardous. Expert-guided RL (EGRL) mitigates these issues by integrating an external authority $\pi^E$ directly into the learning loop. Such an expert can be derived from various sources, including human demonstrations~\cite{wu2024human}, optimal controllers~\cite{liu2025ekg}, or script-based planners that are queried interactively during training \cite{pan2025reinforcement}.
Within this widely adopted paradigm, expert data is typically utilized to accelerate convergence via behavioral cloning regularizers or to guarantee a performance lower bound during the initial training stages \cite{cao2023continuous}. Formally, the data collection process is modulated by a binary intervention signal $g_t \in \{0, 1\}$. At each discrete time step $t$, the system either operates autonomously or defers to the expert, resulting in a composite behavior policy $\beta$:
\begin{equation*}
    \mathbf{a}_t = 
    \begin{cases}
        \mathbf{a}_t^{RL} \sim \pi_\phi(\cdot \mid \mathbf{s}_t) & \text{if } g_t = 0 \quad \text{(Agent)}, \\
        \mathbf{a}_t^{E} \sim \pi^E(\cdot \mid \mathbf{s}_t) & \text{if } g_t = 1 \quad \text{(Expert)}.
    \end{cases}
\end{equation*}
where $\mathbf{a}_t^{RL}$ is typically the deterministic output $\pi_\phi(\mathbf{s}_t)$ during deployment or evaluation. To effectively distill expert knowledge while retaining the capacity for self-improvement, the optimization objective combines standard return maximization with an adaptive imitation term.
\begin{equation*}
    \label{eq:eg_actor_loss}
        \mathcal{L}_{\mathrm{EG}}(\phi) = \mathbb{E} \Big[ 
        - Q_\theta(\mathbf{s}_t, \pi_\phi(\mathbf{s}_t)) 
        + \lambda \cdot g_t \cdot \omega_t \cdot \big\|\mathbf{a}^E_t - \pi_\phi(\mathbf{s}_t)\big\|_2^2 
        \Big]
\end{equation*}
The weighting term $\omega_t \ge 0$ regulates the supervision strength based on the critic's evaluation, typically defined as the advantage of the expert action over the agent's policy. 


\section{Method}
\label{sec:method}

\begin{figure}[t]
  \centering
  \includegraphics[width=\columnwidth]{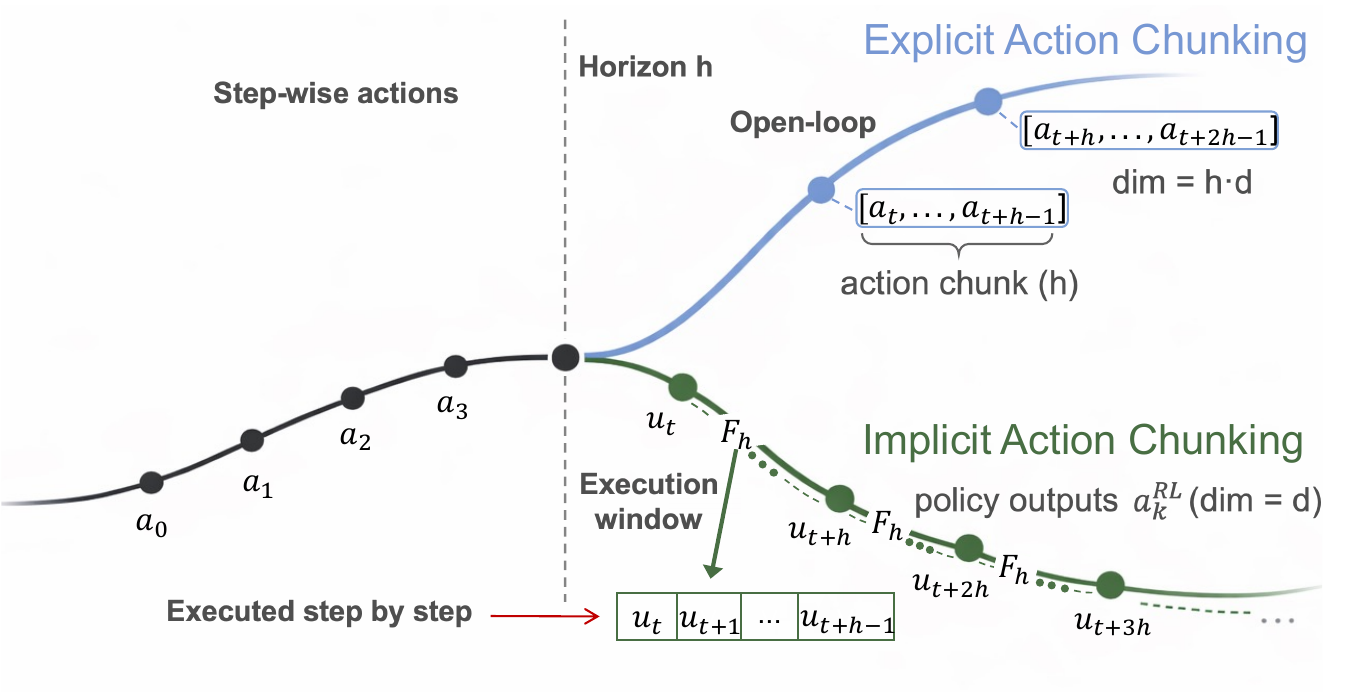}
  \vspace{-1.0ex}

  \caption{\textbf{Explicit vs. implicit action chunking.}
   Explicit chunking outputs an $h$-step action sequence in one decision (dimension $hd$) and executes it open-loop.
   DWS keeps a $d$-dimensional policy output $a$ and induces chunk-like temporal coherence by executing $u=\mathcal{F}_h(a)$ step-wise.}

  \vspace{-1.2ex}
  \label{fig:implicit_vs_explicit_chunking}
\end{figure}

\subsection{Implicit Action Chunking Formulation}
\label{subsec:implicit_chunking}

Standard explicit action chunking expands the policy output space to $\mathbb{R}^{h \times d}$, directly predicting fixed-horizon (h) action trajectories, as illustrated in ~\cref{fig:implicit_vs_explicit_chunking}.
In contrast, we formalize implicit action chunking as a generative process that decouples high-level decision-making from low-level execution dynamics. Specifically, the policy $\pi_\theta(a \mid s)$ retains a standard low-dimensional output space $a \in \mathbb{R}^d$, where the action serves as a latent control variable. Temporal coherence is induced structurally through a deterministic execution operator $\mathcal{F}_h : \mathbb{R}^d \rightarrow \mathbb{R}^{h \times d}$, which maps an atomic action $a_t$ into a smooth trajectory segment $\mathbf{u}_{t:t+h-1} = \mathcal{F}_h(a_t)$. This formulation restricts the agent’s search space to a smooth submanifold embedded in the full sequence space $\mathbb{R}^{h \times d}$, thereby implicitly regularizing the optimization landscape and promoting temporally coherent control. Formally, DWS optimizes the expected return over trajectories generated by this implicit process:
\begin{equation}
\begin{aligned}
\max_{\pi_\theta} \quad 
& \mathbb{E}_{\tau}\!\left[\sum_{t=0}^{T} \gamma^t \, r(s_t, u_t)\right] \\
\text{s.t.} \quad 
& \mathbf{u}_{k:k+h-1}
= \mathcal{F}_h\!\left(a_k \sim \pi_\theta(\cdot \mid s_k)\right).
\end{aligned}
\label{eq:implicit_chunk_objective}
\end{equation}

where $k=mh$ ($m\ge 0$) indexes window boundaries.
At each boundary $k$, the policy samples $a_k\sim\pi_\theta(\cdot\mid s_k)$ and the execution operator generates $\mathbf{u}_{k:k+h-1}=\mathcal{F}_h(a_k)$, which is applied step-by-step as $u_t$ at environment step $t$.

To operationalize Eq.~\eqref{eq:implicit_chunk_objective} within standard actor--critic backbones, DWS adopts a dual-window design. The \textbf{Execution Window} applies $\mathcal{F}_h$ online to generate step-wise executed actions under an $h$-step commitment.
The \textbf{Value Window} then trains the critic with window-aligned $h$-step supervision computed from contiguous executed segments, ensuring value learning matches the same execution horizon.



\subsection{Execution Window}
\label{subsec:execution_window}

To physically suppress high-frequency oscillations, DWS instantiates the abstract operator $\mathcal{F}_h$ as a deterministic temporal modulation kernel. This mechanism decouples the decision frequency (controlled by $\pi$) from the actuation frequency (controlled by the environment).
Formally, at the initiation of a window at time $t$, the policy generates a latent reference action $a_t \sim \pi(\cdot \mid s_t)$. The physical action sequence $\mathbf{u}_{t:t+h-1}$ is then generated via a modulation profile $\mathbf{w} \in \mathbb{R}^h$, as illustrated in~\cref{fig:execution_window}:
\begin{equation}
    u_{t+k} = w_k \cdot a_t, \quad \forall k \in \{0, \dots, h-1\},
    \label{eq:execution_profile}
\end{equation}
where $w_k$ is a scalar weight governing intra-window dynamics. We investigate two distinct spectral profiles for $\mathbf{w}$:
(i) Zero-Order Hold (ZOH): Defined as $w_k = 1, \forall k$. This profile enforces rigid temporal consistency, effectively acting as a low-pass filter that strictly caps the control frequency at $1/h$ to maximize trajectory correlation.
(ii) Dissipative Decay: Defined by a monotone sequence $1 = w_0 \ge w_1 \ge \dots \ge w_{h-1} \ge 0$.
This profile attenuates the action magnitude within the open-loop interval, providing a simple actuation damping effect that reduces sensitivity to within-window drift and disturbances.
In our implementation, we use a linear instantiation
$w_k = 1-\frac{k}{h}$, which yields $u_{t+k}=w_k a_t$ and smoothly decays the executed action to $w_{h-1}=1/h$ at the end of the window.

\begin{proposition}[Intra-Window Smoothness]
\label{prop:intra_window_smoothness}
Assume the policy output is bounded such that $\|a_t\| \le A_{\max}$. For a given execution profile $\mathbf{w}$, the discrete variation of the executed action $u$ within a window is uniformly bounded as
\begin{equation}
    \|\Delta u_{t+k}\|
    \le
    |w_k - w_{k-1}| \cdot A_{\max},
    \quad \forall\, k \in \{1, \dots, h-1\}.
\end{equation}
\end{proposition}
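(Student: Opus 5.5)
The plan is a direct computation from the definition of the Execution Window in Eq.~\eqref{eq:execution_profile}. Take the discrete variation to be the first-order difference $\Delta u_{t+k} := u_{t+k} - u_{t+k-1}$. Fix a window that starts at boundary $t$, with reference action $a_t$, and fix an index $k \in \{1,\dots,h-1\}$. Then both $t+k$ and $t+k-1$ lie in the same window. The restriction $k \ge 1$ in the statement is exactly what guarantees this: it excludes the boundary step, where $u_{t}$ and $u_{t-1}$ come from different reference actions. So both executed actions are generated from the same $a_t$.

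The key steps are as follows. First, substitute $u_{t+k} = w_k a_t$ and $u_{t+k-1} = w_{k-1} a_t$. Second, factor out the common latent action, which gives $\Delta u_{t+k} = (w_k - w_{k-1})\, a_t$; this works because the weights are scalars. Third, apply absolute homogeneity of the norm to get $\|\Delta u_{t+k}\| = |w_k - w_{k-1}|\,\|a_t\|$. Finally, invoke the boundedness assumption $\|a_t\| \le A_{\max}$ to obtain the claimed bound. The bound is in fact an equality up to the factor $\|a_t\|/A_{\max}$. It is uniform because it does not depend on the state or on which window is considered.

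I would close with the two instantiations as corollaries. Under ZOH, $w_k = w_{k-1} = 1$, so $\Delta u_{t+k} = 0$ and the intra-window variation vanishes. Under the linear dissipative decay $w_k = 1 - k/h$, we have $|w_k - w_{k-1}| = 1/h$, so $\|\Delta u_{t+k}\| \le A_{\max}/h$, which shrinks as the horizon grows. For a general monotone decaying profile, the increments telescope: $\sum_{k=1}^{h-1} |w_k - w_{k-1}| = w_0 - w_{h-1} \le 1$. This bounds the total intra-window variation by $A_{\max}$.

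There is no real obstacle here. The only point needing care is the scope of the index, namely making explicit that the bound holds strictly inside a window and says nothing about the jump $u_{t+h} - u_{t+h-1}$ at the replanning boundary. That boundary jump involves two different reference actions $a_{t+h}$ and $a_t$. Presumably this is why the paper adds the separate actor-side first-order regularizer.
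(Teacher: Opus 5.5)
Your proposal is correct and follows the paper's proof exactly: substitute $u_{t+k}=w_k a_t$ and $u_{t+k-1}=w_{k-1}a_t$, which share the same cached reference action, factor to $(w_k-w_{k-1})a_t$, use absolute homogeneity of the norm, and apply $\|a_t\|\le A_{\max}$. Your added corollaries are also correct and go slightly beyond what the paper states: zero variation under ZOH, the $A_{\max}/h$ bound for linear decay, and the telescoping total-variation bound $(w_0-w_{h-1})A_{\max}\le A_{\max}$.
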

\noindent\emph{Proof.} See Proposition~A.1 in ~\cref{app:theory_dws}.

\begin{remark}
For the zero-order hold profile, $|w_k - w_{k-1}| = 0$, which yields perfect intra-window smoothness, i.e., zero discrete jitter. For the dissipative profile, the degree of smoothness is explicitly controlled by the decay rate of $\mathbf{w}$. As a result, high-frequency action fluctuations are structurally suppressed within each window, independent of the stochasticity of the policy $\pi$.
\end{remark}

In this paper, we adopt a short window length ($h = 3$), ensuring that the policy is queried at a high frequency across window boundaries. This choice is supported by prior research, which demonstrates that a smaller action chunk length yields superior performance \citep{li2025action}. 

\begin{figure}[t]
  \centering
  \includegraphics[width=\columnwidth]{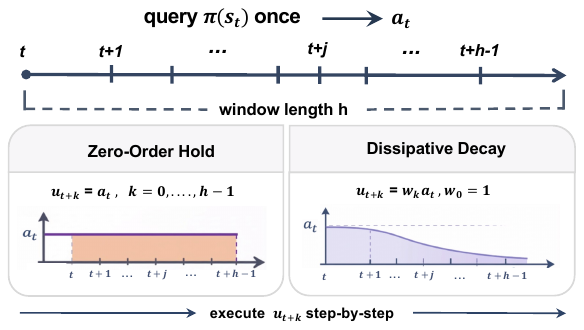}
  \vspace{-2.0ex}
  \caption{\textbf{Execution Window.}
    At each window boundary, the policy outputs a reference action $a_t$, and the execution profile $\mathbf{w}$ produces step-wise executed actions $u_{t:t+h-1}$ via $u_{t+k}=w_k a_t$.
    ZOH yields constant within-window execution, while Dissipative Decay gradually attenuates the magnitude.}
  \vspace{-1.0ex}
  \label{fig:execution_window}
\end{figure}

\subsection{Value Window}
\label{subsec:value_window_theory}

A critical challenge in smoothing is the \textit{training--execution mismatch}.
Standard critics trained with one-step TD targets implicitly assume the policy can react immediately at $t\!+\!1$ (e.g., $r_t+\gamma Q(s_{t+1},\tilde u_{t+1})$).
When execution is windowed via a length-$h$ profile, this assumption is violated, which can bias value estimates toward step-wise (potentially jittery) corrections and lead to critic myopia with respect to segment-wise coherent execution.
Accordingly, value propagation should use an $h$-step \emph{window-aligned} backup that matches the executed commitment.

To align valuation with execution under the same horizon $h$, we introduce the value window, as illustrated in~\cref{fig:value_window}.
During interaction, we maintain a lightweight ordered on-policy window buffer $\mathcal{W}$ that stores executed one-step transitions $(\mathbf{s}_t,\mathbf{u}_t,r_t,\mathbf{s}_{t+1},d_t)$ in temporal order, where $d_t\in\{0,1\}$ indicates termination.
Unlike shuffled replay, $\mathcal{W}$ enables sampling contiguous length-$h$ segments required for windowed backups.

\begin{figure}[t]
  \centering
  \includegraphics[width=\columnwidth]{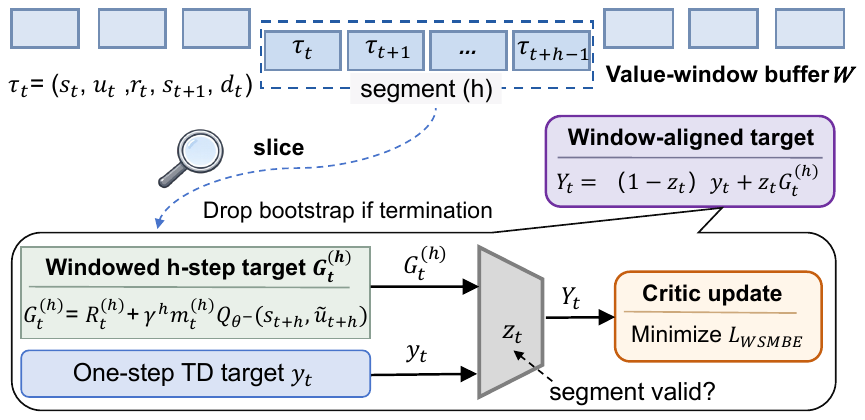}
  \vspace{-0.8ex}
  \caption{\textbf{Value Window.} A contiguous length-$h$ executed segment is sliced from the ordered buffer $\mathcal{W}$ to form a $h$-step windowed target $G_t^{(h)}$. The critic supervision target interpolates between the one-step TD target and the windowed target using the segment-valid gate $z_t$.}
  \vspace{-1.0ex}
  \label{fig:value_window}
\end{figure}

\paragraph{Windowed $h$-step target.}
From a contiguous executed segment in $\mathcal{W}$ starting at time $t$, we define the windowed return
\begin{equation}
G^{(h)}_t
=
\sum_{k=0}^{h-1}\gamma^k r_{t+k}
+
\gamma^h m_t^{(h)}\,
Q_{\theta^-}\!\big(\mathbf{s}_{t+h},\tilde{\mathbf{u}}_{t+h}\big),
\label{eq:hstep_return_theory}
\end{equation}
where $\tilde{\mathbf{u}}_{t+h}$ is the bootstrap action from the target actor (with target noise if applicable), and the mask
\begin{equation}
m_t^{(h)}=\mathbf{1}\!\left[\sum_{k=0}^{h-1} d_{t+k}=0\right]
\label{eq:terminal_mask_theory}
\end{equation}
avoids bootstrapping across termination. For twin critics, we use
$Q_{\theta^-}=\min(Q_{\theta_1^-},Q_{\theta_2^-})$.

\paragraph{Window--step mixed Bellman error.}
Let $y_t \!=\! r_t+\gamma(1-d_t)\,Q_{\theta^-}(\mathbf{s}_{t+1},\tilde{\mathbf{u}}_{t+1})$ denote the standard one-step TD target from replay, and let $z_t\in\{0,1\}$ indicate whether a valid length-$h$ contiguous window (not crossing termination) is available in $\mathcal{W}$.
We define the \emph{window-aligned} target
\begin{equation}
Y_t \;=\; (1-z_t)\,y_t \;+\; z_t\,G^{(h)}_t,
\label{eq:gated_target_theory}
\end{equation}
and train the critic by minimizing the \textbf{Window--Step Mixed Bellman Error}
\begin{equation}
\mathcal{L}_{\mathrm{WSMBE}}(\theta)
=
\mathbb{E}\Big[\big(Q_\theta(\mathbf{s}_t,\mathbf{u}_t)-Y_t\big)^2\Big].
\label{eq:critic_loss_theory}
\end{equation}
This objective reduces to standard one-step TD at episode boundaries ($z_t=0$), and otherwise enforces $h$-step supervision aligned with the executed commitment ($z_t=1$), thereby coupling value propagation to the same horizon used by the execution window.

\begin{proposition}[Operator-Consistent Windowed Target]
\label{prop:operator_consistency}
Fix the execution operator $\mathcal{F}_h$ and consider the induced executed interaction as Markovian in an augmented state space that includes the within-window phase and the cached reference action. For any target critic $Q_{\theta^-}$, whenever a valid contiguous executed segment of length $h$ is available (i.e., $z_t = 1$), the windowed target $G_t^{(h)}$ defined in Eq.~\eqref{eq:hstep_return_theory} constitutes an unbiased empirical sample of the corresponding $h$-step Bellman backup under the executed process:
\begin{equation}
\mathbb{E}\!\left[ G_t^{(h)} \mid s_t, u_t \right]
=\big( \mathcal{T}^{\pi^{\mathrm{exec}}}_h Q_{\theta^-} \big)(s_t, u_t),
\label{eq:operator_consistency}
\end{equation}
where $\mathcal{T}^{\pi^{\mathrm{exec}}}_h$ denotes the $h$-step policy-evaluation operator aligned with the same execution commitment. Bootstrapping is masked by $m_t^{(h)}$ as specified in Eq.~\eqref{eq:terminal_mask_theory}.
\end{proposition}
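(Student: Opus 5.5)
The plan is to make the augmented Markov structure explicit and then verify the identity by iterated conditioning, essentially by definition of $\mathcal{T}^{\pi^{\mathrm{exec}}}_h$.

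\textbf{Augmented process.} Define the augmented state $x_t=(s_t,\kappa_t,\bar a_t)$. Here $\kappa_t\in\{0,\dots,h-1\}$ is the within-window phase and $\bar a_t$ is the cached reference action. The executed policy $\pi^{\mathrm{exec}}$ acts as follows:
\begin{itemize}
\item if $\kappa_t=0$, it samples $a_t\sim\pi(\cdot\mid s_t)$, sets $\bar a_t=a_t$, and emits $u_t=w_0 a_t$;
\item otherwise, it emits $u_t=w_{\kappa_t}\bar a_t$ deterministically.
\end{itemize}
After each step, $\kappa_{t+1}=(\kappa_t+1)\bmod h$. Under this construction the transition kernel on $(x_t,u_t)$ is $P(s_{t+1}\mid s_t,u_t)$ composed with a deterministic phase and cache update, so the executed process is a genuine MDP.

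\textbf{The operator.} In this MDP the $h$-step evaluation operator is
\[
(\mathcal{T}^{\pi^{\mathrm{exec}}}_h Q)(x_t,u_t)=\mathbb{E}\Big[\sum_{k=0}^{h-1}\gamma^k r_{t+k}+\gamma^h m_t^{(h)} Q(x_{t+h},\tilde u_{t+h})\,\Big|\,x_t,u_t\Big].
\]
Inside the expectation, $u_{t+1},\dots,u_{t+h-1}$ are generated by $\pi^{\mathrm{exec}}$ and $\tilde u_{t+h}$ by the target actor. The phrase ``conditioned on $(s_t,u_t)$'' in the statement is read as conditioning on the full augmented state. This reading is legitimate because $G^{(h)}_t$ is only formed at a window boundary, where the phase is known and the cache is determined by $u_t$. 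Termination is handled by treating $d=1$ as an absorbing state with zero reward, so $m_t^{(h)}$ simply zeroes the bootstrap term.

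\textbf{Unbiasedness.} The key step is to show that a contiguous segment sliced from $\mathcal{W}$ with $z_t=1$ is a draw from the conditional law used in the operator. This holds for three reasons:
\begin{itemize}
\item $\mathcal{W}$ is on-policy and stored in temporal order;
\item the rewards $r_{t+k}$ are realized along exactly the process $\pi^{\mathrm{exec}}$ generates;
\item the $h$-step segment starting at a boundary coincides with one execution window, with no intermediate policy query.
\end{itemize}
Given this, apply the tower property step by step. Condition successively on $x_{t+1},\dots,x_{t+h}$ and use the Markov property of the augmented chain. The reward sum then has the required conditional expectation, and the bootstrap term $\gamma^h m_t^{(h)}Q_{\theta^-}(s_{t+h},\tilde u_{t+h})$ matches as well. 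For the bootstrap term we use two facts: $\tilde u_{t+h}$ is drawn fresh from the target actor given $s_{t+h}$, and $s_{t+h}$ is again a boundary state. Twin-critic minimization is fine because $Q_{\theta^-}$ is a fixed function; the identity holds for any fixed $Q$.

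\textbf{Main obstacle.} The hard part is the sampling argument. The conditioning on $z_t=1$ selects segments that do not terminate early, and this could introduce selection bias. To handle it, I would define the operator with absorbing termination, so that truncated segments correspond to the mask rather than to exclusion. Alternatively, I would state the result conditionally on the event of valid windows, which is the event on which the target is actually used.

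A second concern is staleness. The buffer may hold data from a slightly older policy, so unbiasedness is exact only for the policy that generated the segment. I would therefore note the on-policy assumption on $\mathcal{W}$ explicitly.
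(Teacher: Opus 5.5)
Your proposal is the paper's argument made explicit: once the executed interaction is Markov in the augmented state $(s_t,\kappa_t,\hat a_t)$ (the paper's Proposition A.2), $G_t^{(h)}$ is by construction a Monte Carlo sample of the conditional expectation that defines $\mathcal{T}^{\pi^{\mathrm{exec}}}_h Q_{\theta^-}$, which is exactly the paper's one-line sketch. Your extra care goes beyond the paper: you bootstrap the operator with the target-actor action $\tilde u_{t+h}$ rather than the executed $u_{t+h}$ the paper writes, and you flag that conditioning on $z_t=1$ is a survival-selection event best handled via absorbing termination; however, your restriction to boundary-aligned segments is unnecessary and is not enforced by the paper's algorithm, since the paper simply conditions on the full augmented state $\bar s_t$.
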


\begin{remark}
As a consequence, minimizing the window-step mixed Bellman error in Eq.~\eqref{eq:critic_loss_theory} performs Bellman regression toward the window-aligned evaluation operator whenever $z_t = 1$. When a full execution segment is unavailable, such as near episode boundaries, the objective naturally falls back to standard one-step temporal-difference learning ($z_t = 0$). A formal augmented-Markov characterization of the executed process and the induced operator is provided in ~\cref{app:theory_dws}.
\end{remark}


\subsection{Temporal Smoothness Regularizer}
\label{subsec:smooth_reg_theory}

The execution window guarantees smoothness \textit{within} a window, but the policy can still introduce discontinuities at window boundaries (e.g., between $t+h-1$ and $t+h$). To encourage global continuity, DWS adds a lightweight actor-side temporal regularizer based on first-order action differences. Given the base actor loss $L_{\mathrm{base}}(\phi)$ from the backbone, we optimize
\begin{equation}
L_{\pi}(\phi)=L_{\mathrm{base}}(\phi)+\lambda_S\,
\mathbb{E}\Big[\|\pi_\phi(\mathbf{s}_t)-\pi_\phi(\mathbf{s}_{t-1})\|_2^2\Big],
\label{eq:actor_smooth_theory}
\end{equation}
where $\lambda_S>0$ is a scalar weight.
We estimate the expectation using temporally adjacent, non-terminal state pairs sampled from $\mathcal{W}$ (avoiding cross-episode pairs), so the regularizer reflects the current interaction distribution induced by the execution window.
Intuitively, this term softly regularizes the policy's discrete-time variation; under windowed execution, it primarily suppresses discontinuities at window boundaries and complements the hard intra-window coherence induced by Eq.~\eqref{eq:execution_profile}.

\paragraph{Backbone-agnostic instantiation.}
DWS is a lightweight wrapper for off-policy actor--critic methods: we only (i) execute actions through the execution window profile (Eq.~\eqref{eq:execution_profile}) and (ii) train critics with the window-aligned target in Eq.~\eqref{eq:critic_loss_theory}, optionally adding the actor regularizer in Eq.~\eqref{eq:actor_smooth_theory}.
We keep the original Actor-Critic backbone objectives and target-network machinery unchanged. Full pseudocode is in ~\cref{app:dws_algo}.

\FloatBarrier

\section{Experiments}
\label{sec:exp}

\subsection{Experimental Setup}

\paragraph{Environments.}
We evaluate \textbf{DWS} across three control paradigms:
(i) \textbf{DeepMind Control Suite (DMC)} \cite{tassa2018deepmind} with five tasks (Reacher-Easy, Reacher-Hard, Ball in Cup-Catch, Cart-pole-Swingup, Point Mass-easy);
(ii) an industrial \textbf{electric-vehicle energy management (EMS)} \cite{wang2025data} task with vector observations, requiring smooth power split to reduce cost and wear;
and (iii) \textbf{CARLA} \cite{dosovitskiy2017carla} end-to-end autonomous driving with two vision-based safety-critical scenarios (Overtaking, Braking).
Full environment specifications are in ~\cref{app:dmc_env_settings}.
\paragraph{Baselines.}
We compare against standard off-policy actor--critic backbones (\textbf{TD3}~\cite{fujimoto2018addressing}, \textbf{SAC}~\cite{haarnoja2018soft}), representative smooth-control methods (\textbf{L2C2}~\cite{kobayashi2022l2c2}, \textbf{LipsNet++}~\cite{song2025lipsnet}, \textbf{SmODE}~\cite{wang2025odebased}), and an explicit temporal abstraction baseline \textbf{ActionChunk}~\cite{li2025action}.
Baseline details are provided in ~\cref{app:baseline_details}, and hyperparameters are provided in ~\cref{app:hyperparams}.

\paragraph{Metrics.}
The primary metric for control smoothness is the \textbf{Average Fluctuation Rate (AFR) \cite{song2025lipsnet}}, which quantifies the temporal variance of the action sequence. Task-specific performance is measured by cumulative rewards (DMC), operational costs (EMS), and collision-free success rates (CARLA). Detailed metric formulations are deferred to ~\cref{app:dmc_analysis}, \cref{app:carla_dynlane_metrics}, and \cref{app:carla_aeb_metrics}.



\subsection{Case Study of DeepMind Control Suite}
\label{subsec:exp_dmcontrol}
To verify the proposed method, we evaluated it on five DeepMind tasks representing distinct physical control challenges. ~\cref{fig:dmc_radar_comparison} presents the asymptotic performance, where DWS (Blue) achieves the largest coverage area for both TD3 and SAC backbones, indicating the best trade-off between control smoothness and task return. Quantitative results confirm that DWS consistently outperforms baselines by avoiding their structural pitfalls. Specifically, explicit chunking (ActionChunk) succumbs to the curse of dimensionality in impulse-driven tasks, failing to converge in Ball-in-Cup (Return: 3.12). Similarly, constrained smoothers like LipsNet++ suffer from exploration collapse in precision tasks, yielding a poor return of 0.40 \textpm 1.05 in Reacher-Hard. In contrast, DWS leverages implicit action chunking to maintain efficient exploration on the low-dimensional manifold, achieving near-optimal returns of 975.35 in Reacher-Hard and 966.95 in Ball-in-Cup, demonstrating superior robustness across diverse dynamics.

We further analyze the source of this performance gain by examining control smoothness and temporal action structure. ~\cref{fig:dmc_smoothness_detail} presents a microscopic view of action trajectories in Reacher-Hard. Standard RL baselines tend to degenerate into saturation-level switching behaviors to minimize instantaneous tracking error, resulting in pronounced high-frequency oscillations. In contrast, DWS produces very smooth and stable control actions by generating locally coherent signals through the execution window design. This design uses a deterministic temporal modulation kernel: a zero-order hold keeps actions constant within each window, filtering high-frequency noise, while a decay profile gradually reduces action magnitude, providing damping against disturbances. As quantified in ~\cref{fig:dmc_smoothness_bar}, DWS reduces the AFR by over 80\% in precision tasks compared to vanilla baselines (e.g., AFR from 0.56 to 0.10 in Reacher-Hard). Importantly, this improvement preserves responsiveness. Even in highly dynamic environments such as Ball-in-Cup, DWS reduces physical jerk while preserving macroscopic reactivity for successful swing-up and recovery.We also conduct additional experiments on two higher-dimensional DMC locomotion tasks, Cheetah and Walker. While the five low-dimensional tasks provide a controlled testbed for isolating the effect of temporal smoothing, Cheetah and Walker involve more complex body dynamics and larger action spaces, thereby serving as a complementary evaluation of scalability. The complete quantitative results are provided in Appendix~\ref{app:dmc_high_dimensional}. Overall, DWS maintains strong task performance while consistently reducing action fluctuation, jerk, and step-wise action variation. These results further support that the proposed dual-window design provides a robust smooth-control mechanism beyond simple low-dimensional benchmarks.

\begin{figure}[h]
    \centering
    \includegraphics[width=1.0\columnwidth]{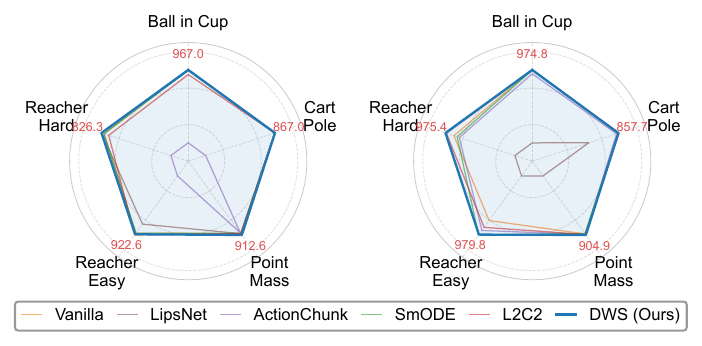}
    \caption{\textbf{Performance Comparison.} Radar charts showing total returns across five DMC tasks for TD3 (Left) and SAC (Right) backbones. DWS (Blue) achieves the largest coverage area, indicating that it attains SOTA returns without the performance degradation observed in constrained smoothing policies (LipsNet++) or explicit chunking methods (ActionChunk). Full quantitative tables are provided in ~\cref{app:dmc_experiments}.}
    \label{fig:dmc_radar_comparison}
\end{figure}

\begin{figure}[h]
    \centering
    \includegraphics[width=1.0\columnwidth]{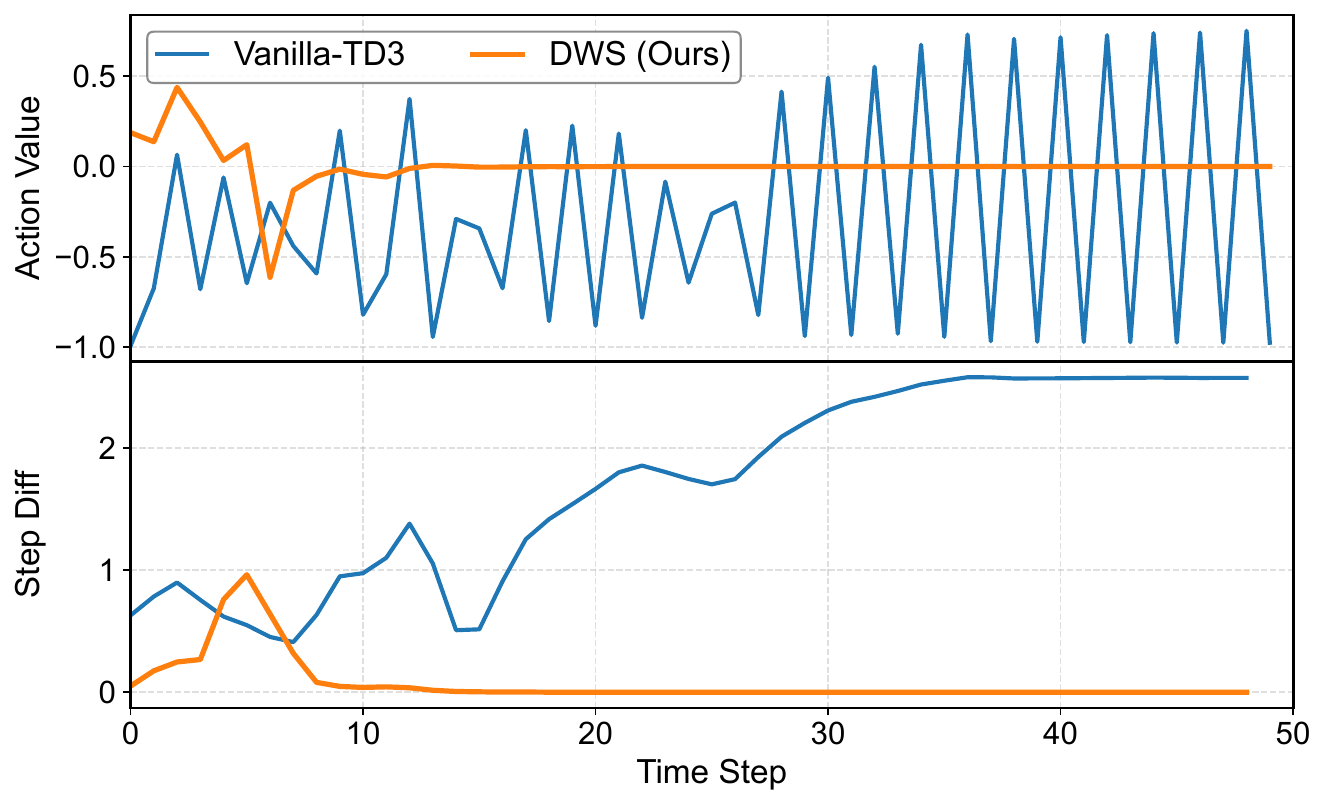}
    \caption{\textbf{Smoothness Analysis in Reacher-Hard (TD3 Backbone)}. Top: Action profile showing that Vanilla-TD3 (Blue) exhibits high-frequency oscillation, while DWS-TD3 (Orange) produces a locally coherent signal. Bottom: The volatility metric ($\|\Delta a_t\|_2$) confirms that DWS improves action smoothness.}
    \label{fig:dmc_smoothness_detail}
\end{figure}

\begin{figure}[h]
    \centering
    \includegraphics[width=1.0\columnwidth]{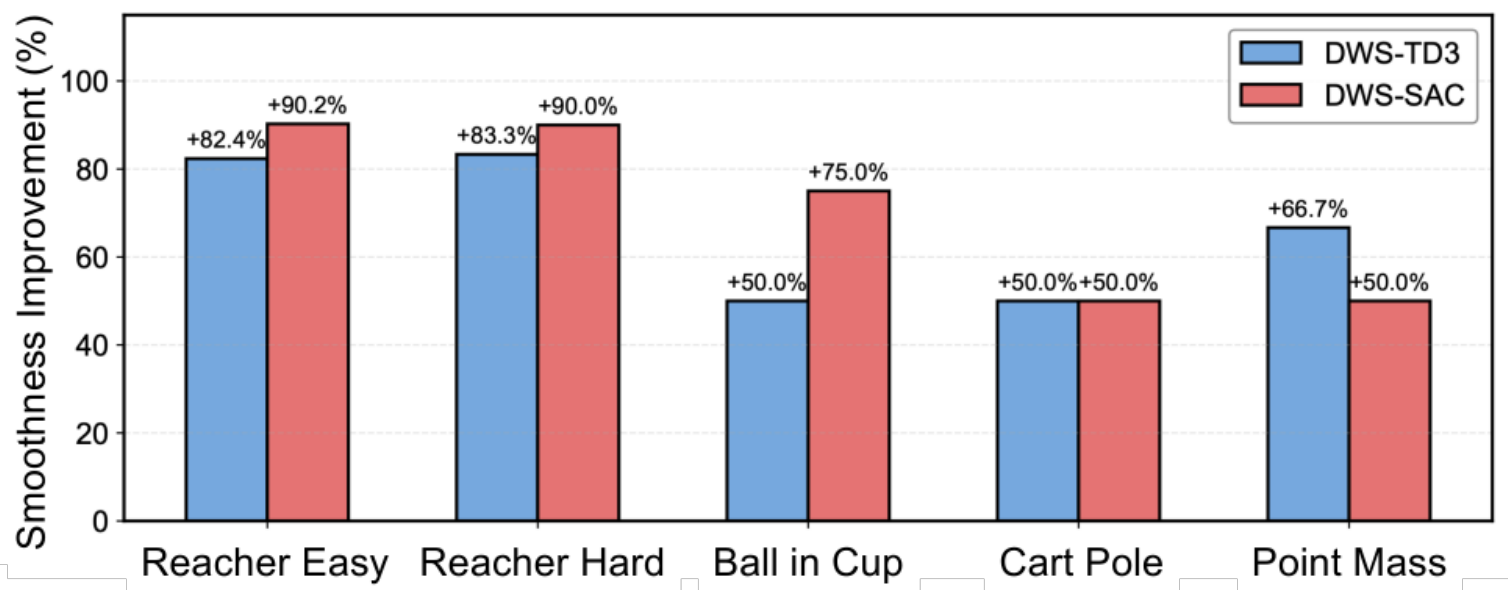}
    \caption{\textbf{Relative smoothness improvement.} Action smoothness improvements, measured by AFR reduction, of DWS-TD3 and DWS-SAC relative to their Vanilla counterparts across five DeepMind benchmarks.}
    \label{fig:dmc_smoothness_bar}
\end{figure}

\subsection{Case Study of Energy Management Task}
We evaluate the proposed DWS on the Fuel Cell Electric Vehicle (FCEV) energy management task within the LearningEMS framework \cite{wang2024learningems}, where the objective is to minimize long-term operational cost while maintaining the battery state of charge (SOC) close to the target value of 0.5. Detailed task formulations and experimental settings are deferred to ~\cref{app:ems_experiments}. As reported in ~\cref{tab:comparison_results_final}, DWS achieves the best overall performance among all compared methods, obtaining the highest cumulative reward (-1393.15) and the lowest operational cost (157.04). Compared with the strongest baseline, Action Chunk, DWS improves the reward by 56.7\% and reduces the operational cost by 10.6\%. Although SmODE and L2C2 achieve slightly lower action fluctuation rates (AFR of 0.105 and 0.107, respectively), their aggressive temporal smoothing results in pronounced SOC drift, with final SOC values deviating considerably from the target. In contrast, DWS maintains a more balanced control profile, achieving a moderate AFR of 0.127 while preserving the battery SOC at 0.40. As illustrated in ~\cref{fig:wltc_analysis}, under the WLTC driving cycle, DWS effectively suppresses high-frequency power oscillations during Low-Medium load phases, thereby reducing degradation-related costs, while exhibiting an action-hold behavior under high-load conditions to provide stable high-power output. These behaviors stem directly from the deterministic temporal modulation kernel in the execution window, which enforces temporal coherence and smoothness. Overall, the results demonstrate that DWS balances action smoothness, economic efficiency, and SOC sustainability, making it well suited for long-horizon energy management task. This case study also highlights the key role of action smoothness in real-world control systems.

\begin{table}[h]
\centering
\caption{\textbf{Quantitative comparison results of EMS task.} $\uparrow$ higher is better, $\downarrow$ lower is better. The target battery SOC is 0.5, closer is better.}
\label{tab:comparison_results_final}
\scriptsize
\setlength{\tabcolsep}{5pt}
\renewcommand{\arraystretch}{1.1}
\resizebox{\columnwidth}{!}{%
\begin{tabular}{lcccc}
\toprule
\textbf{Method} & \textbf{Reward} $\uparrow$ & \textbf{Cost (RMB)} $\downarrow$ & \textbf{AFR} $\downarrow$ & \textbf{SOC} \\
\midrule
Vanilla-TD3 & -10081.97 & 182.39 & 0.205 & 0.22 \\
L2C2 & -7111.91 & 187.37 & 0.107 & 0.26 \\
Action Chunk & -3219.81 & 175.67 & 0.204 & 0.33 \\
SmODE & -16818.18 & 212.69 & \textbf{0.105} & 0.17 \\
LipsNet++ & -7723.06 & 232.96 & 0.196 & 0.19 \\
\textbf{DWS (Ours)} & \textbf{-1393.15} & \textbf{157.04} & 0.127 & \textbf{0.40} \\
\bottomrule
\end{tabular}
}%
\end{table}

\begin{figure}[h]
    \centering
    \includegraphics[width=\linewidth]{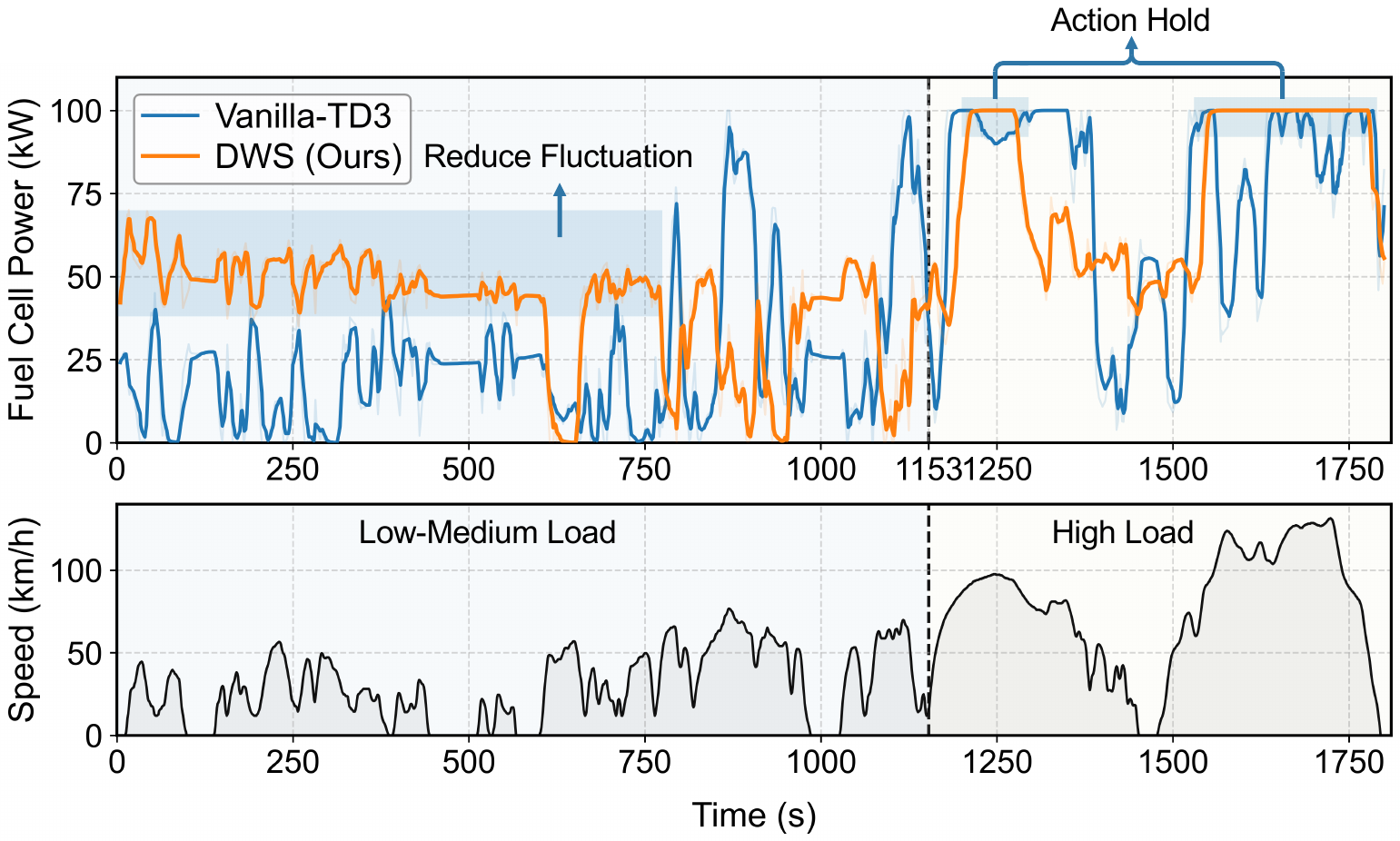}
    \caption{\textbf{Qualitative analysis of control behaviors under the Light vehicles Test Cycles (WLTC).} DWS exhibits superior smoothness compared to Vanilla-TD3, characterized by reduced fluctuations and stable action holds during high-load.}
    \label{fig:wltc_analysis}
\end{figure}

\subsection{Case Study of Autonomous Vehicle Task}
\label{subsec:exp_carla}

To evaluate DWS in a high-dimensional, safety-critical domain, we deploy it on end-to-end autonomous driving tasks within the CARLA simulator. In this setting, the agent maps semantic-segmentation observations to task-specific continuous control commands. Crucially, beyond benchmarking control smoothness and safety, these experiments serve to validate DWS's architectural compatibility with EGRL. we adopt human-guided TD3 (HG-TD3) as the backbone algorithm, incorporating human expert intervention strategies during training to accelerate learning and ensure safety. All baselines use the same human-guided mechanism for fair comparison. We test this capability across two distinct scenarios: Lane-Change Overtaking (LCO), which demands precise lateral control, and Autonomous Emergency Braking (AEB), which requires longitudinal stability.

\paragraph{Main results of LCO.}
We first evaluate the LCO task where the ego vehicle must overtake two non-player character (NPC) vehicles (default speed $5\,\mathrm{m/s}$) in a narrow lane without collision or boundary violation.  ~\cref{tab:dynlane_v5_main} demonstrates the dominance of DWS in this lateral control challenge. DWS achieves a 100\% success rate with 0\% collisions and boundary violations, exhibiting robust compliance with safety constraints. In stark contrast, all baselines struggle significantly; notably, LipsNet++ (2025), a state-of-the-art smooth-policy baseline, only attains a 20\% success rate. DWS not only completes the task reliably but does so with superior stability, reducing steering actuation jitter (ActSmooth) by 88.7\% and yaw rate oscillation by 78.5\% compared to LipsNet++. These gains are further reflected in the learning dynamics (~\cref{fig:dynlane_eval_return}), where DWS converges to higher returns with significantly lower variance than baselines. Robustness analysis (\cref{fig:dl_speed_heatmap}) confirms that DWS maintains 100\% success across all tested NPC speeds ($0\text{-}5\,\mathrm{m/s}$), whereas baseline performance deteriorates rapidly as traffic dynamics intensify. Qualitative visualizations (~\cref{fig:dynlane_qual}) further show that DWS generates smooth, human-like steering profiles, effectively filtering out the high-frequency corrections typical of standard RL.

\begin{table*}[h]
\centering
\caption{\textbf{Experimental results on LCO task (NPC speed 5 m/s).} Task performance and comfort/smoothness metrics. \textbf{SR}: Success Rate ($\uparrow$); \textbf{CR}: Collision Rate ($\downarrow$); \textbf{BR}: Beyond-road Rate ($\downarrow$); \textbf{PC}: Path Completion ($\uparrow$); \textbf{ActSmooth}: Action Smoothness ($\downarrow$); \textbf{YawSmooth}: Yaw Angle Smoothness ($\downarrow$); \textbf{YawRate}: Yaw Rate ($\downarrow$); \textbf{AbsAccMean}: Mean Absolute Acceleration ($\downarrow$); \textbf{AbsAccP95}: 95th Percentile of Absolute Acceleration ($\downarrow$); \textbf{Sideslip$>3^\circ$}: Percentage of time sideslip angle exceeds 3$^\circ$ ($\downarrow$).}

\label{tab:dynlane_v5_main}
\footnotesize
\setlength{\tabcolsep}{2.8pt}
\renewcommand{\arraystretch}{1.05}
\begin{tabular}{lcccccccccc}
\toprule
& \multicolumn{4}{c}{\textbf{Task performance (\%)}} & \multicolumn{6}{c}{\textbf{Comfort / smoothness}} \\
\cmidrule(lr){2-5}\cmidrule(lr){6-11}
Algorithm
& SR$\uparrow$ & CR$\downarrow$ & BR$\downarrow$ & PC$\uparrow$
& ActSmooth$\downarrow$ & YawSmooth$\downarrow$ & YawRate$\downarrow$
& AbsAccMean$\downarrow$ & AbsAccP95$\downarrow$ & Sideslip$>3^\circ\downarrow$ \\
\midrule
HG-TD3
& 0.0 & 75.0 & 25.0 & 68.0
& 0.007$\pm$0.002 & 0.002$\pm$0.001 & 0.058$\pm$0.016
& 0.562$\pm$0.095 & 4.084$\pm$0.553 & 0.348$\pm$0.105 \\

L2C2
& 0.0 & 95.0 & 5.0 & 78.6
& 0.009$\pm$0.003 & 0.003$\pm$0.001 & 0.068$\pm$0.023
& 0.632$\pm$0.132 & 4.734$\pm$0.887 & 0.428$\pm$0.162 \\

ActionChunk
& 0.0 & 50.0 & 50.0 & 38.4
& 0.102$\pm$0.015 & 0.001$\pm$0.000 & 0.030$\pm$0.009
& 0.525$\pm$0.027 & 4.113$\pm$0.171 & 0.315$\pm$0.074 \\

SmODE
& 0.0 & 100.0 & 0.0 & 87.1
& 0.007$\pm$0.002 & 0.003$\pm$0.001 & 0.067$\pm$0.015
& 0.627$\pm$0.097 & 4.504$\pm$0.775 & 0.423$\pm$0.116 \\

LipsNet++
& 20.0 & 55.0 & 25.0 & 74.3
& 0.012$\pm$0.002 & 0.004$\pm$0.001 & 0.087$\pm$0.020
& 0.833$\pm$0.077 & 5.257$\pm$0.570 & 0.413$\pm$0.087 \\

\textbf{DWS (Ours)}
& \textbf{100.0} & \textbf{0.0} & \textbf{0.0} & \textbf{100.0}
& \textbf{0.001$\pm$0.000} & \textbf{0.001$\pm$0.000} & \textbf{0.019$\pm$0.002}
& \textbf{0.211$\pm$0.015} & \textbf{1.195$\pm$0.144} & \textbf{0.071$\pm$0.019} \\
\bottomrule
\end{tabular}
\end{table*}

\begin{figure}[t]
  \centering
  \includegraphics[width=\linewidth]{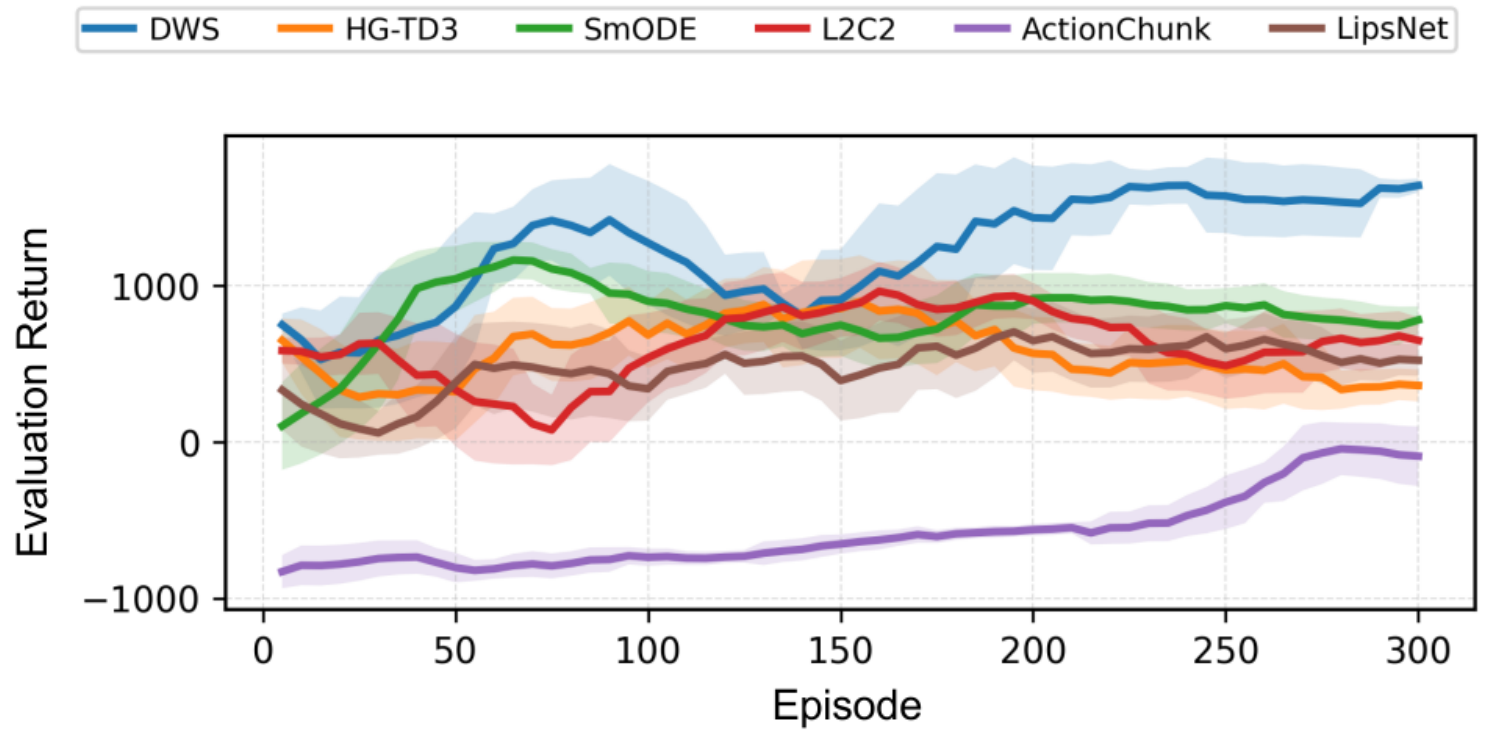}
  \caption{\textbf{Learning curves on the LCO task.} Evaluation return over training episodes with NPC speed fixed at $5\,\mathrm{m/s}$ (evaluated every five episodes).}
  \label{fig:dynlane_eval_return}
\end{figure}

\begin{figure}[t]
  \centering
  \begin{subfigure}{\linewidth}
    \centering
    \includegraphics[width=\linewidth]{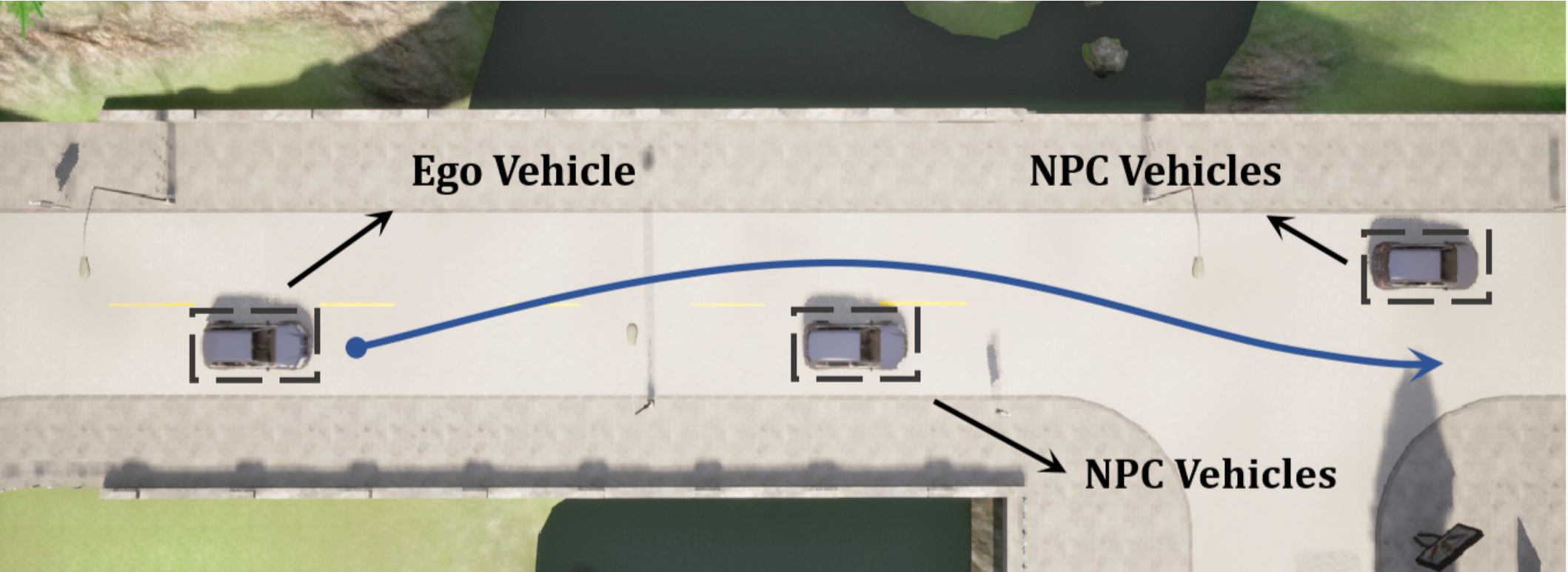} %
    \caption{ Trajectory overview}
    \label{fig:dl_traj}
  \end{subfigure}
  \vspace{1.mm}
  \begin{subfigure}{\linewidth}
    \centering
    \includegraphics[width=\linewidth]{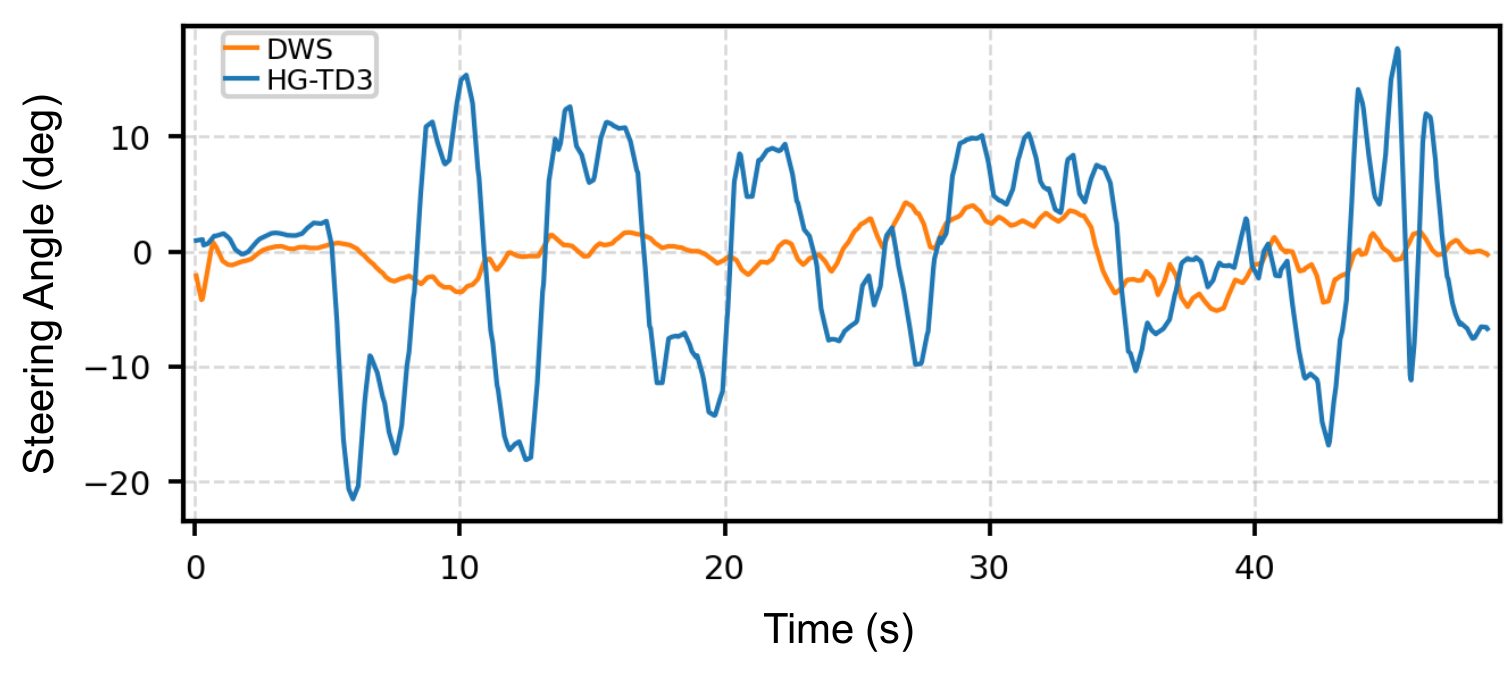}
    \caption{ Steering angle}
    \label{fig:dl_steer}
  \end{subfigure}
\caption{\textbf{Qualitative comparison on the LCO task (example episode).} Trajectory and steering profiles of HG-TD3 (blue) and the proposed DWS (orange).}
  \label{fig:dynlane_qual}
\end{figure}

\paragraph{Main results of AEB.}
Next, we examine a longitudinal AEB scenario where the agent cruises at $60\,\mathrm{km/h}$ and must brake for a crossing pedestrian. Since all methods achieve 100\% success in this simpler setting, the evaluation focuses on the quality of execution—balancing safety (mean time-to-collision on the active braking window, TTC$_a$) with passenger comfort. As detailed in~\cref{tab:aeb_nodelay_main}, DWS achieves the highest reward, effectively expanding the safety margin (increasing TTC$_a$ by 4.9\% over HG-TD3) while simultaneously improving comfort (reducing JerkRMS$_a$ by 50.9\%). Compared to LipsNet++, DWS yields a +7.5\% gain in reward and reduces AccRMS$_a$ by 47.4\%. Furthermore, the training return curves (~\cref{fig:aeb_learning_curves}) indicate that DWS attains stable and high returns at the early stage of training. This immediate stability highlights the unique synergy between DWS and expert-guided learning. While baselines often exhibit an unsafe learning phase due to the distributional shift between expert interventions and policy exploration, DWS mitigates this issue. This allows the critic to effectively learn from mixed human-agent trajectories without the temporal misalignment that plagues explicit chunking or standard step-wise methods.

\begin{table}[tb]
\centering
\caption{\textbf{AEB Task Results.} Mean$\pm$std ($n=20$).
\textbf{Reward} ($\uparrow$): cumulative return.
Subscript $a$ denotes metrics computed on the active braking window.
\textbf{TTC$_a$} ($\uparrow$): mean time-to-collision on the active window.
\textbf{AccRMS$_a$} ($\downarrow$) and \textbf{JerkRMS$_a$} ($\downarrow$): RMS acceleration/jerk on the active window.}

\label{tab:aeb_nodelay_main}
\scriptsize
\setlength{\tabcolsep}{2.6pt}
\renewcommand{\arraystretch}{1.02}
\begin{tabular}{lcccc}
\toprule
\textbf{Algorithm}
& \textbf{Reward}$\uparrow$
& \textbf{TTC}$_a$$\uparrow$
& \textbf{AccRMS}$_a$$\downarrow$
& \textbf{JerkRMS}$_a$$\downarrow$ \\
\midrule
HG-TD3
& 362.42$\pm$1.06
& 1.708$\pm$0.108
& 2.770$\pm$0.072
& 97.530$\pm$2.570 \\

L2C2
& 361.32$\pm$1.64
& 1.677$\pm$0.084
& 2.819$\pm$0.131
& 99.010$\pm$4.530 \\

ActionChunk
& 364.58$\pm$1.16
& 1.667$\pm$0.069
& 1.431$\pm$0.083
& 94.270$\pm$2.940 \\

SmODE
& 368.30$\pm$3.38
& 1.327$\pm$0.131
& 2.669$\pm$0.114
& 93.960$\pm$3.980 \\

LipsNet++
& 363.40$\pm$0.792
& 1.682$\pm$0.052
& 2.633$\pm$0.129
& 90.280$\pm$4.690 \\

\textbf{DWS (Ours)}
& \textbf{390.70$\pm$2.72}
& \textbf{1.791$\pm$0.076}
& \textbf{1.386$\pm$0.128}
& \textbf{47.900$\pm$4.660} \\
\bottomrule
\end{tabular}
\end{table}

\subsection{Execution Profile Comparison}
\label{subsec:profile_comparison}

We further compare the two execution profiles used in the execution window on the AEB task: Zero-Order Hold and Dissipative Decay. As reported in ~\cref{tab:aeb_profile_compare}, Zero-Order Hold slightly outperforms Dissipative Decay across reward, safety margin, and comfort metrics. In particular, it achieves higher reward and TTC$_a$, while also yielding lower AccRMS$_a$ and JerkRMS$_a$. These results support our choice of Zero-Order Hold as the default execution profile in the main experiments.

The gap is not large, indicating that both execution profiles can provide effective temporal smoothing once embedded into the DWS framework. However, in this AEB setting, braking is a short-horizon safety-critical response that benefits more from preserving a stable control command over the execution window than from gradually attenuating it. Zero-Order Hold therefore provides a slightly better balance between maintaining braking authority and suppressing unnecessary oscillations, whereas Dissipative Decay introduces mild attenuation that can reduce responsiveness near the critical intervention phase.

This result is also consistent with the role of the execution window in DWS. The main benefit comes from enforcing local temporal coherence under a short commitment horizon, while the exact within-window profile serves as a secondary design choice. In our experiments, Zero-Order Hold is the more effective default, and Dissipative Decay remains a viable alternative when additional damping is desired.

\vspace{-2pt}
\begin{table}[!t]
\centering
\caption{\textbf{Comparison of execution profiles on the AEB task.}}
\label{tab:aeb_profile_compare}
\scriptsize
\setlength{\tabcolsep}{3.8pt}
\renewcommand{\arraystretch}{1.08}
\resizebox{\columnwidth}{!}{%
\begin{tabular}{lcccc}
\toprule
\textbf{Execution Profile} & \textbf{Reward}$\uparrow$ & \textbf{TTC}$_a$$\uparrow$ & \textbf{AccRMS}$_a$$\downarrow$ & \textbf{JerkRMS}$_a$$\downarrow$ \\
\midrule
Zero-Order Hold & \textbf{390.70$\pm$2.72} & \textbf{1.791$\pm$0.075} & \textbf{1.386$\pm$0.128} & \textbf{47.90$\pm$4.66} \\
Dissipative Decay & 388.66$\pm$4.15 & 1.725$\pm$0.103 & 1.413$\pm$0.168 & 49.24$\pm$5.88 \\
\bottomrule
\end{tabular}%
}
\end{table}


\subsection{Sensitivity Analysis}
\label{subsec:sensitivity}

We analyze the sensitivity of DWS to the shared window size $h$ and the smoothness regularization weight $\lambda_S$ on DMC Reacher-Easy and LearningEMS. Tables~\ref{tab:sens_h_singlecol} and~\ref{tab:sens_lambda_singlecol} show that DWS is reasonably robust across a moderate range of settings. In particular, intermediate settings provide the best overall trade-off between task performance and smoothness. When $h$ or $\lambda_S$ is too small, smoothing is insufficient; when they are too large, control becomes less reactive and task performance degrades despite slightly lower fluctuation metrics. Overall, the default setting $h=3$ and $\lambda_S=0.10$ achieves the best balance across both domains.

\vspace{-2pt}
\begin{table}[!t]
\centering
\caption{\textbf{Sensitivity to the window size $h$} across DMC Reacher-Easy and LearningEMS (mean $\pm$ std). For DMC Reacher-Easy, higher reward and lower smoothness are better. For LearningEMS, higher reward and lower cost/AFR are better; SOC is best when closer to the target value $0.5$.}
\label{tab:sens_h_singlecol}
\tiny
\setlength{\tabcolsep}{1.8pt}
\renewcommand{\arraystretch}{1.03}
\resizebox{\columnwidth}{!}{%
\begin{tabular}{ccccccc}
\toprule
& \multicolumn{2}{c}{\textbf{DMC}} & \multicolumn{4}{c}{\textbf{LearningEMS}} \\
\cmidrule(lr){2-3}\cmidrule(lr){4-7}
\textbf{$h$} & \textbf{Reward}$\uparrow$ & \textbf{Smoothness}$\downarrow$ & \textbf{Reward}$\uparrow$ & \textbf{Cost}$\downarrow$ & \textbf{AFR}$\downarrow$ & \textbf{SOC} $(\rightarrow 0.5)$ \\
\midrule
1 & 783.56$\pm$143.44 & 0.29$\pm$0.32 & -2179$\pm$517 & 166.0$\pm$6.65 & 0.1497$\pm$0.004 & 0.3617$\pm$0.0016 \\
2 & 893.08$\pm$168.76 & 0.17$\pm$0.29 & -2088$\pm$384 & 159.1$\pm$6.32 & 0.1335$\pm$0.006 & 0.3915$\pm$0.0038 \\
3 & \textbf{922.65}$\pm$\textbf{175.60} & 0.12$\pm$0.32 & \textbf{-1393}$\pm$\textbf{437} & \textbf{157.0}$\pm$\textbf{5.70} & 0.1271$\pm$0.011 & \textbf{0.3956}$\pm$\textbf{0.027} \\
4 & 873.76$\pm$192.83 & 0.12$\pm$0.43 & -1636$\pm$422 & 161.2$\pm$6.45 & 0.1239$\pm$0.014 & 0.3665$\pm$0.023 \\
5 & 845.00$\pm$163.35 & \textbf{0.11}$\pm$\textbf{0.29} & -2387$\pm$481 & 177.4$\pm$7.48 & \textbf{0.1189}$\pm$\textbf{0.021} & 0.3341$\pm$0.010 \\
\bottomrule
\end{tabular}%
}
\end{table}

\vspace{-2pt}
\begin{table}[!t]
\centering
\caption{\textbf{Sensitivity to the smoothness regularization weight $\lambda_S$} across DMC Reacher-Easy and LearningEMS (mean $\pm$ std). }

\label{tab:sens_lambda_singlecol}
\tiny
\setlength{\tabcolsep}{1.8pt}
\renewcommand{\arraystretch}{1.03}
\resizebox{\columnwidth}{!}{%
\begin{tabular}{ccccccc}
\toprule
& \multicolumn{2}{c}{\textbf{DMC}} & \multicolumn{4}{c}{\textbf{LearningEMS}} \\
\cmidrule(lr){2-3}\cmidrule(lr){4-7}
\textbf{$\lambda_S$} & \textbf{Reward}$\uparrow$ & \textbf{Smoothness}$\downarrow$ & \textbf{Reward}$\uparrow$ & \textbf{Cost}$\downarrow$ & \textbf{AFR}$\downarrow$ & \textbf{SOC} $(\rightarrow 0.5)$ \\
\midrule
0    & 834.56$\pm$177.98 & 0.54$\pm$0.21 & -2355$\pm$277 & 188.7$\pm$4.02 & 0.1807$\pm$0.009 & 0.2991$\pm$0.041 \\
0.03 & 879.46$\pm$138.43 & 0.21$\pm$0.28 & -1951$\pm$368 & 181.6$\pm$7.50 & 0.1523$\pm$0.011 & 0.3373$\pm$0.018 \\
0.10 & \textbf{922.65}$\pm$\textbf{175.60} & 0.12$\pm$0.32 & \textbf{-1393}$\pm$\textbf{437} & \textbf{157.0}$\pm$\textbf{5.70} & 0.1271$\pm$0.011 & \textbf{0.3956}$\pm$\textbf{0.027} \\
0.30 & 843.56$\pm$163.93 & 0.11$\pm$0.43 & -2135$\pm$434 & 169.7$\pm$5.07 & 0.1183$\pm$0.019 & 0.3284$\pm$0.035 \\
0.40 & 804.57$\pm$123.78 & \textbf{0.09}$\pm$\textbf{0.45} & -2849$\pm$469 & 172.1$\pm$4.64 & \textbf{0.1037}$\pm$\textbf{0.002} & 0.2841$\pm$0.044 \\
\bottomrule
\end{tabular}%
}
\end{table}

\subsection{Ablation Study}
\label{subsec:ablation}
We validate the contribution of each DWS component on the main and hardest LCO setting with NPC speed fixed at $5\,\mathrm{m/s}$ (~\cref{tab:ablation_dynlane_npc5_small}). The results again reveal a strong interdependence among components. Each partial variant improves over the HG-TD3 backbone on either task success or motion smoothness, but none matches the complete model. In particular, the execution window alone improves comfort-related metrics but remains limited in task success, while the value window alone substantially boosts success yet still falls short of the full framework. Adding the smoothness regularizer further improves the trade-off, but only the full DWS achieves 100\% success with zero collisions while also delivering the best overall smoothness. These results confirm that the three components are complementary, and that their combination is necessary to fully resolve the trade-off between smoothness, safety, and task performance.
\vspace{-2pt}
\begin{table}[!t]
\centering
\caption{\textbf{Component ablation on the main/hardest LCO setting (NPC speed 5 m/s).} We compare partial variants of DWS under the same evaluation protocol. SR/CR denote success/collision rates, and lower values are better for all smoothness metrics.}
\label{tab:ablation_dynlane_npc5_small}
\scriptsize
\setlength{\tabcolsep}{5pt}
\renewcommand{\arraystretch}{1.5}
\resizebox{\columnwidth}{!}{%
\begin{tabular}{lccccc}
\toprule
\textbf{Method} & \textbf{SR}$\uparrow$ & \textbf{CR}$\downarrow$ & \textbf{YawSmooth}$\downarrow$ & \textbf{AbsAccP95}$\downarrow$ & \textbf{Sideslip}$>3^\circ\downarrow$ \\
\midrule
HG-TD3 & 0.00 & 0.75 & 0.0020$\pm$0.0010 & 4.084$\pm$0.553 & 0.348$\pm$0.105 \\
Value Window only & 0.65 & 0.10 & 0.0011$\pm$0.0001 & 1.432$\pm$0.375 & 0.109$\pm$0.044 \\
SmoothReg only & 0.60 & 0.40 & 0.0012$\pm$0.0002 & 2.711$\pm$0.451 & 0.133$\pm$0.044 \\
Value Window + Reg & 0.80 & 0.20 & 0.0008$\pm$0.0001 & 1.381$\pm$0.080 & 0.079$\pm$0.027 \\
Execution Window only & 0.50 & 0.25 & 0.0010$\pm$0.0002 & 1.972$\pm$1.190 & 0.103$\pm$0.085 \\
\textbf{DWS (Full)} & \textbf{1.00} & \textbf{0.00} & \textbf{0.0007}$\pm$\textbf{0.0001} & \textbf{1.195}$\pm$\textbf{0.144} & \textbf{0.071}$\pm$\textbf{0.019} \\
\bottomrule
\end{tabular}
}%
\end{table}

\section{Conclusion}
This paper introduces DWS, an implicit action chunking method that balances control smoothness and task performance in continuous RL. By coupling a deterministic execution window with a synchronized value window, DWS enforces temporal coherence and aligns critic learning without increasing the policy action space. An actor-side regularizer further reduces boundary discontinuities. Experiments on diverse benchmarks including the DeepMind benchmarks, industrial energy management, and vision-based autonomous driving show that DWS outperforms SOTA methods in both control smoothness and overall task performance. Compared with explicit action chunking, DWS achieves safer behavior with perfect success rates and reduced jitter. Its design also makes it well-suited for expert-guided RL. This work introduces a new perspective on learning-based control, opening promising avenues for smooth, reliable control in complex systems and supporting real-world applications of RL.

While DWS demonstrates strong empirical performance across diverse domains, its fixed execution horizon introduces a structural inductive bias that may limit policy expressiveness and adaptability in highly dynamic environments, especially when abrupt changes occur within a window. A natural direction for future work is therefore to relax the fixed-schedule assumption through state-dependent or learnable execution profiles, adaptive window lengths, or hybrid schemes that better balance step-wise reactivity with temporally structured execution. Another promising direction is to extend the implicit action chunking framework to offline RL architectures, where its compatibility with standard replay buffers may be particularly advantageous for large-scale pre-training.

\newpage
\section*{Impact Statement}
This paper presents work whose goal is to advance the study of reinforcement learning. There are potential indirect societal consequences of our work, none of which we feel must be specifically highlighted here.

\section*{Acknowledgments}

This work was supported by the Research Grants Council of Hong Kong under Grant No.~27206525, and in part by the National Natural Science Foundation of China under Grant No.~62406206 and by the Fundamental Research Funds for the Central Universities.

\bibliography{ref}
\bibliographystyle{icml2026}

\newpage
\appendix
\onecolumn


\section{Theoretical Properties of DWS}
\label{app:theory_dws}

This appendix formalizes the executed interaction induced by the fixed execution operator $\mathcal{F}_h$ and clarifies the operator view behind the Value Window targets.
Throughout, we consider a fixed window length $h$ and discount factor $\gamma\in(0,1)$.

\paragraph{Execution operator and executed actions.}
At each window boundary $k=mh$ ($m\ge 0$), the policy outputs a reference action $a_k\sim \pi(\cdot\mid s_k)$.
The deterministic execution operator $\mathcal{F}_h:\mathcal{A}\to\mathcal{A}^h$ then produces an $h$-step executed profile
$\mathbf{u}_{k:k+h-1}=\mathcal{F}_h(a_k)$, which is applied step-by-step to the environment.
DWS instantiates $\mathcal{F}_h$ via the modulation profile in Eq.~\eqref{eq:execution_profile}.

\paragraph{\textbf{Proposition A.1 (Intra-Window Smoothness).}}
Assume the policy output is bounded such that $\|a_t\| \le A_{\max}$.
For a given execution profile $\mathbf{w}$, the discrete variation of the executed action $u$ within a window is uniformly bounded as
\begin{equation}
    \|\Delta u_{t+k}\|
    \le
    |w_k - w_{k-1}| \cdot A_{\max},
    \quad \forall\, k \in \{1, \dots, h-1\}.
\end{equation}

\emph{Proof.}
Within a window starting at time $t$, the execution operator applies the fixed modulation profile
\begin{equation}
u_{t+k} = w_k\, a_t,\qquad k\in\{0,\dots,h-1\},
\end{equation}
where $a_t$ is the (cached) reference action sampled at the window boundary.
Therefore, for any $k\in\{1,\dots,h-1\}$, the one-step discrete difference satisfies
\begin{equation}
\Delta u_{t+k} := u_{t+k}-u_{t+k-1}
= (w_k-w_{k-1})\, a_t.
\end{equation}
Taking norms and using the policy bound $\|a_t\|\le A_{\max}$ yields
\begin{equation}
\|\Delta u_{t+k}\|
=
\|(w_k-w_{k-1})\, a_t\|
=
|w_k-w_{k-1}|\,\|a_t\|
\le
|w_k-w_{k-1}|\, A_{\max},
\end{equation}
which proves the claim.
\hfill$\square$

\paragraph{\textbf{Proposition A.2 (Augmented Markov view of executed interaction).}}
Fix $\mathcal{F}_h$ and a policy $\pi$.
Define the augmented state as
\begin{equation}
\bar{s}_t := (s_t, \kappa_t, \hat{a}_t),
\end{equation}
where $\kappa_t\in\{0,\dots,h-1\}$ denotes the within-window phase and $\hat{a}_t$ is the cached reference action for the current window.
Then the executed interaction is Markov in $\bar{s}_t$.
Consequently, the executed behavior induced by $(\pi,\mathcal{F}_h)$ admits a well-defined primitive action-value function
$Q^{\pi^{\mathrm{exec}}}(\bar{s},u)$.

\emph{Proof sketch.}
Given $(\kappa_t,\hat a_t)$, the executed action $u_t$ is uniquely determined by the current phase via the fixed profile $\mathcal{F}_h(\hat a_t)$, and $\hat a_t$ is updated only at window boundaries ($\kappa_t=0$).
Thus, $(s_t,\kappa_t,\hat a_t)$ summarizes all controller memory needed to generate future executed actions, so the induced transition dynamics depend only on the current augmented state and executed action.

\paragraph{Window-aligned $h$-step policy-evaluation operator.}
For a fixed executed policy $\pi^{\mathrm{exec}}$ on the augmented process, define the $h$-step policy-evaluation operator
$\mathcal{T}^{\pi^{\mathrm{exec}}}_h$ acting on any bounded action-value function $Q$ as
\begin{equation}
\big(\mathcal{T}^{\pi^{\mathrm{exec}}}_h Q\big)(\bar{s}_t,u_t)
=
\mathbb{E}\!\left[
\sum_{k=0}^{h-1}\gamma^k r_{t+k}
+
\gamma^h m_t^{(h)}\, Q(\bar{s}_{t+h},u_{t+h})
\;\middle|\; \bar{s}_t,u_t
\right],
\label{eq:Th_exec_def_app}
\end{equation}
where the expectation follows the executed trajectory induced by $(\pi^{\mathrm{exec}},\mathcal{F}_h)$, and
\begin{equation}
m_t^{(h)}=\mathbf{1}\!\left[\sum_{k=0}^{h-1} d_{t+k}=0\right]
\end{equation}
prevents bootstrapping across termination, matching Eq.~\eqref{eq:terminal_mask_theory}.
For twin critics, the target backup can use $\min(Q_{\theta_1^-},Q_{\theta_2^-})$ as in the main text.

\emph{Notation.}
In the main text we write $Q(s,u)$ for brevity, since $(\kappa_t,\hat a_t)$ are internal variables determined by the fixed execution mechanism; formally, the operator acts on $Q(\bar{s},u)$.

\paragraph{\textbf{Proposition A.3 (Operator-Consistent Windowed Target).}}
\label{prop:app_operator_consistency}
Fix $\mathcal{F}_h$ and the induced executed policy $\pi^{\mathrm{exec}}$ on the augmented process.
Whenever a valid contiguous executed segment of length $h$ is available (i.e., $z_t=1$), the windowed target $G_t^{(h)}$ in Eq.~\eqref{eq:hstep_return_theory} is an unbiased empirical sample of the $h$-step executed policy-evaluation backup in Eq.~\eqref{eq:Th_exec_def_app} evaluated at $Q_{\theta^-}$:
\begin{equation}
\mathbb{E}\!\left[\,G_t^{(h)} \mid \bar{s}_t, u_t\,\right]
=
\big(\mathcal{T}^{\pi^{\mathrm{exec}}}_h Q_{\theta^-}\big)(\bar{s}_t,u_t).
\end{equation}
Accordingly, the critic objective in Eq.~\eqref{eq:critic_loss_theory} performs Bellman regression toward the window-aligned $h$-step operator when $z_t=1$, while falling back to one-step TD at episode boundaries or when a full segment is unavailable ($z_t=0$).

\emph{Proof sketch.}
Conditioned on $(\bar{s}_t,u_t)$ and $z_t=1$, the random return $G_t^{(h)}$ is exactly the Monte Carlo sample of the conditional expectation defining $\big(\mathcal{T}^{\pi^{\mathrm{exec}}}_h Q_{\theta^-}\big)(\bar{s}_t,u_t)$ in Eq.~\eqref{eq:Th_exec_def_app}, hence is unbiased.
\hfill$\square$

\subsection{DWS Framework Overview}
\label{app:dws_framework_overview}



~\cref{fig:app_dws_framework} summarizes how DWS augments a standard off-policy actor--critic loop with a shared horizon $h$.
At each window boundary, the actor outputs a reference action $a_t^{\mathrm{RL}}$ and the Execution Window deterministically produces step-wise executed actions $\{u_t,\ldots,u_{t+h-1}\}$.
Transitions are stored in replay $\mathcal{D}$ and also appended to the ordered Window Buffer $\mathcal{W}$, from which contiguous length-$h$ segments enable terminal-safe $h$-step targets and a gated critic update; adjacent samples from $\mathcal{W}$ can additionally form the actor-side smoothness regularizer.

\begin{figure}[!htbp]
  \centering
  \includegraphics[width=0.95\textwidth]{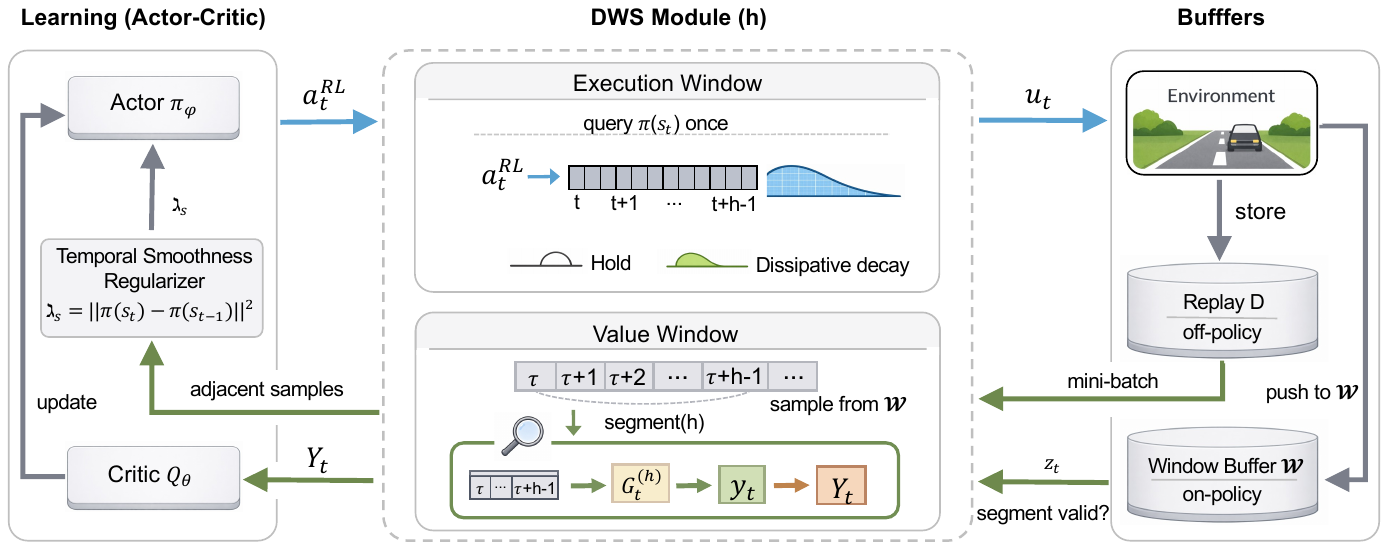}
  \vspace{-1.0ex}
  \caption{\textbf{DWS overview (shared horizon $h$).}
  The Execution Window maps a boundary action $a_t^{\mathrm{RL}}$ to step-wise executed actions $u_t$; replay $\mathcal{D}$ and ordered buffer $\mathcal{W}$ support gated terminal-safe $h$-step critic targets and the actor smoothness regularizer.}

  \label{fig:app_dws_framework}
  \vspace{-1.0ex}
\end{figure}

\clearpage

\subsection{DWS Pseudocode}
\label{app:dws_algo}

\begin{algorithm}[!htbp]
\caption{\textbf{DWS} (Dual-Window Smoothing)}
\label{alg:dws}
\begin{algorithmic}[1]
\STATE {\bfseries Input:} actor $\pi_\phi$, critic(s) $Q_\theta$, target networks $\phi^-,\theta^-$, replay buffer $\mathcal{D}$,
window length $h$, discount $\gamma$, smoothness weight $\lambda_S$
\STATE Initialize ordered window buffer $\mathcal{W}$ (capacity $\ge h$); execution cache $\mathcal{E}\leftarrow \emptyset$
\FOR{each environment step $t$}
  \IF{\textbf{window boundary} ($t \bmod h = 0$)}
    \STATE Query reference action $a_t \sim \pi_\phi(\cdot\mid s_t)$
    \STATE Build executed segment $\mathcal{E}[k]\gets w_k\,a_t$ for $k=0,\dots,h-1$ \hfill (~\cref{eq:execution_profile})
  \ENDIF
  \STATE Execute $u_t \leftarrow \mathcal{E}[t \bmod h]$ and step environment
  \STATE Observe $(r_t,s_{t+1},d_t)$
  \STATE Store $(s_t,u_t,r_t,s_{t+1},d_t)$ in replay $\mathcal{D}$
  \STATE Push $(s_t,u_t,r_t,s_{t+1},d_t)$ into ordered buffer $\mathcal{W}$
  \IF{update step}
    \STATE Sample mini-batch from $\mathcal{D}$ and compute one-step targets $y_t$
    \STATE Sample contiguous length-$h$ segments from $\mathcal{W}$; set $z_t{=}1$ iff the segment exists and does not cross termination
    \STATE Compute $G_t^{(h)}$ with terminal mask $m_t^{(h)}$ \hfill (~\cref{eq:hstep_return_theory}--\cref{eq:terminal_mask_theory})
    \STATE Form $Y_t=(1-z_t)y_t+z_t G_t^{(h)}$ and update critic(s) \hfill (~\cref{eq:gated_target_theory}--\cref{eq:critic_loss_theory})
    \STATE Update actor with $L_{\pi}=L_{\mathrm{base}}+\lambda_S\|\pi_\phi(s_t)-\pi_\phi(s_{t-1})\|_2^2$ using adjacent non-terminal pairs sampled from $\mathcal{W}$ \hfill (~\cref{eq:actor_smooth_theory})
    \STATE Soft-update target networks $(\phi^-,\theta^-)$
  \ENDIF
  \STATE $s_t \leftarrow s_{t+1}$
\ENDFOR
\end{algorithmic}
\end{algorithm}

\FloatBarrier

\section{Baselines and ActionChunk instantiation}
\label{app:baseline_details}

\paragraph{Baselines (summary).}
Across all experiments, we compare against standard off-policy actor--critic backbones (TD3/SAC or HG-TD3 in CARLA), representative smooth-control methods (L2C2, LipsNet++, SmODE), and an explicit temporal abstraction baseline \textbf{ActionChunk}.

\paragraph{ActionChunk as an explicit action chunking baseline.}
\textbf{ActionChunk} instantiates the \emph{explicit action chunking} paradigm: at \emph{chunk boundaries} (every $h$ steps), the policy outputs an $h$-step action sequence $\mathbf{a}_{t:t+h-1}=[a_t,\ldots,a_{t+h-1}]$ and executes it open-loop for up to $h$ environment steps.
The critic is defined on action sequences $Q(s_t,\mathbf{a}_{t:t+h-1})$ and trained with an $h$-step TD target (with discounting $\gamma^h$, or $\gamma^{L}$ under early termination within a chunk).
We instantiate ActionChunk on top of the backbone used in each domain: TD3 and SAC for DMC, TD3 for EMS, and HG-TD3 for CARLA (denoted ActionChunk (TD3/SAC/HG-TD3) in tables).

\paragraph{Connection to Q-chunking and scope.}
Our ActionChunk implementation follows the core \emph{recipe} described by \emph{Q-chunking}~\cite{li2025action}: (i) chunked policy outputs, (ii) open-loop commitment for $h$ steps, and (iii) a chunked critic enabling an $h$-step backup aligned with temporally extended execution.
We do not directly reproduce QC/QC-FQL variants from~\cite{li2025action}, since they incorporate additional behavior-prior modeling and sampling mechanisms (offline-to-online instantiations) that are orthogonal to our goal here: a controlled comparison between \emph{explicit} chunking and our proposed \emph{implicit} chunking (DWS) under a unified online actor--critic pipeline.

\section{Experimental Hyperparameters}
\label{app:hyperparams}

We use a unified set of training hyperparameters across all domains to ensure a controlled comparison.
Domain-specific settings, including environment configurations, observation and action specifications, and evaluation protocols, are described in their corresponding appendix sections.

\begin{table}[!htbp]
\centering
\caption{\textbf{Shared training hyperparameters used across all experiments.}}
\label{tab:shared_hyperparams}

\setlength{\tabcolsep}{6.0pt}
\renewcommand{\arraystretch}{1.08}
\begin{tabular}{ll}
\toprule
\textbf{Parameter} & \textbf{Value} \\
\midrule
Replay buffer capacity & 50{,}000 \\
Batch size & 128 \\
Discount factor $\gamma$ & 0.98 \\
Critic learning rate & $3\times 10^{-4}$ \\
Actor learning rate & $2\times 10^{-4}$ \\
Target update coefficient $\tau$ & 0.005 \\
Policy noise & 0.15 \\
Noise clip & 0.5 \\
Policy update frequency & 1 \\
\midrule
Window horizon $h$ (shared by the Execution/Value Windows) & 3 \\
Smoothness weight $\lambda_S$ & 0.10 \\
\midrule
Initial exploration scale & 0.5 \\
Minimum exploration scale & 0.005 \\
Exploration decay rate & 0.99988 \\
\bottomrule
\end{tabular}
\end{table}



\section{Generalization Analysis on DeepMind Control Suite}
\label{app:dmc_experiments}

While the primary application of \textbf{DWS} lies in safety-critical autonomous driving, it is essential to verify that our proposed \textit{Dual-Window} mechanism constitutes a fundamental improvement in continuous control rather than a domain-specific heuristic. To this end, we extend our evaluation to the \textbf{DeepMind Control Suite (DMC)} \cite{tassa2018deepmind}. These experiments serve as a rigorous testbed to assess the \textbf{generalization capability} of the coupled \textit{Value Window} and \textit{Execution Window} under complex contact dynamics and diverse physical constraints.

\subsection{Environment Rationale and Task Descriptions}
\label{app:dmc_env_settings}

We selected four representative tasks that function as surrogate benchmarks for the core challenges addressed by DWS in autonomous driving: precision tracking, dynamic recovery, unstable equilibrium maintenance, and oscillation suppression. Visualizations of these environments are provided in  \cref{fig:dmc_tasks}. 

\begin{figure*}[htbp]
    \centering
    \begin{minipage}[b]{0.23\textwidth}
        \centering
        \includegraphics[width=\textwidth]{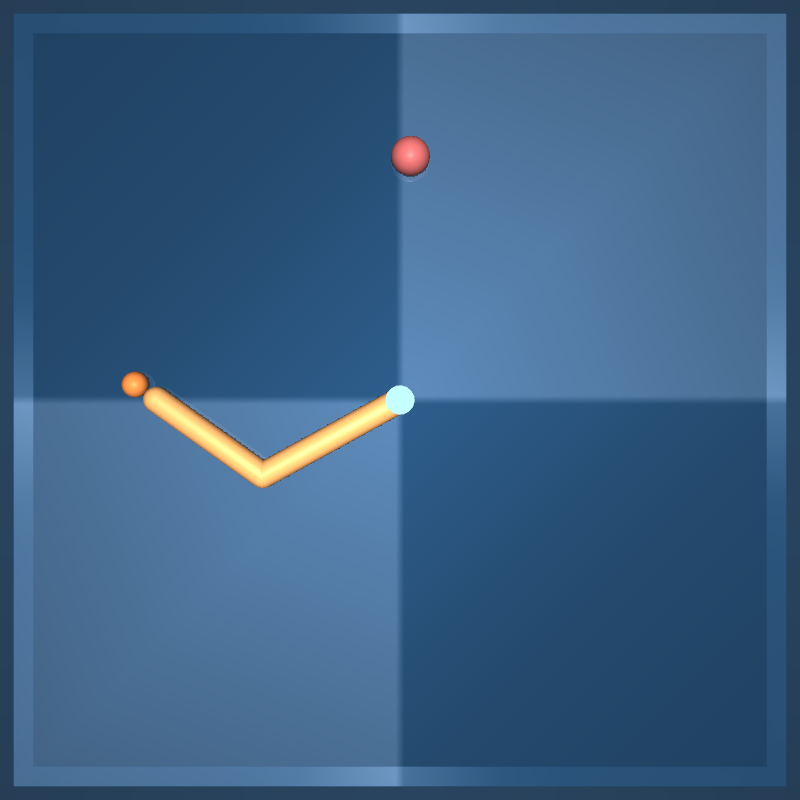} 
        \vspace{0.1cm}
        \centerline{\small (a) Reacher}
    \end{minipage}
    \hfill
    \begin{minipage}[b]{0.23\textwidth}
        \centering
        \includegraphics[width=\textwidth]{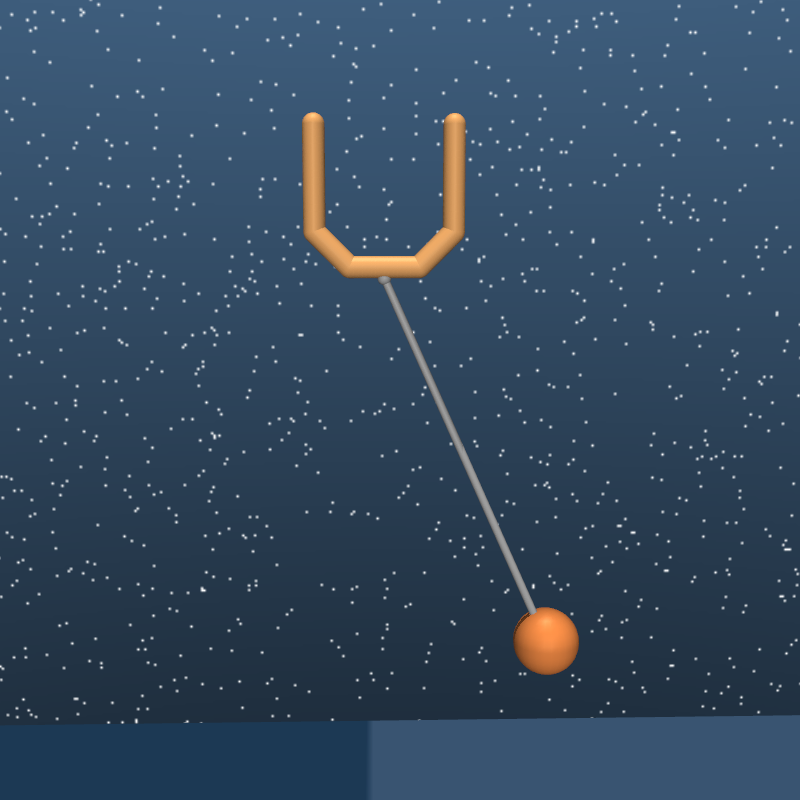} 
        \vspace{0.1cm}
        \centerline{\small (b) Ball in Cup}
    \end{minipage}
    \hfill
    \begin{minipage}[b]{0.23\textwidth}
        \centering
        \includegraphics[width=\textwidth]{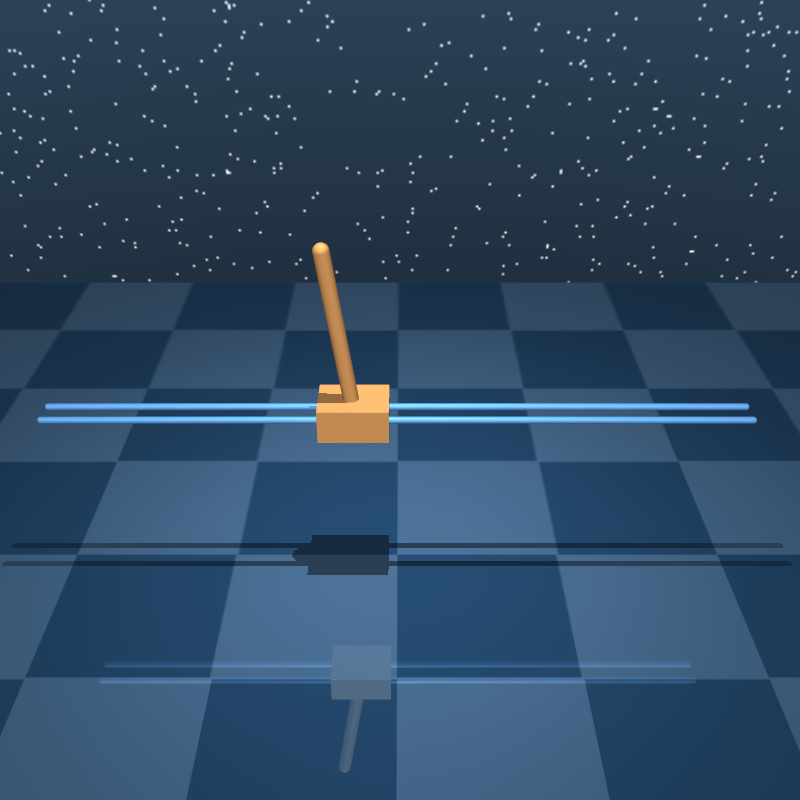} 
        \vspace{0.1cm}
        \centerline{\small (c) Cart-pole}
    \end{minipage}
    \hfill
    \begin{minipage}[b]{0.23\textwidth}
        \centering
        \includegraphics[width=\textwidth]{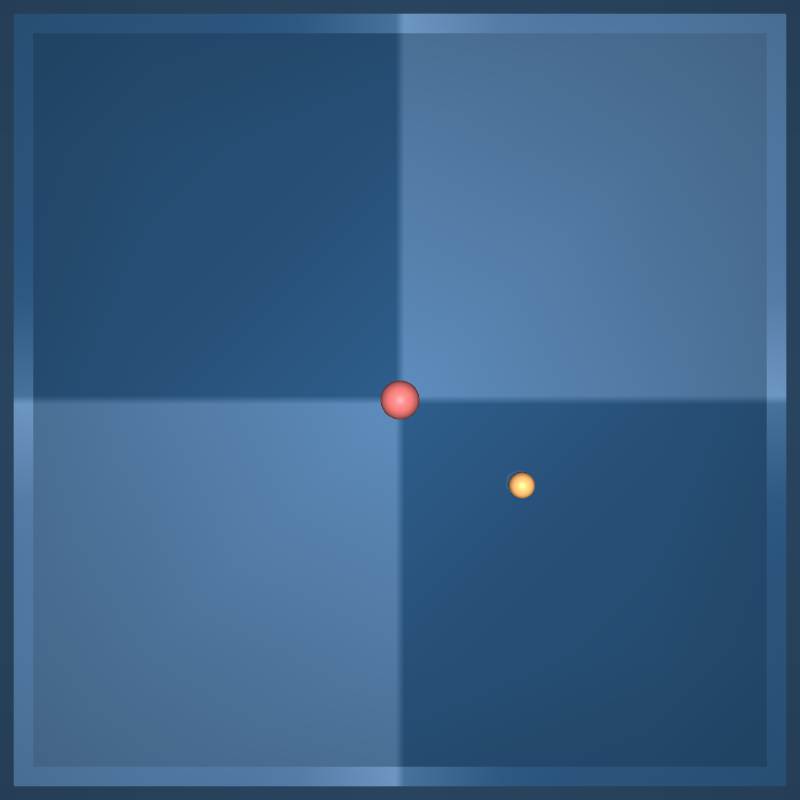} 
        \vspace{0.1cm}
        \centerline{\small (d) Point Mass}
    \end{minipage}
    
    \caption{\textbf{Visualizations of DMC tasks.} These environments act as proxies for core autonomous driving challenges: (a) \textbf{Reacher} for precision tracking; (b) \textbf{Ball-in-Cup} for impulse handling and dynamic recovery; (c) \textbf{Cart-pole} for stabilization of unstable equilibria; and (d) \textbf{Point Mass} for inertial control and oscillation suppression.}
    \label{fig:dmc_tasks}
\end{figure*}

\begin{itemize}
    \item \textbf{Precision Control (Reacher - Easy \& Hard)}: 
    The agent must control a robotic arm to reach a random target with high accuracy. This mirrors \textbf{trajectory tracking} and \textbf{lane-keeping}. The Reacher-Hard variant, characterized by a minimal target margin, tests the efficacy of the \textbf{Value Window}. It verifies whether the critic can correctly assign credit to smooth, fine-grained adjustments required for high-precision tasks.

    \item \textbf{Dynamic Recovery (Ball in Cup - Catch)}: 
    This task requires controlling a cup to catch a ball attached by a string, involving complex impulse dynamics. It serves as a proxy for \textbf{recovering from sudden disturbances} (e.g., severe skidding). The task ensures that smoothness constraints do not hinder the agent's ability to execute rapid actions when physically required.

    \item \textbf{Unstable Equilibrium (Cart-pole - Swingup)}: 
    The agent must swing up and balance a pole on a moving cart. This represents a classic underactuated system with a highly unstable equilibrium point, serving as a proxy for \textbf{vehicle stability control}. It tests whether DWS maintains responsiveness while smoothing out unnecessary micro-corrections.

    \item \textbf{Inertial Control (Point Mass - Easy)}: 
    This task involves controlling a point mass on a 2D plane to reach a target. In standard RL, policies often exploit high-frequency acceleration changes to maximize instantaneous speed. In our context, Point Mass serves as a proxy for \textbf{inertial stability}. We verify whether the \textbf{Execution Window} can naturally suppress control jitter (Jerk) and enforce temporally coherent trajectories.
\end{itemize}

\textbf{Experimental Setup:} All environments are simulated using the MuJoCo physics engine \cite{todorov2012mujoco}. We compare our method against several strong baselines, including \textbf{Vanilla-TD3}, \textbf{Vanilla-SAC}, \textbf{Action Chunking} (ActionChunk), and state-of-the-art smoothing methods like \textbf{L2C2}, \textbf{LipsNet++}, and \textbf{SmODE}. DWS is integrated as a plug-and-play module on top of TD3 and SAC backbones. Each algorithm was trained for 1 million time steps across 5 random seeds. ActionChunk is instantiated on both TD3 and SAC backbones in DMC (denoted ActionChunk-TD3 / ActionChunk-SAC); see ~\cref{app:baseline_details}.

\subsection{Performance and Smoothness Analysis}
\label{app:dmc_analysis}

We present the quantitative results of the DMC generalization experiments in Tables \cref{tab:dmc_reacher_easy} through \cref{tab:dmc_pointmass}. We evaluate based on: (1) \textbf{Task Performance} (Cumulative Reward), (2) \textbf{Action Smoothness} (Smoothness metric and AFR), and (3) \textbf{Physical Feasibility} (Jerk and Action Delta).

The advantages of the \textbf{Value Window} are most pronounced in the precision-demanding Reacher tasks. In Reacher-Hard (\cref{tab:dmc_reacher_hard}), where the target margin is minimal, \textbf{DWS-SAC} achieves near-optimal performance ($975.35 \pm 13.50$) with negligible variance, significantly outperforming LipsNet++ which collapses in this setting.

In Ball in Cup-Catch (\cref{tab:dmc_ball}), standard agents often exhibit high-frequency dithering. \textbf{DWS-TD3} achieves the highest reward ($966.95$) while reducing Jerk-RMS to $0.14$. Notably, \textbf{ActionChunk-TD3} failed to learn the task, indicating that while Action Chunking captures structure, it struggles with the fluid, low-jerk motions required for dynamic recovery.

For Cart-pole-Swingup (\cref{tab:dmc_Cart-pole}), \textbf{DWS-SAC} achieves a smoothness score of $0.01$ and a Jerk-P95 of $0.03$, matching the stability of the \textbf{L2C2-SAC} baseline but with improved peak rewards ($857.73$ vs $839.05$ for ActionChunk-SAC).

In Point Mass-Easy (Table \cref{tab:dmc_pointmass}), \textbf{DWS-SAC} ($904.90$) outperforms ActionChunk ($897.64$) and SmODE-SAC ($887.15$). Crucially, DWS maintains superior smoothness metrics while achieving these high returns, validating its ability to perform efficient inertial control without unnecessary oscillation.


\subsection{Learning Dynamics and Stability}
\label{app:learning_dynamics}

To further analyze the convergence behavior and stability of DWS during training, we visualize the learning curves for all five tasks in \cref{fig:full_page_comparison}.

\textbf{Reward Convergence (Left Column):}
As illustrated in \cref{fig:ball_rew}, \cref{fig:cart_rew}, \cref{fig:pointmass_rew}, \cref{fig:reacher_e_rew}, and \cref{fig:reacher_h_rew}, \textbf{DWS-SAC} (Red curve) exhibits sample efficiency comparable to or exceeding Vanilla-SAC (Blue curve). 
In \textbf{Impulse tasks} like \textit{Ball in Cup} (\cref{fig:ball_rew}), DWS matches the rapid rise of standard RL, proving that the execution window does not delay learning of ballistic motions.
In \textbf{Precision tasks} like Reacher Hard (\cref{fig:reacher_h_rew}), DWS shows superior asymptotic performance. While baselines like L2C2 struggle to stabilize in the narrow target region, DWS maintains a high steady-state reward, attributed to the aligned Value Window preventing the critic from over-valuing jittery corrections.

\textbf{Smoothness Evolution (Right Column):}
The smoothness profiles (\cref{fig:ball_sm}--\cref{fig:reacher_h_sm}) reveal the distinct mechanism of DWS.
\textbf{Rapid Stabilization:} Unlike Vanilla-SAC, which often sees smoothness degrade (increase) as the agent exploits physics exploits, DWS curves drop quickly early in training and remain stable.
\textbf{Comparison to Baselines:} In Point Mass (\cref{fig:pointmass_sm}), DWS maintains the lowest smoothness metric throughout training compared to SmODE and ActionChunk, indicating that the generated trajectory is fundamentally more coherent and less reliant on high-frequency actuation.


\begin{table*}[t]
\centering
\caption{Quantitative results on \textbf{Reacher - Easy}. We report the mean $\pm$ standard deviation over 5 seeds. $\uparrow$ indicates higher is better; $\downarrow$ indicates lower is better.}
\label{tab:dmc_reacher_easy}
\resizebox{\textwidth}{!}{%
\begin{tabular}{lcccccccc}
\toprule
\textbf{Algorithm} & \textbf{Reward} $\uparrow$ & \textbf{Smooth.} $\downarrow$ & \textbf{AFR\_L2} $\downarrow$ & \textbf{AFR\_L1} $\downarrow$ & \textbf{Jerk\_RMS} $\downarrow$ & \textbf{Jerk\_P95} $\downarrow$ & \textbf{Delta\_Max} $\downarrow$ & \textbf{Delta\_P95} $\downarrow$ \\
\midrule
Vanilla-TD3 & 908.20 $\pm$ 216.72 & 0.68 $\pm$ 0.57 & 1.03 $\pm$ 0.84 & 1.35 $\pm$ 1.14 & 2.15 $\pm$ 1.64 & 2.50 $\pm$ 1.73 & 1.91 $\pm$ 0.54 & 1.29 $\pm$ 0.87 \\
L2C2-TD3 & 911.35 $\pm$ 207.40 & 0.57 $\pm$ 0.52 & 0.84 $\pm$ 0.75 & 1.15 $\pm$ 1.05 & 1.76 $\pm$ 1.47 & 1.82 $\pm$ 1.51 & 1.58 $\pm$ 0.40 & 0.93 $\pm$ 0.74 \\
LipsNet++-TD3 & 794.50 $\pm$ 387.64 & 0.54 $\pm$ 0.70 & 0.81 $\pm$ 1.00 & 1.09 $\pm$ 1.39 & 1.67 $\pm$ 1.99 & 1.70 $\pm$ 2.03 & 1.46 $\pm$ 0.73 & 0.88 $\pm$ 1.01 \\
SmODE-TD3 & 914.22 $\pm$ 198.62 & 0.54 $\pm$ 0.53 & 1.01 $\pm$ 0.63 & 1.20 $\pm$ 1.08 & 1.98 $\pm$ 1.34 & 2.03 $\pm$ 1.63 & 1.71 $\pm$ 0.64 & 1.08 $\pm$ 0.79 \\
ActionChunk-TD3 & 74.60 $\pm$ 101.01 & - & - & - & - & - & - & - \\
DWS-TD3 (Ours) & \textbf{922.65 $\pm$ 175.60} & \textbf{0.12 $\pm$ 0.32} & \textbf{0.19 $\pm$ 0.49} & \textbf{0.24 $\pm$ 0.63} & \textbf{0.57 $\pm$ 0.93} & \textbf{0.64 $\pm$ 1.41} & \textbf{1.56 $\pm$ 0.47} & \textbf{0.33 $\pm$ 0.71} \\
\midrule
Vanilla-SAC & 792.05 $\pm$ 386.56 & 0.41 $\pm$ 0.88 & 0.65 $\pm$ 1.39 & 0.83 $\pm$ 1.76 & 1.40 $\pm$ 2.76 & 1.34 $\pm$ 2.84 & 1.90 $\pm$ 1.01 & 0.69 $\pm$ 1.41 \\
L2C2-SAC & 883.60 $\pm$ 301.64 & 0.03 $\pm$ 0.06 & \textbf{0.05 $\pm$ 0.09} & \textbf{0.06 $\pm$ 0.12} & \textbf{0.08 $\pm$ 0.08} & \textbf{0.08 $\pm$ 0.17} & 0.84 $\pm$ 0.29 & 0.09 $\pm$ 0.19 \\
LipsNet++-SAC & 57.20 $\pm$ 190.32 & - & - & - & - & - & - & - \\
SmODE-SAC & 984.65 $\pm$ 14.83 & 0.40 $\pm$ 0.50 & 0.62 $\pm$ 0.73 & 0.80 $\pm$ 1.00 & 1.29 $\pm$ 1.39 & 1.32 $\pm$ 1.50 & 1.82 $\pm$ 0.54 & 0.75 $\pm$ 0.80 \\
ActionChunk-SAC & 924.95 $\pm$ 22.97 & 0.03 $\pm$ 0.08 & 0.05 $\pm$ 0.13 & 0.06 $\pm$ 0.16 & 0.12 $\pm$ 0.16 & 0.10 $\pm$ 0.33 & 0.89 $\pm$ 0.34 & 0.11 $\pm$ 0.32 \\
DWS-SAC (Ours) & \textbf{979.75 $\pm$ 19.50} & \textbf{0.04 $\pm$ 0.18} & 0.08 $\pm$ 0.32 & 0.09 $\pm$ 0.35 & 0.18 $\pm$ 0.64 & 0.16 $\pm$ 0.64 & \textbf{0.70 $\pm$ 0.34} & \textbf{0.09 $\pm$ 0.32} \\
\bottomrule
\end{tabular}%
}
\end{table*}

\begin{table*}[t]
\centering
\caption{Quantitative results on \textbf{Reacher - Hard}. We report the mean $\pm$ standard deviation over 5 seeds. $\uparrow$ indicates higher is better; $\downarrow$ indicates lower is better.}
\label{tab:dmc_reacher_hard}
\resizebox{\textwidth}{!}{%
\begin{tabular}{lcccccccc}
\toprule
\textbf{Algorithm} & \textbf{Reward} $\uparrow$ & \textbf{Smooth.} $\downarrow$ & \textbf{AFR\_L2} $\downarrow$ & \textbf{AFR\_L1} $\downarrow$ & \textbf{Jerk\_RMS} $\downarrow$ & \textbf{Jerk\_P95} $\downarrow$ & \textbf{Delta\_Max} $\downarrow$ & \textbf{Delta\_P95} $\downarrow$ \\
\midrule
Vanilla-TD3 & 802.00 $\pm$ 350.85 & 0.36 $\pm$ 0.44 & 0.56 $\pm$ 0.65 & 0.73 $\pm$ 0.87 & 1.23 $\pm$ 1.19 & 1.58 $\pm$ 1.45 & 1.81 $\pm$ 0.49 & 0.89 $\pm$ 0.84 \\
L2C2-TD3 & 757.05 $\pm$ 384.13 & \textbf{0.04 $\pm$ 0.09} & \textbf{0.07 $\pm$ 0.14} & \textbf{0.08 $\pm$ 0.17} & 0.30 $\pm$ 0.26 & 0.23 $\pm$ 0.47 & 1.37 $\pm$ 0.47 & 0.14 $\pm$ 0.25 \\
LipsNet++-TD3 & 824.00 $\pm$ 351.69 & 0.63 $\pm$ 0.47 & 0.95 $\pm$ 0.69 & 1.26 $\pm$ 0.95 & 1.92 $\pm$ 1.37 & 2.09 $\pm$ 1.44 & 1.85 $\pm$ 0.48 & 1.10 $\pm$ 0.76 \\
SmODE-TD3 & 802.00 $\pm$ 350.85 & 0.36 $\pm$ 0.44 & 0.56 $\pm$ 0.65 & 0.73 $\pm$ 0.87 & 1.23 $\pm$ 1.19 & 1.58 $\pm$ 1.45 & 1.81 $\pm$ 0.49 & 0.89 $\pm$ 0.84 \\
ActionChunk-TD3 & 4.80 $\pm$ 13.59 & - & - & - & - & - & - & - \\
DWS-TD3 (Ours) & \textbf{826.30 $\pm$ 353.78} & 0.06 $\pm$ 0.14 & 0.10 $\pm$ 0.24 & 0.12 $\pm$ 0.28 & \textbf{0.27 $\pm$ 0.48} & \textbf{0.23 $\pm$ 0.58} & \textbf{0.98 $\pm$ 0.31} & \textbf{0.12 $\pm$ 0.29} \\
\midrule
Vanilla-SAC & 880.55 $\pm$ 301.64 & 0.10 $\pm$ 0.35 & 0.15 $\pm$ 0.51 & 0.20 $\pm$ 0.70 & 0.43 $\pm$ 1.05 & 0.39 $\pm$ 1.16 & 1.84 $\pm$ 0.51 & 0.21 $\pm$ 0.58 \\
L2C2-SAC & 964.30 $\pm$ 20.71 & 0.01 $\pm$ 0.04 & 0.02 $\pm$ 0.07 & 0.03 $\pm$ 0.08 & 0.07 $\pm$ 0.07 & 0.04 $\pm$ 0.14 & 0.99 $\pm$ 0.32 & 0.04 $\pm$ 0.13 \\
LipsNet++-SAC & 0.40 $\pm$ 1.05 & - & - & - & - & - & - & - \\
SmODE-SAC & 844.15 $\pm$ 322.87 & 0.44 $\pm$ 0.42 & 0.68 $\pm$ 0.62 & 0.88 $\pm$ 0.83 & 1.41 $\pm$ 1.22 & 1.78 $\pm$ 1.42 & 1.66 $\pm$ 0.55 & 0.97 $\pm$ 0.77 \\
ActionChunk-SAC & 819.15 $\pm$ 356.27 & 0.20 $\pm$ 0.39 & 0.29 $\pm$ 0.56 & 0.39 $\pm$ 0.77 & 0.61 $\pm$ 1.11 & 0.61 $\pm$ 1.16 & 1.10 $\pm$ 0.44 & 0.34 $\pm$ 0.58 \\
DWS-SAC (Ours) & \textbf{975.35 $\pm$ 13.50} & \textbf{0.00 $\pm$ 0.00} & \textbf{0.01 $\pm$ 0.00} & \textbf{0.01 $\pm$ 0.00} & \textbf{0.04 $\pm$ 0.02} & \textbf{0.00 $\pm$ 0.01} & \textbf{0.78 $\pm$ 0.25} & \textbf{0.00 $\pm$ 0.01} \\
\bottomrule
\end{tabular}%
}
\end{table*}

\begin{table*}[t]
\centering
\caption{Quantitative results on \textbf{Ball in Cup - Catch}. We report the mean $\pm$ standard deviation over 5 seeds. $\uparrow$ indicates higher is better; $\downarrow$ indicates lower is better.}
\label{tab:dmc_ball}
\resizebox{\textwidth}{!}{%
\begin{tabular}{lcccccccc}
\toprule
\textbf{Algorithm} & \textbf{Reward} $\uparrow$ & \textbf{Smooth.} $\downarrow$ & \textbf{AFR\_L2} $\downarrow$ & \textbf{AFR\_L1} $\downarrow$ & \textbf{Jerk\_RMS} $\downarrow$ & \textbf{Jerk\_P95} $\downarrow$ & \textbf{Delta\_Max} $\downarrow$ & \textbf{Delta\_P95} $\downarrow$ \\
\midrule
Vanilla-TD3 & 966.40 $\pm$ 8.93 & 0.02 $\pm$ 0.01 & 0.03 $\pm$ 0.01 & 0.03 $\pm$ 0.01 & 0.30 $\pm$ 0.11 & 0.02 $\pm$ 0.02 & 2.12 $\pm$ 0.50 & 0.01 $\pm$ 0.01 \\
L2C2-TD3 & 917.20 $\pm$ 216.37 & 0.07 $\pm$ 0.02 & 0.15 $\pm$ 0.03 & 0.15 $\pm$ 0.03 & 0.32 $\pm$ 0.07 & 0.48 $\pm$ 0.11 & \textbf{1.44 $\pm$ 0.65} & 0.26 $\pm$ 0.06 \\
LipsNet++-TD3 & 963.40 $\pm$ 21.77 & 0.01 $\pm$ 0.01 & 0.02 $\pm$ 0.02 & 0.02 $\pm$ 0.02 & 0.21 $\pm$ 0.16 & 0.06 $\pm$ 0.12 & 1.66 $\pm$ 0.93 & 0.03 $\pm$ 0.05 \\
SmODE-TD3 & 965.40 $\pm$ 8.93 & 0.02 $\pm$ 0.01 & 0.03 $\pm$ 0.01 & 0.03 $\pm$ 0.01 & 0.30 $\pm$ 0.11 & 0.02 $\pm$ 0.02 & 2.12 $\pm$ 0.50 & 0.01 $\pm$ 0.01 \\
ActionChunk-TD3 & 3.12 $\pm$ 0.08 & - & - & - & - & - & - & - \\
DWS-TD3 (Ours) & \textbf{966.95 $\pm$ 9.17} & \textbf{0.01 $\pm$ 0.00} & \textbf{0.01 $\pm$ 0.00} & \textbf{0.01 $\pm$ 0.01} & \textbf{0.14 $\pm$ 0.05} & \textbf{0.00 $\pm$ 0.00} & 2.34 $\pm$ 0.81 & \textbf{0.00 $\pm$ 0.00} \\
\midrule
Vanilla-SAC & 974.25 $\pm$ 19.65 & 0.04 $\pm$ 0.03 & 0.07 $\pm$ 0.04 & 0.09 $\pm$ 0.06 & 0.48 $\pm$ 0.22 & 0.40 $\pm$ 0.43 & 5.17 $\pm$ 1.84 & 0.26 $\pm$ 0.30 \\
L2C2-SAC & 972.50 $\pm$ 14.55 & 0.01 $\pm$ 0.00 & 0.01 $\pm$ 0.01 & 0.01 $\pm$ 0.01 & 0.11 $\pm$ 0.06 & \textbf{0.01 $\pm$ 0.01} & 1.31 $\pm$ 0.56 & \textbf{0.01 $\pm$ 0.01} \\
LipsNet++-SAC & 49.65 $\pm$ 222.04 & - & - & - & - & - & - & - \\
SmODE-SAC & 972.65 $\pm$ 12.55 & 0.01 $\pm$ 0.00 & 0.02 $\pm$ 0.01 & 0.02 $\pm$ 0.01 & 0.22 $\pm$ 0.06 & 0.05 $\pm$ 0.07 & 2.05 $\pm$ 0.36 & 0.03 $\pm$ 0.04 \\
ActionChunk-SAC & 933.20 $\pm$ 13.51 & 0.28 $\pm$ 0.05 & 0.41 $\pm$ 0.07 & 0.56 $\pm$ 0.09 & 0.88 $\pm$ 0.07 & 1.04 $\pm$ 0.01 & 1.87 $\pm$ 0.36 & 0.53 $\pm$ 0.00 \\
DWS-SAC (Ours) & \textbf{974.75 $\pm$ 11.87} & \textbf{0.01 $\pm$ 0.00} & \textbf{0.01 $\pm$ 0.01} & \textbf{0.01 $\pm$ 0.01} & \textbf{0.11 $\pm$ 0.05} & 0.02 $\pm$ 0.02 & \textbf{1.16 $\pm$ 0.54} & 0.02 $\pm$ 0.02 \\
\bottomrule
\end{tabular}%
}
\end{table*}

\begin{table*}[t]
\centering
\caption{Quantitative results on \textbf{Cart Pole - Swingup}. We report the mean $\pm$ standard deviation over 5 seeds. $\uparrow$ indicates higher is better; $\downarrow$ indicates lower is better.}
\label{tab:dmc_Cart-pole}
\resizebox{\textwidth}{!}{%
\begin{tabular}{lcccccccc}
\toprule
\textbf{Algorithm} & \textbf{Reward} $\uparrow$ & \textbf{Smooth.} $\downarrow$ & \textbf{AFR\_L2} $\downarrow$ & \textbf{AFR\_L1} $\downarrow$ & \textbf{Jerk\_RMS} $\downarrow$ & \textbf{Jerk\_P95} $\downarrow$ & \textbf{Delta\_Max} $\downarrow$ & \textbf{Delta\_P95} $\downarrow$ \\
\midrule
Vanilla-TD3 & 863.63 $\pm$ 0.28 & 0.01 $\pm$ 0.00 & 0.01 $\pm$ 0.00 & 0.01 $\pm$ 0.00 & 0.03 $\pm$ 0.00 & 0.03 $\pm$ 0.00 & \textbf{0.48 $\pm$ 0.05} & 0.04 $\pm$ 0.00 \\
L2C2-TD3 & 864.58 $\pm$ 0.20 & 0.01 $\pm$ 0.00 & 0.01 $\pm$ 0.00 & 0.01 $\pm$ 0.00 & 0.03 $\pm$ 0.00 & 0.04 $\pm$ 0.00 & \textbf{0.48 $\pm$ 0.05} & 0.06 $\pm$ 0.00 \\
LipsNet++-TD3 & 864.12 $\pm$ 0.30 & 0.01 $\pm$ 0.00 & 0.01 $\pm$ 0.00 & 0.01 $\pm$ 0.00 & 0.04 $\pm$ 0.01 & 0.04 $\pm$ 0.00 & 0.65 $\pm$ 0.15 & 0.05 $\pm$ 0.00 \\
SmODE-TD3 & 863.63 $\pm$ 0.28 & 0.01 $\pm$ 0.00 & 0.01 $\pm$ 0.00 & 0.01 $\pm$ 0.00 & 0.03 $\pm$ 0.00 & 0.03 $\pm$ 0.00 & \textbf{0.48 $\pm$ 0.05} & 0.04 $\pm$ 0.00 \\
ActionChunk-TD3 & 0.01 $\pm$ 0.01 & - & - & - & - & - & - & - \\
DWS-TD3 (Ours) & \textbf{867.00 $\pm$ 0.20} & \textbf{0.01 $\pm$ 0.00} & \textbf{0.01 $\pm$ 0.00} & \textbf{0.01 $\pm$ 0.00} & \textbf{0.03 $\pm$ 0.00} & \textbf{0.03 $\pm$ 0.00} & 0.54 $\pm$ 0.03 & \textbf{0.04 $\pm$ 0.00} \\
\midrule
Vanilla-SAC & 852.85 $\pm$ 0.19 & 0.02 $\pm$ 0.00 & 0.02 $\pm$ 0.00 & 0.02 $\pm$ 0.00 & 0.14 $\pm$ 0.01 & 0.10 $\pm$ 0.01 & 1.73 $\pm$ 0.11 & 0.13 $\pm$ 0.01 \\
L2C2-SAC & \textbf{863.94 $\pm$ 0.35} & 0.01 $\pm$ 0.00 & 0.01 $\pm$ 0.00 & 0.01 $\pm$ 0.00 & 0.02 $\pm$ 0.00 & 0.04 $\pm$ 0.00 & 0.31 $\pm$ 0.02 & 0.07 $\pm$ 0.00 \\
LipsNet++-SAC & 563.47 $\pm$ 9.29 & - & - & - & - & - & - & - \\
SmODE-SAC & 861.80 $\pm$ 0.28 & 0.02 $\pm$ 0.00 & 0.02 $\pm$ 0.00 & 0.02 $\pm$ 0.00 & 0.13 $\pm$ 0.01 & 0.09 $\pm$ 0.01 & 1.73 $\pm$ 0.17 & 0.10 $\pm$ 0.01 \\
ActionChunk-SAC & 839.05 $\pm$ 0.26 & 0.01 $\pm$ 0.00 & 0.01 $\pm$ 0.00 & 0.01 $\pm$ 0.00 & 0.07 $\pm$ 0.01 & 0.05 $\pm$ 0.01 & 0.91 $\pm$ 0.13 & 0.06 $\pm$ 0.00 \\
DWS-SAC (Ours) & 857.73 $\pm$ 0.27 & \textbf{0.01 $\pm$ 0.00} & \textbf{0.01 $\pm$ 0.00} & \textbf{0.01 $\pm$ 0.00} & \textbf{0.02 $\pm$ 0.00} & \textbf{0.03 $\pm$ 0.00} & \textbf{0.31 $\pm$ 0.01} & \textbf{0.07 $\pm$ 0.00} \\
\bottomrule
\end{tabular}%
}
\end{table*}

\begin{figure*}[p] 
    \centering
    \setlength{\abovecaptionskip}{2pt}
    \setlength{\belowcaptionskip}{0pt}

    \begin{subfigure}[c]{0.48\textwidth}
        \centering
        \includegraphics[width=\linewidth, height=0.175\textheight, keepaspectratio]{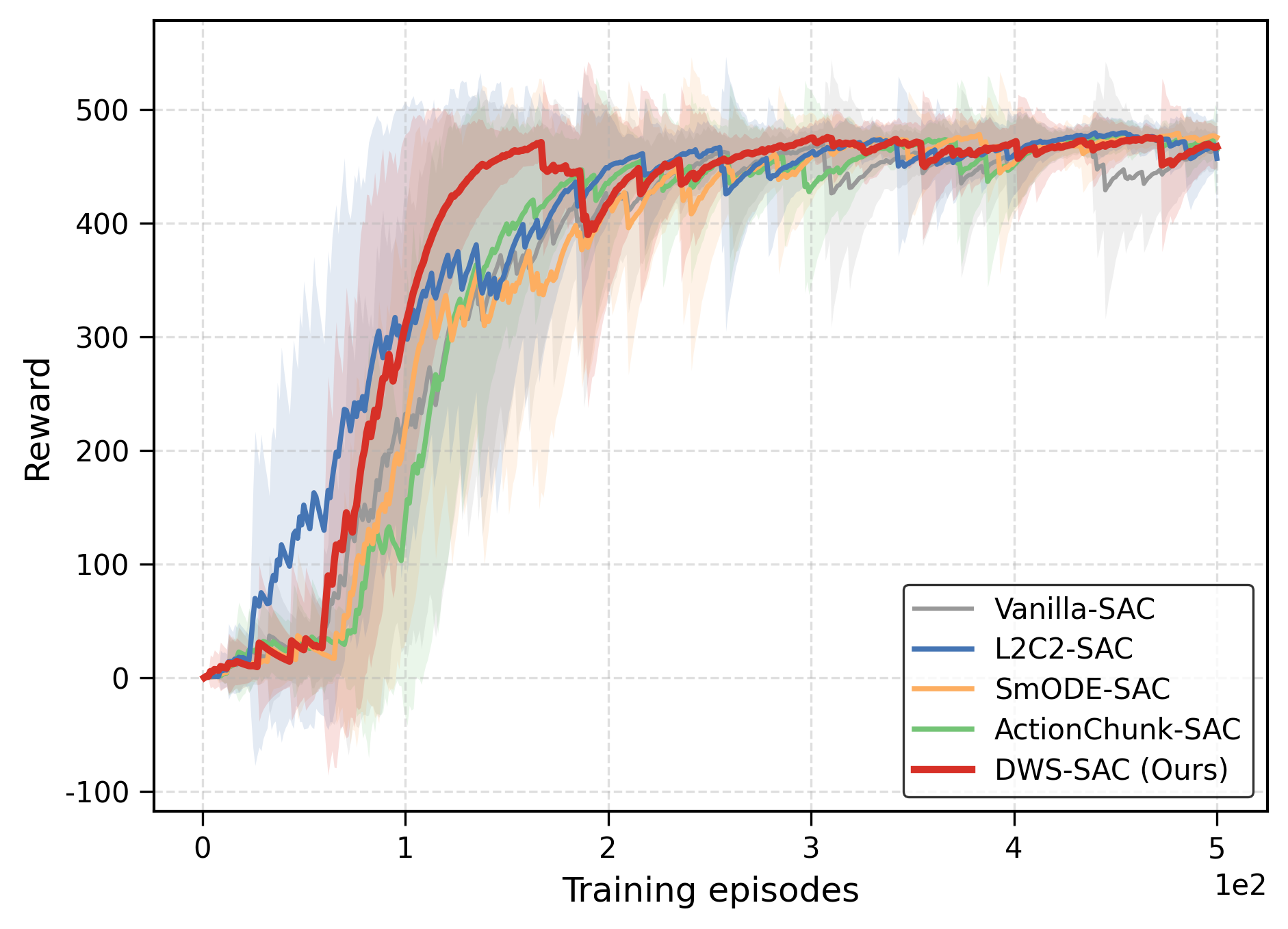}
        \caption{Reacher Easy (Reward)}
        \label{fig:reacher_e_rew}
    \end{subfigure}
    \hfill
    \begin{subfigure}[c]{0.48\textwidth}
        \centering
        \includegraphics[width=\linewidth, height=0.175\textheight, keepaspectratio]{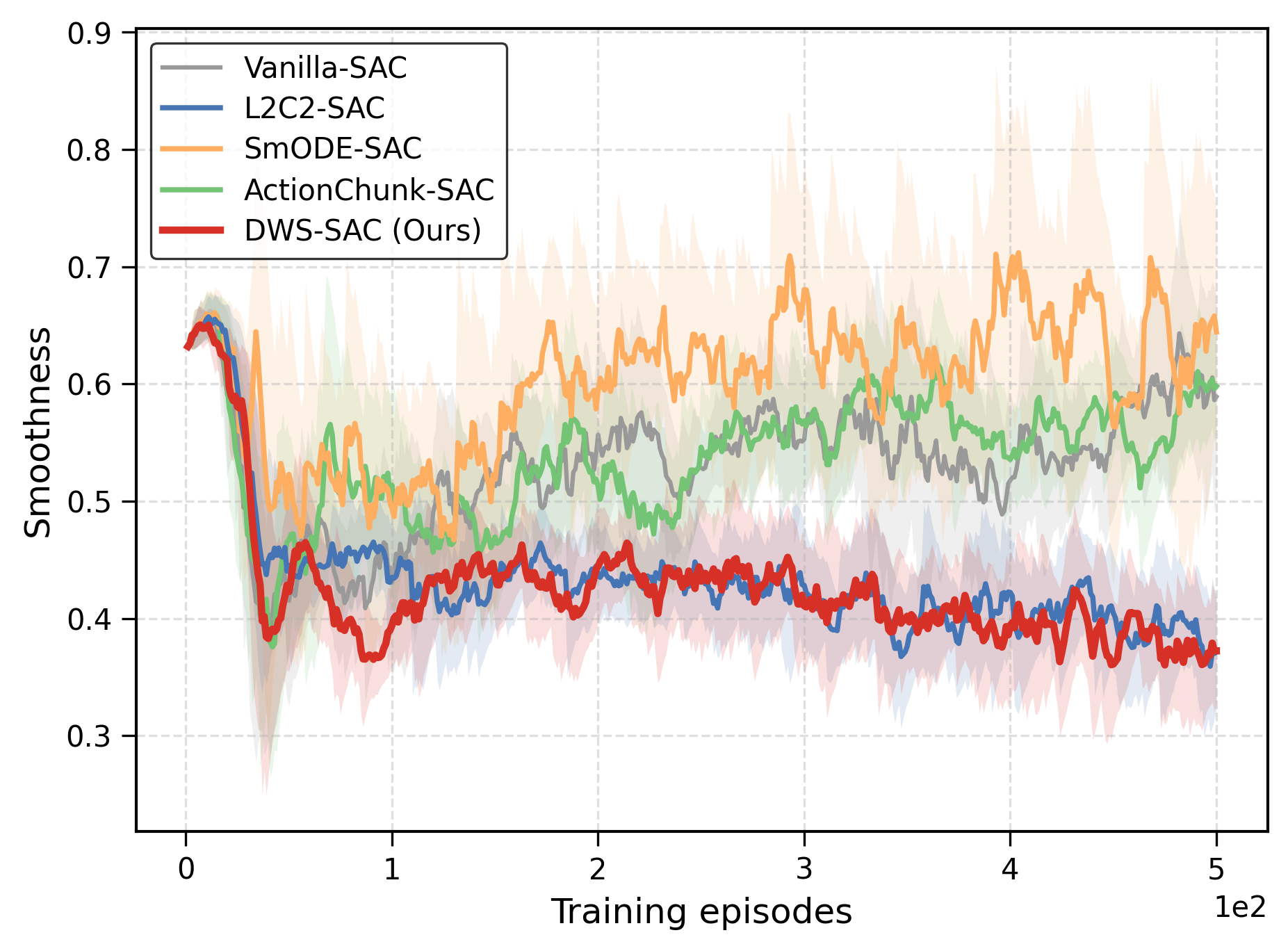}
        \caption{Reacher Easy (Smoothness)}
        \label{fig:reacher_e_sm}
    \end{subfigure}

    \vspace{0.3em}

    \begin{subfigure}[c]{0.48\textwidth}
        \centering
        \includegraphics[width=\linewidth, height=0.175\textheight, keepaspectratio]{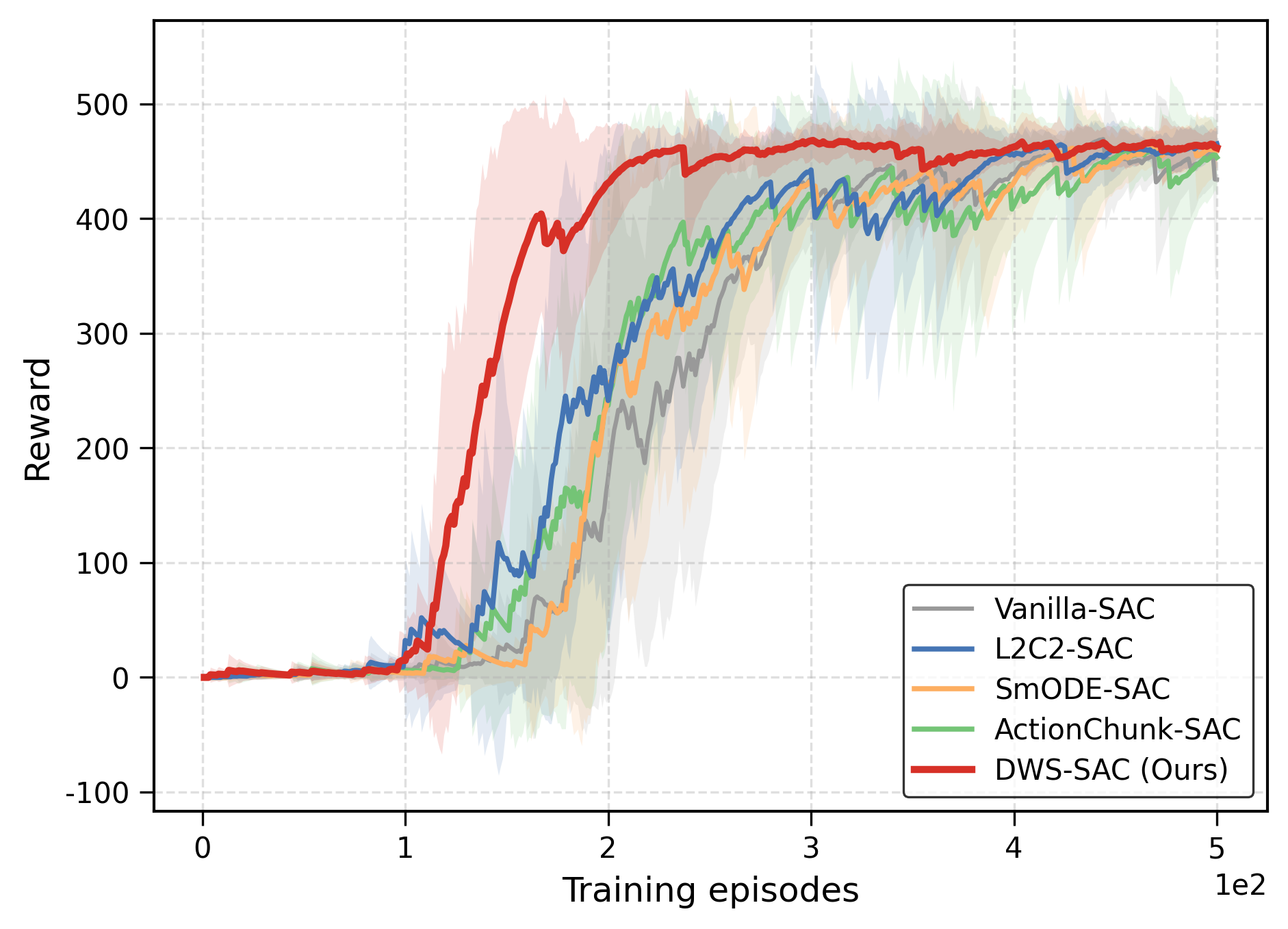}
        \caption{Reacher Hard (Reward)}
        \label{fig:reacher_h_rew}
    \end{subfigure}
    \hfill
    \begin{subfigure}[c]{0.48\textwidth}
        \centering
        \includegraphics[width=\linewidth, height=0.175\textheight, keepaspectratio]{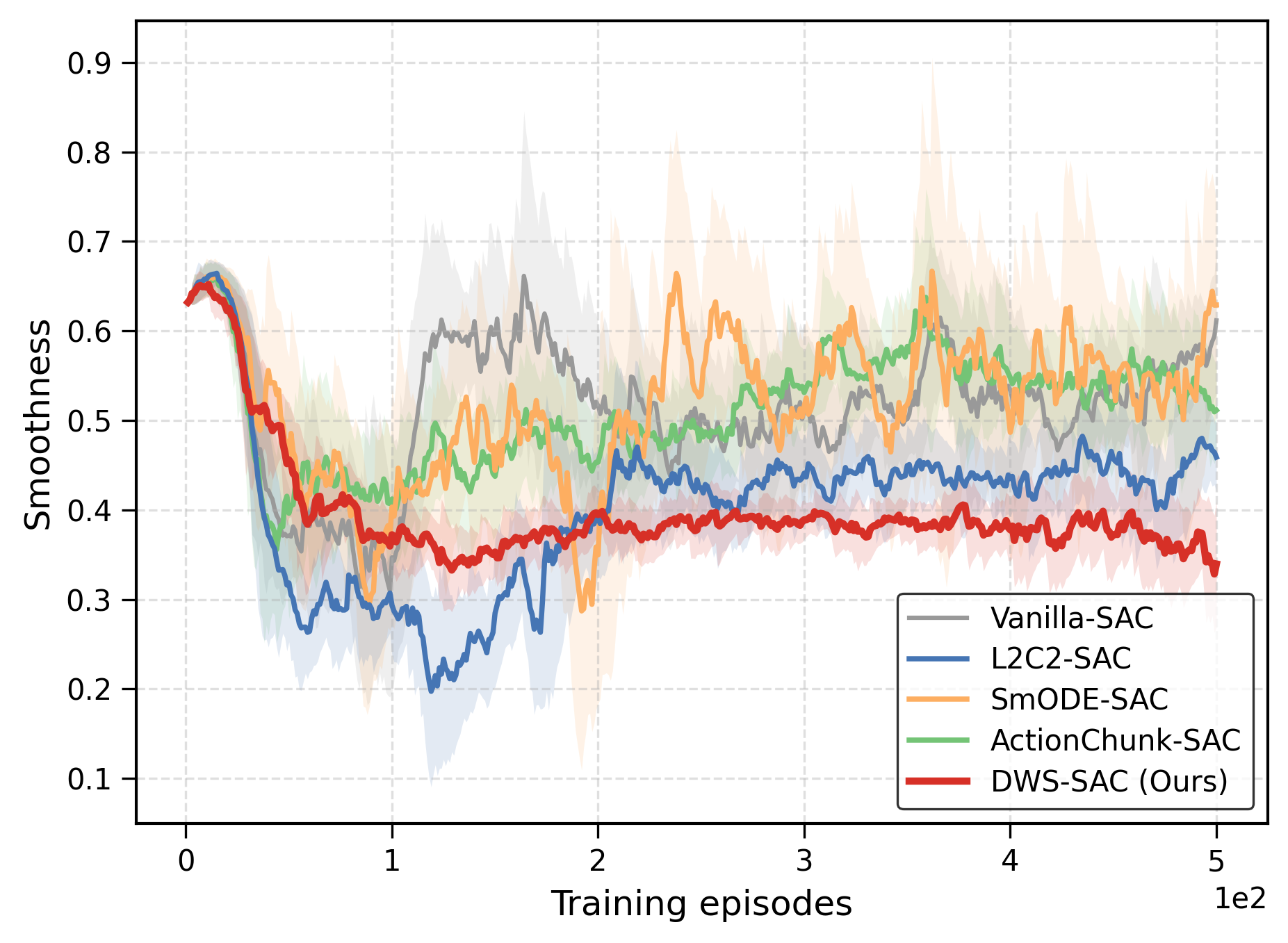}
        \caption{Reacher Hard (Smoothness)}
        \label{fig:reacher_h_sm}
    \end{subfigure}

    \vspace{0.3em}

    \begin{subfigure}[c]{0.48\textwidth}
        \centering
        \includegraphics[width=\linewidth, height=0.175\textheight, keepaspectratio]{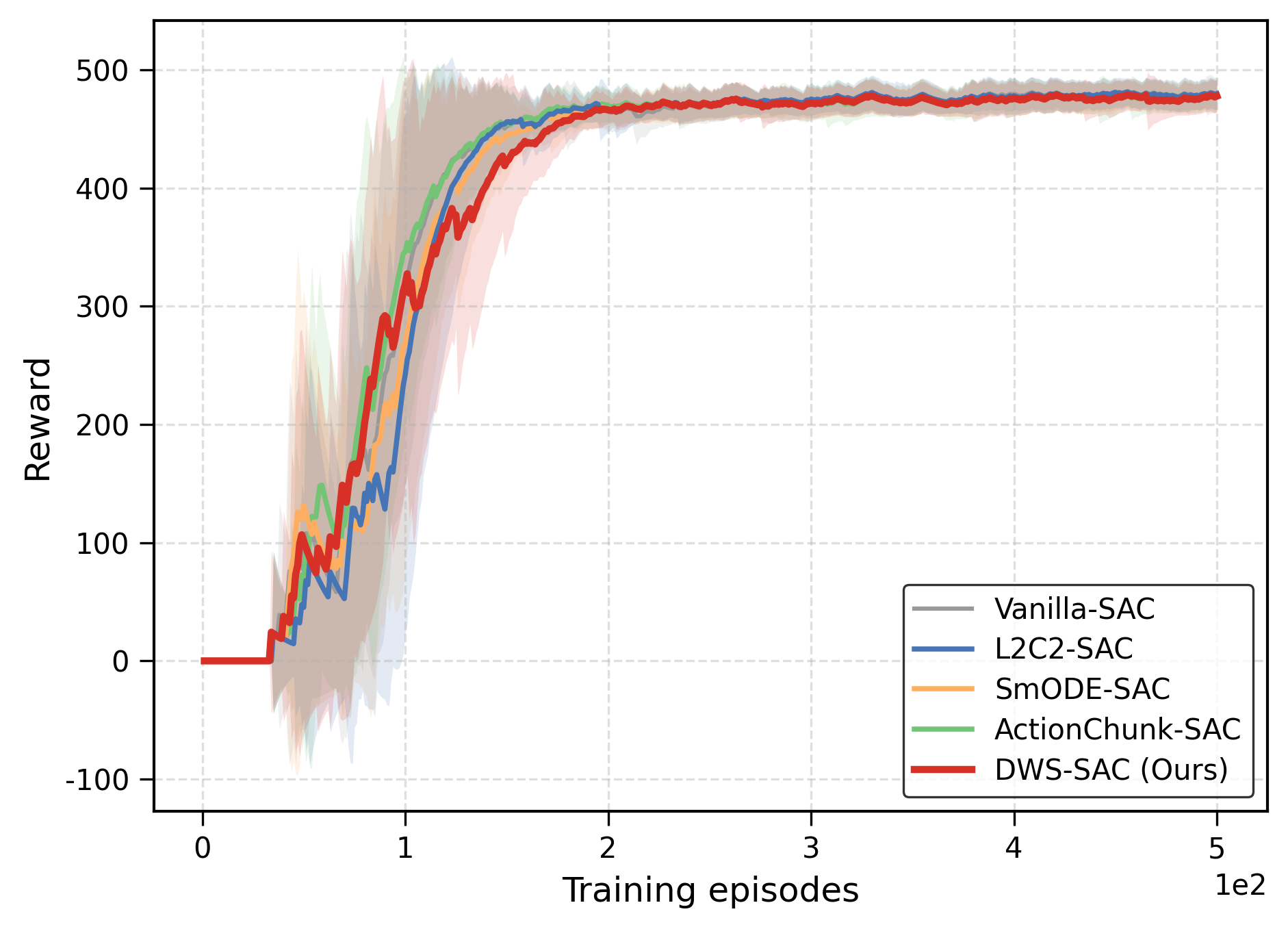}
        \caption{Ball in Cup (Reward)}
        \label{fig:ball_rew}
    \end{subfigure}
    \hfill
    \begin{subfigure}[c]{0.48\textwidth}
        \centering
        \includegraphics[width=\linewidth, height=0.175\textheight, keepaspectratio]{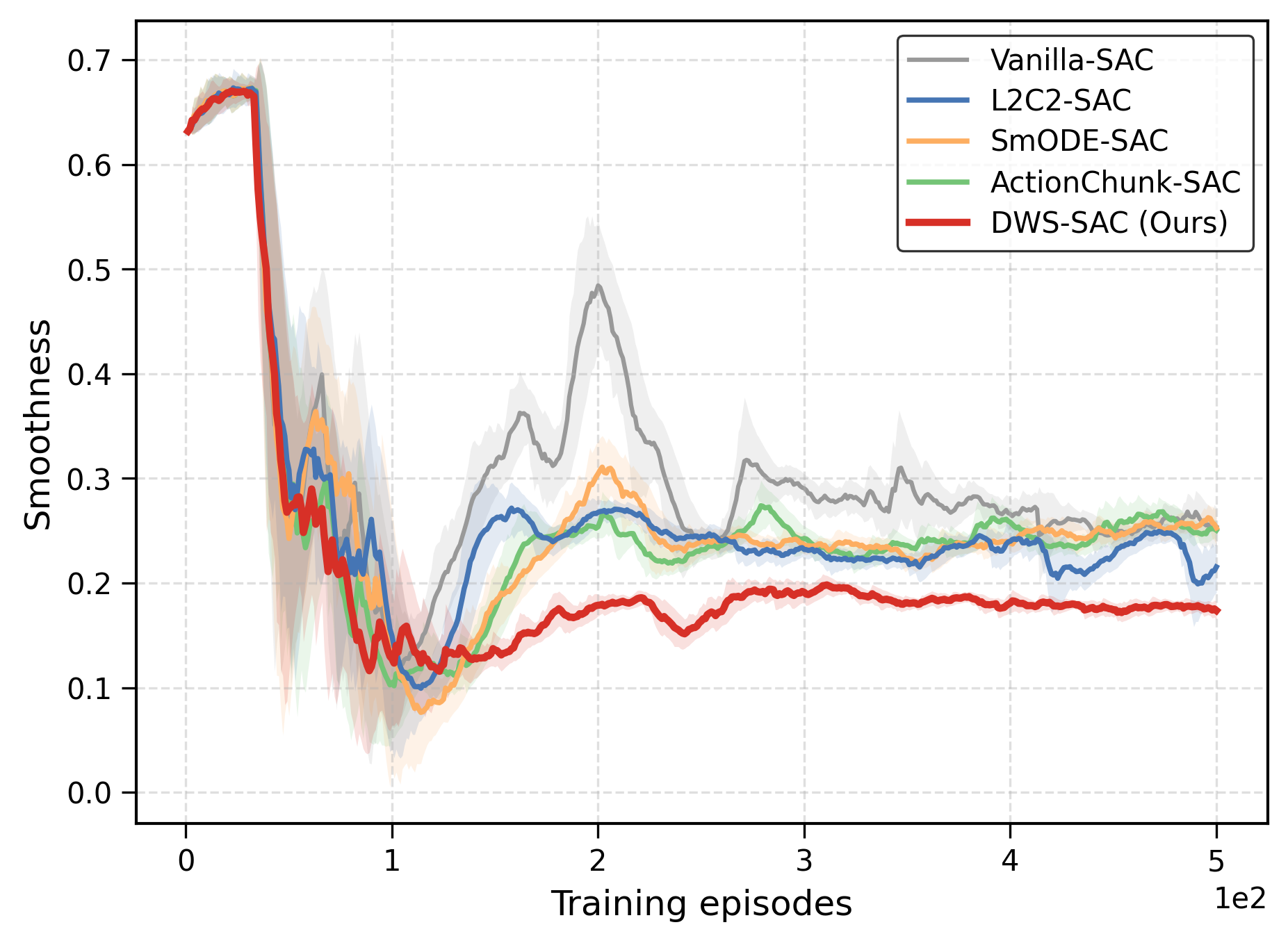}
        \caption{Ball in Cup (Smoothness)}
        \label{fig:ball_sm}
    \end{subfigure}
    
    \vspace{0.3em}

    \begin{subfigure}[c]{0.48\textwidth}
        \centering
        \includegraphics[width=\linewidth, height=0.175\textheight, keepaspectratio]{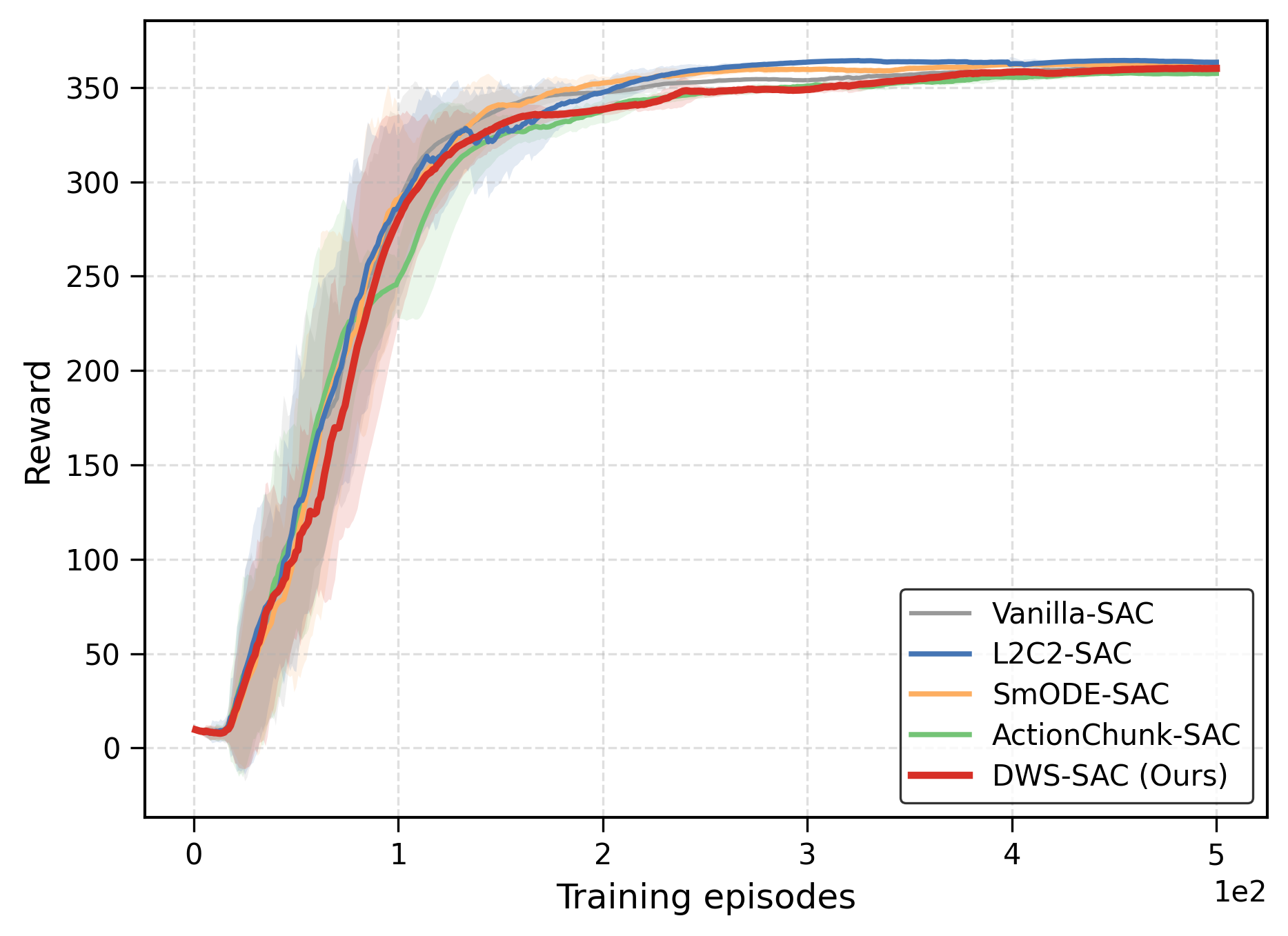}
        \caption{Cart-pole (Reward)}
        \label{fig:cart_rew}
    \end{subfigure}
    \hfill
    \begin{subfigure}[c]{0.48\textwidth}
        \centering
        \includegraphics[width=\linewidth, height=0.175\textheight, keepaspectratio]{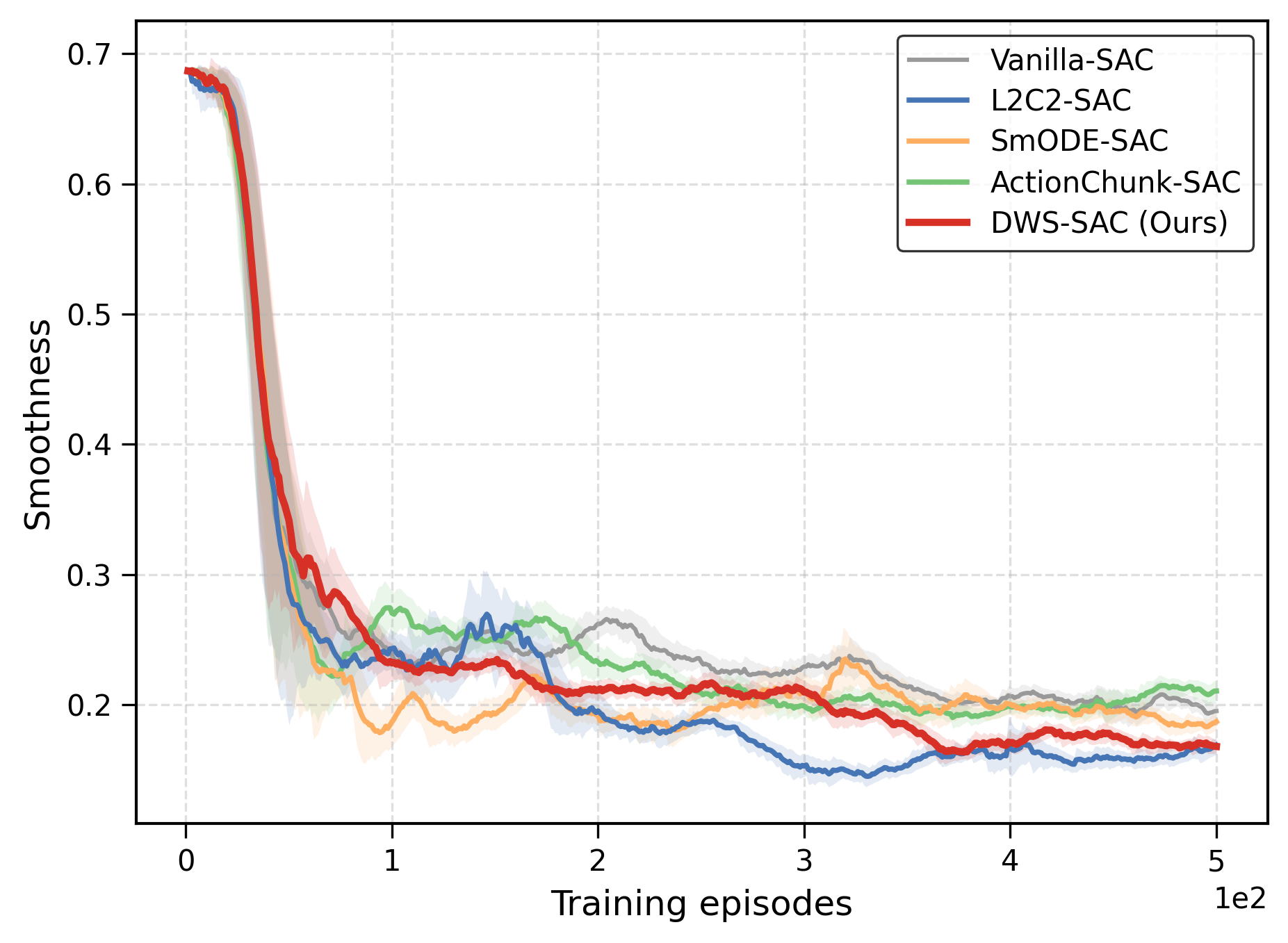}
        \caption{Cart-pole (Smoothness)}
        \label{fig:cart_sm}
    \end{subfigure}

    \vspace{0.3em}

    \begin{subfigure}[c]{0.48\textwidth}
        \centering
        \includegraphics[width=\linewidth, height=0.175\textheight, keepaspectratio]{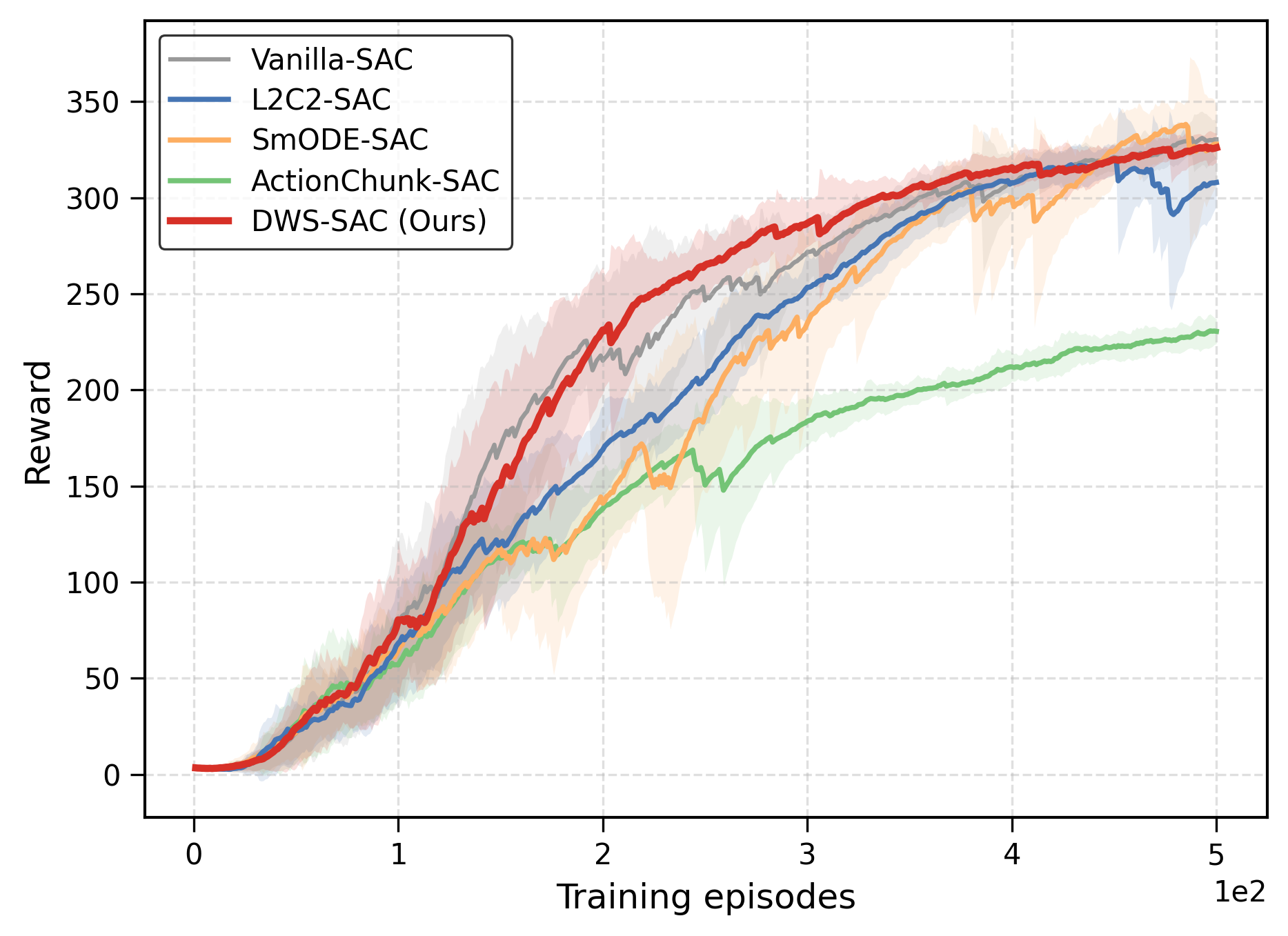}
        \caption{Point Mass (Reward)}
        \label{fig:pointmass_rew}
    \end{subfigure}
    \hfill
    \begin{subfigure}[c]{0.48\textwidth}
        \centering
        \includegraphics[width=\linewidth, height=0.175\textheight, keepaspectratio]{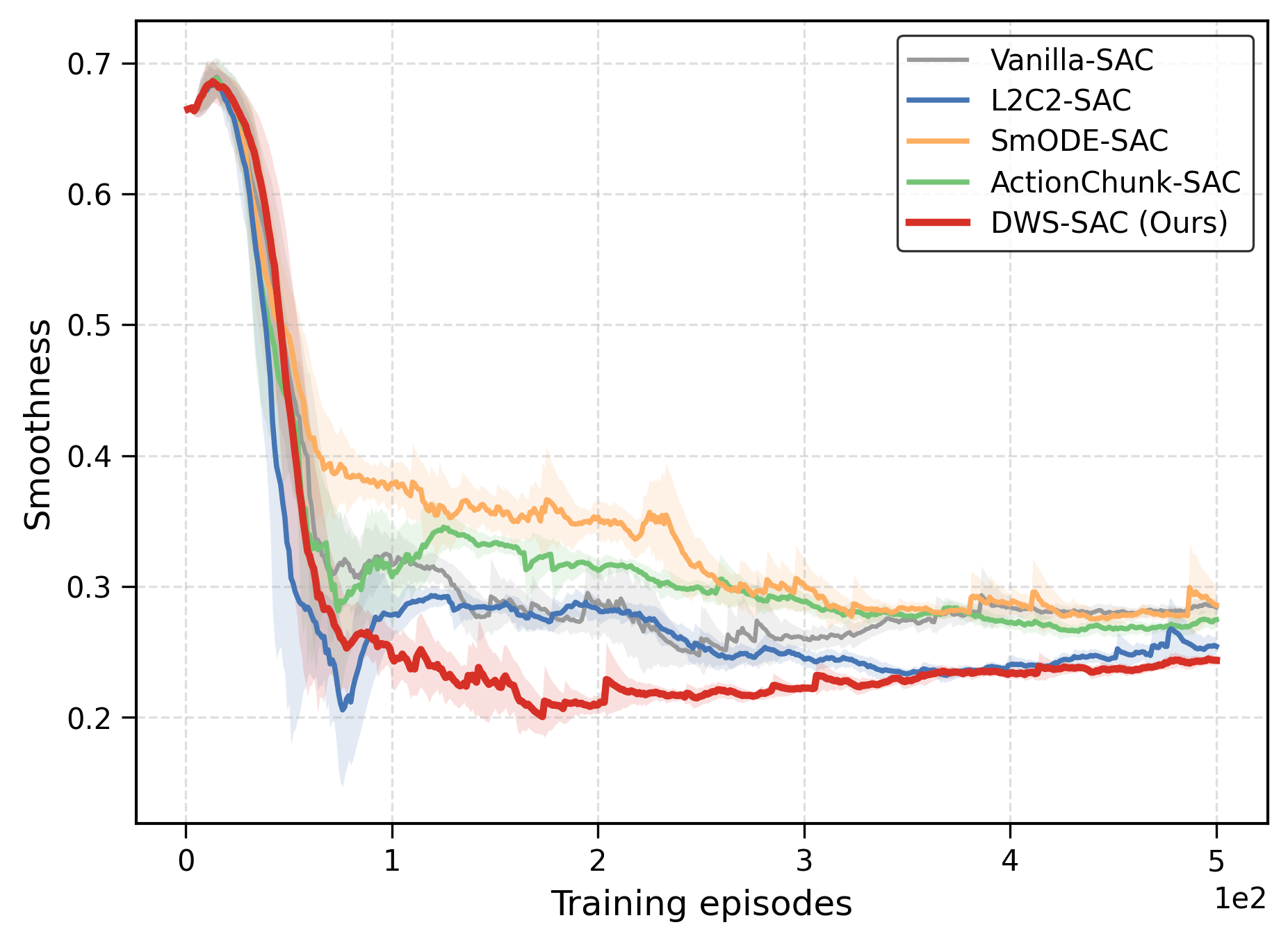}
        \caption{Point Mass (Smoothness)}
        \label{fig:pointmass_sm}
    \end{subfigure}

\end{figure*}

\begin{figure*}[t] 
    \addtocounter{figure}{-1} 
    \caption{Learning curves on DeepMind Control Suite benchmarks. \textbf{Left:} Evaluation return over 1M steps (higher is better). \textbf{Right:} Smoothness metric over training (lower is better). DWS (Red) is compared against standard baselines (TD3/SAC) and smoothing methods (L2C2, SmODE, ActionChunk). Shaded regions represent standard deviation ($n=5$).}
    \label{fig:full_page_comparison}
    \vspace{-10pt} 
\end{figure*}

\begin{table*}[t]
\centering
\caption{Quantitative results on \textbf{Point Mass - Easy}. We report the mean $\pm$ standard deviation over 5 seeds. $\uparrow$ indicates higher is better; $\downarrow$ indicates lower is better.}
\label{tab:dmc_pointmass}
\resizebox{\textwidth}{!}{%
\begin{tabular}{lcccccccc}
\toprule
\textbf{Algorithm} & \textbf{Reward} $\uparrow$ & \textbf{Smooth.} $\downarrow$ & \textbf{AFR\_L2} $\downarrow$ & \textbf{AFR\_L1} $\downarrow$ & \textbf{Jerk\_RMS} $\downarrow$ & \textbf{Jerk\_P95} $\downarrow$ & \textbf{Delta\_Max} $\downarrow$ & \textbf{Delta\_P95} $\downarrow$ \\
\midrule
Vanilla-TD3 & 896.09 $\pm$ 33.48 & 0.03 $\pm$ 0.10 & 0.01 $\pm$ 0.00 & 0.02 $\pm$ 0.00 & 0.01 $\pm$ 0.00 & 0.01 $\pm$ 0.01 & 0.17 $\pm$ 0.06 & 0.32 $\pm$ 0.08 \\
L2C2-TD3 & 894.52 $\pm$ 32.40 & 0.03 $\pm$ 0.02 & 0.03 $\pm$ 0.02 & 0.01 $\pm$ 0.00 & 0.01 $\pm$ 0.00 & 0.01 $\pm$ 0.00 & 0.18 $\pm$ 0.09 & 0.22 $\pm$ 0.02 \\
LipsNet++-TD3 & 896.48 $\pm$ 28.86 & 0.02 $\pm$ 0.01 & 0.01 $\pm$ 0.01 & 0.01 $\pm$ 0.01 & 0.01 $\pm$ 0.01 & 0.01 $\pm$ 0.00 & 0.28 $\pm$ 0.11 & 0.22 $\pm$ 0.11 \\
SmODE-TD3 & 892.73 $\pm$ 21.56 & 0.02 $\pm$ 0.01 & 0.02 $\pm$ 0.01 & 0.01 $\pm$ 0.01 & 0.02 $\pm$ 0.01 & 0.03 $\pm$ 0.01 & 0.14 $\pm$ 0.03 & 0.19 $\pm$ 0.04 \\
ActionChunk-TD3 & 886.45 $\pm$ 22.98 & 0.02 $\pm$ 0.02 & 0.03 $\pm$ 0.01 & 0.02 $\pm$ 0.01 & 0.03 $\pm$ 0.01 & 0.02 $\pm$ 0.01 & 0.28 $\pm$ 0.10 & 0.26 $\pm$ 0.12 \\
DWS-TD3 (Ours) & \textbf{912.65 $\pm$ 29.83} & \textbf{0.01 $\pm$ 0.01} & \textbf{0.01 $\pm$ 0.01} & \textbf{0.01 $\pm$ 0.01} & \textbf{0.01 $\pm$ 0.00} & \textbf{0.01 $\pm$ 0.01} & \textbf{0.05 $\pm$ 0.01} & \textbf{0.15 $\pm$ 0.07} \\
\midrule
Vanilla-SAC & 893.25 $\pm$ 24.84 & 0.02 $\pm$ 0.01 & 0.02 $\pm$ 0.01 & 0.02 $\pm$ 0.01 & 0.04 $\pm$ 0.02 & 0.03 $\pm$ 0.01 & 0.19 $\pm$ 0.06 & 0.15 $\pm$ 0.02 \\
L2C2-SAC & 898.71 $\pm$ 13.64 & 0.03 $\pm$ 0.02 & 0.02 $\pm$ 0.01 & 0.02 $\pm$ 0.01 & 0.02 $\pm$ 0.01 & 0.03 $\pm$ 0.02 & 0.12 $\pm$ 0.08 & 0.13 $\pm$ 0.04 \\
LipsNet++-SAC & 24.94 $\pm$ 15.83 & - & - & - & - & - & - & - \\
SmODE-SAC & 887.15 $\pm$ 16.55 & 0.01 $\pm$ 0.00 & 0.03 $\pm$ 0.01 & 0.02 $\pm$ 0.01 & 0.03 $\pm$ 0.02 & 0.03 $\pm$ 0.02 & 0.12 $\pm$ 0.06 & 0.41 $\pm$ 0.04 \\
ActionChunk-SAC & 897.64 $\pm$ 14.91 & 0.02 $\pm$ 0.01 & 0.04 $\pm$ 0.01 & 0.05 $\pm$ 0.02 & 1.11 $\pm$ 0.02 & 0.06 $\pm$ 0.07 & 0.21 $\pm$ 0.10 & 0.33 $\pm$ 0.04 \\
DWS-SAC (Ours) & \textbf{904.90 $\pm$ 30.11} & \textbf{0.01 $\pm$ 0.00} & \textbf{0.02 $\pm$ 0.01} & \textbf{0.01 $\pm$ 0.00} & \textbf{0.01 $\pm$ 0.01} & \textbf{0.02 $\pm$ 0.01} & \textbf{0.10 $\pm$ 0.03} & \textbf{0.11 $\pm$ 0.02} \\
\bottomrule
\end{tabular}%
}
\end{table*}

\subsection{Additional Results on High-Dimensional DMC Locomotion Tasks}
\label{app:dmc_high_dimensional}

To further examine the scalability of DWS in more complex continuous control scenarios, we conduct additional experiments on two high-dimensional DMC locomotion tasks, Cheetah and Walker. Compared with the lower-dimensional tasks reported in the main text, Cheetah and Walker involve more complex body dynamics, higher-dimensional action spaces, and stronger temporal coupling, making them useful benchmarks for evaluating both control smoothness and policy stability in high-dimensional locomotion. Visualizations of the two high-dimensional locomotion environments are provided in \cref{fig:dmc_tasks_high}.

\begin{figure}[H]
    \centering
    \vspace{-2pt}

    \begin{minipage}[b]{0.23\textwidth}
        \centering
        \includegraphics[width=\textwidth]{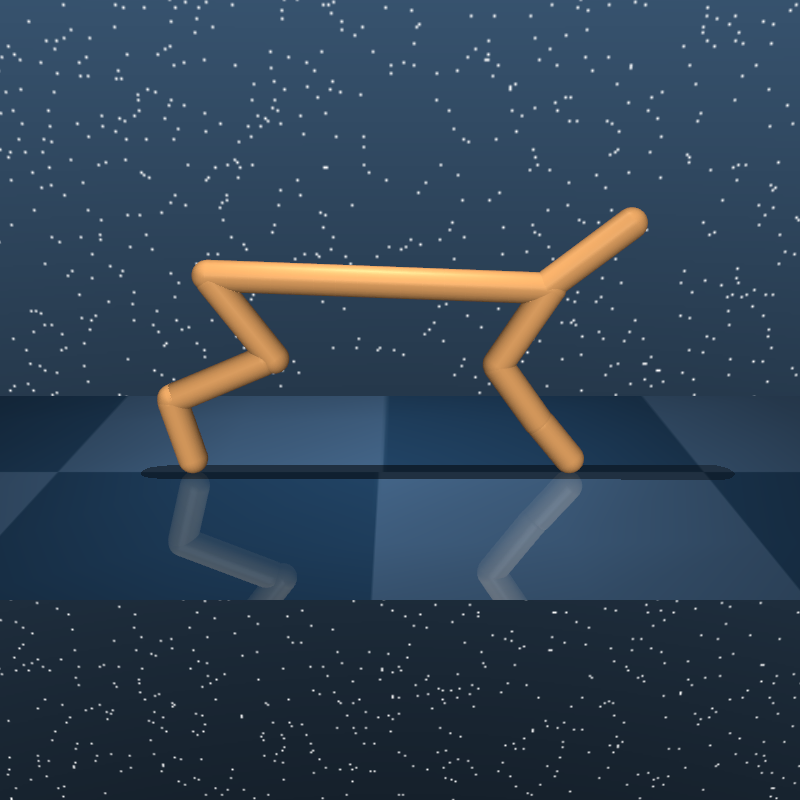}
        \vspace{0.08cm}
        \centerline{\small (a) Cheetah}
    \end{minipage}
    \hspace{0.06\textwidth}
    \begin{minipage}[b]{0.23\textwidth}
        \centering
        \includegraphics[width=\textwidth]{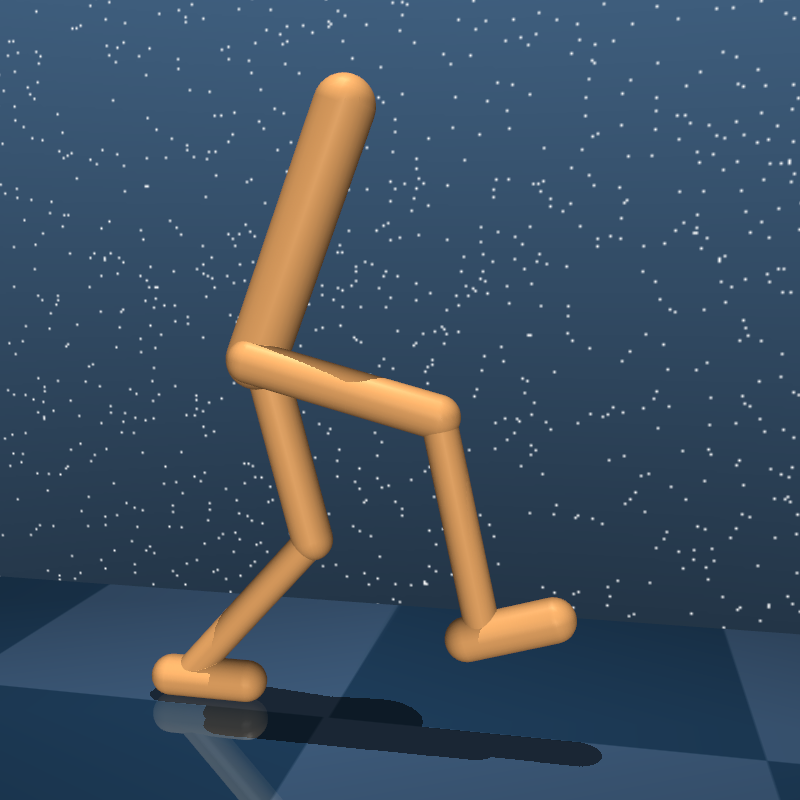}
        \vspace{0.08cm}
        \centerline{\small (b) Walker}
    \end{minipage}
   \vspace{-2pt}
    \caption{\textbf{Visualizations of high-dimensional DMC locomotion tasks.} 
    We further evaluate DWS on two challenging locomotion benchmarks: 
    (a) \textbf{Cheetah}, which requires fast and coordinated limb control for forward running; 
    and (b) \textbf{Walker}, which requires stable full-body balance and smooth gait generation. 
    These tasks involve more complex body dynamics and higher-dimensional action spaces than the low-dimensional DMC tasks, providing a complementary evaluation of DWS in dynamic locomotion control.}
    \label{fig:dmc_tasks_high}
\end{figure}

The results are summarized in Tables~\ref{tab:dmc_cheetah_high_dim} and~\ref{tab:dmc_walker_high_dim}. Overall, DWS demonstrates a stronger balance between task performance and control smoothness on both tasks. On Cheetah, DWS achieves the highest reward under both TD3 and SAC backbones, while obtaining the best or near-best results on most smoothness-related metrics. In particular, with the SAC backbone, DWS achieves the best performance across all reported smoothness metrics, indicating that it can significantly reduce action fluctuation, jerk, and step-wise action variation while maintaining high task return.

On Walker, DWS also shows consistent advantages. It is worth noting that, under the TD3 backbone, LipsNet++ obtains a slightly higher average reward than DWS-TD3. However, DWS-TD3 achieves the best results across all smoothness-related metrics while maintaining a highly competitive return close to the best-performing baseline. Therefore, from the perspective of the overall trade-off between task performance and smooth control, DWS remains the most favorable choice. Under the SAC backbone, DWS achieves both the highest reward and the best smoothness performance. These results suggest that the proposed dual-window design is not limited to low-dimensional control tasks, but can also scale to more complex and dynamic high-dimensional locomotion scenarios, where smooth, stable, and responsive control is simultaneously required.

\begin{table*}[!t]
\centering
\caption{\textbf{Performance comparison on the harder DMC Cheetah task.} Mean$\pm$std over 5 seeds. Higher reward is better; all smoothness-related metrics are lower-is-better. Best results within each backbone block (TD3 or SAC) are highlighted in \textbf{bold}.}
\label{tab:dmc_cheetah_high_dim}
\footnotesize
\setlength{\tabcolsep}{2pt}
\renewcommand{\arraystretch}{1.0}
\resizebox{\textwidth}{!}{%
\begin{tabular}{lcccccccc}
\toprule
\textbf{Algorithm} 
& \textbf{Reward}$\uparrow$ 
& \textbf{Smooth.}$\downarrow$ 
& \textbf{AFR$_{\text{L2}}$}$\downarrow$ 
& \textbf{AFR$_{\text{L1}}$}$\downarrow$ 
& \textbf{Jerk$_{\text{RMS}}$}$\downarrow$ 
& \textbf{Jerk$_{\text{P95}}$}$\downarrow$ 
& \textbf{Delta$_{\max}$}$\downarrow$ 
& \textbf{Delta$_{\text{P95}}$}$\downarrow$ \\
\midrule
Vanilla-TD3      & 456.77 $\pm$ 153.82 & 0.21 $\pm$ 0.02 & 0.89 $\pm$ 0.07 & 1.49 $\pm$ 0.12 & 1.64 $\pm$ 0.13 & 3.22 $\pm$ 0.26 & 3.68 $\pm$ 0.31 & 2.38 $\pm$ 0.09 \\
L2C2-TD3         & 402.06 $\pm$ 152.20 & 0.18 $\pm$ 0.03 & 0.78 $\pm$ 0.14 & 1.03 $\pm$ 0.18 & 1.53 $\pm$ 0.15 & 2.74 $\pm$ 0.26 & \textbf{2.70 $\pm$ 0.32} & 2.48 $\pm$ 0.20 \\
LipsNet++-TD3    & 591.46 $\pm$ 81.29  & 0.24 $\pm$ 0.00 & 0.97 $\pm$ 0.02 & 1.45 $\pm$ 0.03 & 1.76 $\pm$ 0.03 & 3.56 $\pm$ 0.15 & 4.07 $\pm$ 0.13 & 2.35 $\pm$ 0.07 \\
SmODE-TD3        & 434.74 $\pm$ 169.08 & 0.23 $\pm$ 0.02 & 0.97 $\pm$ 0.08 & 1.37 $\pm$ 0.14 & 1.74 $\pm$ 0.13 & 3.22 $\pm$ 0.26 & 3.68 $\pm$ 0.31 & 2.38 $\pm$ 0.09 \\
ActionChunk-TD3  & 0.01 $\pm$ 0.01 & - & - & - & - & - & - & - \\
DWS-TD3 (Ours)   & \textbf{595.35 $\pm$ 74.63} & \textbf{0.16 $\pm$ 0.03} & \textbf{0.68 $\pm$ 0.11} & \textbf{0.96 $\pm$ 0.16} & \textbf{1.22 $\pm$ 0.17} & \textbf{2.50 $\pm$ 0.25} & 3.45 $\pm$ 0.25 & \textbf{1.91 $\pm$ 0.17} \\
\midrule
Vanilla-SAC      & 777.35 $\pm$ 24.84 & 0.41 $\pm$ 0.01 & 1.24 $\pm$ 0.03 & 2.45 $\pm$ 0.05 & 1.74 $\pm$ 0.08 & 3.38 $\pm$ 0.19 & 7.29 $\pm$ 2.16 & 2.75 $\pm$ 0.12 \\
L2C2-SAC         & 432.90 $\pm$ 183.74 & 0.13 $\pm$ 0.04 & 0.50 $\pm$ 0.17 & 0.78 $\pm$ 0.26 & 0.78 $\pm$ 0.15 & 1.63 $\pm$ 0.22 & 3.12 $\pm$ 0.28 & 1.33 $\pm$ 0.14 \\
LipsNet++-SAC    & 347.98 $\pm$ 15.83 & 0.40 $\pm$ 0.00 & 1.20 $\pm$ 0.01 & 2.43 $\pm$ 0.02 & 1.31 $\pm$ 0.04 & 2.89 $\pm$ 0.18 & 4.71 $\pm$ 0.49 & 2.38 $\pm$ 0.12 \\
SmODE-SAC        & 767.13 $\pm$ 14.25 & 0.18 $\pm$ 0.00 & 0.71 $\pm$ 0.01 & 1.09 $\pm$ 0.02 & 1.22 $\pm$ 0.02 & 2.39 $\pm$ 0.06 & 2.72 $\pm$ 0.16 & 1.84 $\pm$ 0.04 \\
ActionChunk-SAC  & 734.25 $\pm$ 33.78 & 0.39 $\pm$ 0.01 & 1.04 $\pm$ 0.04 & 2.16 $\pm$ 0.05 & 1.34 $\pm$ 0.09 & 2.78 $\pm$ 0.17 & 5.64 $\pm$ 2.11 & 2.54 $\pm$ 0.11 \\
DWS-SAC (Ours)   & \textbf{796.90 $\pm$ 4.11} & \textbf{0.12 $\pm$ 0.00} & \textbf{0.42 $\pm$ 0.00} & \textbf{0.70 $\pm$ 0.01} & \textbf{0.42 $\pm$ 0.01} & \textbf{0.83 $\pm$ 0.03} & \textbf{1.51 $\pm$ 0.10} & \textbf{0.91 $\pm$ 0.02} \\
\bottomrule
\end{tabular}%
}
\end{table*}

\begin{table*}[!t]
\centering
\caption{\textbf{Performance comparison on the DMC Walker task.} Mean$\pm$std over 5 seeds. Higher reward is better; all smoothness-related metrics are lower-is-better. Best results within each backbone block (TD3 or SAC) are highlighted in \textbf{bold}.}
\label{tab:dmc_walker_high_dim}
\footnotesize
\setlength{\tabcolsep}{2pt}
\renewcommand{\arraystretch}{1.0}
\resizebox{\textwidth}{!}{%
\begin{tabular}{lcccccccc}
\toprule
\textbf{Algorithm} & \textbf{Reward}$\uparrow$ & \textbf{Smooth.}$\downarrow$ & \textbf{AFR$_{\text{L2}}$}$\downarrow$
& \textbf{AFR$_{\text{L1}}$}$\downarrow$ & \textbf{Jerk$_{\text{RMS}}$}$\downarrow$ & \textbf{Jerk$_{\text{P95}}$}$\downarrow$
& \textbf{Delta$_{\max}$}$\downarrow$ & \textbf{Delta$_{\text{P95}}$}$\downarrow$ \\
\midrule
Vanilla-TD3       & 716.75 $\pm$ 257.45 & 0.49 $\pm$ 0.09 & 1.78 $\pm$ 0.31 & 2.97 $\pm$ 0.53 & 3.19 $\pm$ 0.52 & 4.96 $\pm$ 0.71 & 3.96 $\pm$ 0.35 & 2.96 $\pm$ 0.41 \\
L2C2-TD3          & 692.54 $\pm$ 329.86 & 0.27 $\pm$ 0.07 & 0.81 $\pm$ 0.22 & 1.34 $\pm$ 0.40 & 1.55 $\pm$ 0.39 & 2.72 $\pm$ 0.73 & 3.79 $\pm$ 0.54 & 1.84 $\pm$ 0.50 \\
ActionChunk-TD3   & 21.62 $\pm$ 10.13 & - & - & - & - & - & - & - \\
LipsNet++-TD3     & \textbf{792.48 $\pm$ 315.01} & 0.48 $\pm$ 0.10 & 1.63 $\pm$ 0.33 & 2.89 $\pm$ 0.61 & 3.03 $\pm$ 0.50 & 4.93 $\pm$ 0.63 & 3.89 $\pm$ 0.30 & 2.90 $\pm$ 0.35 \\
SmODE-TD3         & 661.23 $\pm$ 308.31 & 0.40 $\pm$ 0.10 & 1.33 $\pm$ 0.34 & 2.43 $\pm$ 0.61 & 2.58 $\pm$ 0.54 & 4.59 $\pm$ 0.82 & 4.01 $\pm$ 0.30 & 2.75 $\pm$ 0.48 \\
DWS-TD3 (Ours)    & 786.24 $\pm$ 332.30 & \textbf{0.22 $\pm$ 0.06} & \textbf{0.78 $\pm$ 0.19} & \textbf{1.31 $\pm$ 0.37} & \textbf{1.41 $\pm$ 0.36} & \textbf{2.63 $\pm$ 0.68} & \textbf{3.35 $\pm$ 0.43} & \textbf{1.79 $\pm$ 0.40} \\
\midrule
Vanilla-SAC       & 513.98 $\pm$ 214.74 & 0.41 $\pm$ 0.09 & 1.32 $\pm$ 0.25 & 2.47 $\pm$ 0.54 & 2.46 $\pm$ 0.42 & 4.14 $\pm$ 0.46 & 3.44 $\pm$ 0.29 & 2.44 $\pm$ 0.25 \\
L2C2-SAC          & 622.44 $\pm$ 251.77 & 0.33 $\pm$ 0.08 & 1.07 $\pm$ 0.23 & 1.98 $\pm$ 0.51 & 1.91 $\pm$ 0.40 & 3.28 $\pm$ 0.46 & 3.13 $\pm$ 0.26 & 2.11 $\pm$ 0.22 \\
ActionChunk-SAC   & 549.18 $\pm$ 232.88 & 0.35 $\pm$ 0.11 & 1.18 $\pm$ 0.30 & 2.11 $\pm$ 0.68 & 2.19 $\pm$ 0.52 & 3.80 $\pm$ 0.63 & 3.41 $\pm$ 0.27 & 2.36 $\pm$ 0.32 \\
LipsNet++-SAC     & 158.22 $\pm$ 92.82 & 0.31 $\pm$ 0.06 & 0.97 $\pm$ 0.19 & 1.83 $\pm$ 0.37 & 1.75 $\pm$ 0.33 & 3.04 $\pm$ 0.50 & 3.06 $\pm$ 0.41 & 1.97 $\pm$ 0.35 \\
SmODE-SAC         & 885.91 $\pm$ 201.52 & 0.47 $\pm$ 0.05 & 1.51 $\pm$ 0.15 & 2.83 $\pm$ 0.33 & 2.82 $\pm$ 0.26 & 4.72 $\pm$ 0.27 & 3.71 $\pm$ 0.22 & 2.76 $\pm$ 0.14 \\
DWS-SAC (Ours)    & \textbf{901.51 $\pm$ 188.92} & \textbf{0.27 $\pm$ 0.05} & \textbf{0.86 $\pm$ 0.13} & \textbf{1.64 $\pm$ 0.27} & \textbf{1.48 $\pm$ 0.24} & \textbf{2.45 $\pm$ 0.31} & \textbf{2.75 $\pm$ 0.32} & \textbf{1.60 $\pm$ 0.16} \\
\bottomrule
\end{tabular}%
}
\end{table*}

\section{Additional Experiments on Energy Management Task}
\label{app:ems_experiments}

We evaluate our proposed algorithm on the Energy Management Strategy (EMS) task using \textbf{LearningEMS}, a high-fidelity unified framework for electric vehicle (EV) control. We focus specifically on the Fuel Cell Electric Vehicle (FCEV) benchmark, which stresses the agent's ability to optimize long-horizon energy consumption while strictly adhering to physical constraints (e.g., battery State of Charge) and managing the degradation of the fuel cell stack\cite{li2024deep,wang2023cooperative}.

\subsection{Simulation Environment and FCEV Model}
\label{app:ems_models}

The LearningEMS environment simulates the longitudinal dynamics and multi-physics energy flow of a fuel cell hybrid electric logistics truck. The powertrain consists of a proton exchange membrane fuel cell stack serving as the primary power source and a lithium-ion battery pack for energy buffering and peak shaving\cite{he2024deep,wu2023toward}.

\paragraph{Longitudinal Dynamics.}
The power demand is determined by the vehicle's longitudinal kinetics. The total resistance force $F_r$ is calculated as the sum of rolling resistance, aerodynamic drag, and inertial force:
\begin{equation}
    F_r = G \cdot f + \frac{1}{2} \rho A_f C_D v^2 + m \cdot acc
\end{equation}
where $G$ is gravity, $f$ is the rolling resistance coefficient, $\rho$ is air density, $A_f$ is the frontal area, $C_D$ is the aerodynamic drag coefficient, $v$ is velocity, and $m$ is vehicle mass. The power request $P_{req}$ is derived from $P_{req} = F_r \cdot v$.

\paragraph{Fuel Cell System.}
The hydrogen consumption rate $\dot{m}_{H_2}$ is modeled based on efficiency maps derived from experimental data. It is calculated as:
\begin{equation}
    \dot{m}_{H_2} = \frac{P_{fcs}}{\eta_{fcs}(P_{fcs}) \cdot LHV_{H_2}}
\end{equation}
where $P_{fcs}$ is the fuel cell system output power, $\eta_{fcs}$ is the system efficiency, and $LHV_{H_2}$ is the hydrogen low calorific value. The model also accounts for fuel cell degradation, penalizing conditions such as high-power operation and rapid load changes\cite{wu2024confidence,yang2023efficient}.

\paragraph{Battery System.}
The battery pack is modeled using an equivalent circuit model. The state of charge (SOC) dynamics and terminal voltage $V_{t}$ are governed by:
\begin{equation}
    V_{t} = V_{oc}(SOC) - I(t)R_0
\end{equation}
where $V_{oc}$ is the open-circuit voltage, $I(t)$ is the current, and $R_0$ is the internal resistance.

\subsection{MDP Formulation}
\label{app:ems_mdp}

We formulate the FCEV energy management problem as a Markov Decision Process (MDP) tuple $(S, A, P, R, \gamma)$. The objective is to minimize the total cost of hydrogen and system degradation while maintaining the battery SOC.

\paragraph{State Space ($S$).}
The state vector $s_t$ captures the vehicle's kinematic status and the internal states of the powertrain:
\begin{equation}
    s_{FCEV} = \{v_t, acc_t, SOC_t, P_{fcs}^{t}\}
\end{equation}
where $v_t$ is the vehicle speed, $acc_t$ is acceleration, $SOC_t$ is the battery state of charge, and $P_{fcs}^{t}$ is the current fuel cell power output. Including $P_{fcs}^{t}$ allows the agent to observe and control power fluctuations.

\paragraph{Action Space ($A$).}
To ensure the durability of the fuel cell stack, the action space is defined as the rate of change in power (power slope) rather than the absolute power level. The action $a_t$ is continuous:
\begin{equation}
    a_t = \Delta P_{fcs} \in [-10\text{kW}, 10\text{kW}]
\end{equation}
The next fuel cell power command is determined by $P_{fcs}^{t+1} = P_{fcs}^{t} + a_t$.

\paragraph{Reward Function ($R$).}
The reward function is a multi-objective cost function designed to minimize operational costs and prolong component life. The reward $r_{FCEV}$ is defined as:
\begin{equation}
    r_{FCEV} = -\left( C_{H_2} + C_{degr} + \alpha \omega (SOC_{ref} - SOC_t)^2 \right)
\end{equation}
Here, $C_{H_2}$ represents the cost of hydrogen consumption, and $C_{degr}$ represents the monetized cost of fuel cell degradation (caused by load changes and high-power operation). The final term enforces a soft constraint to keep the battery SOC near the reference value $SOC_{ref}$, weighted by coefficient $\omega$.

\subsection{Training Dynamics and Performance Evaluation}
\label{app:training_dynamics}

\begin{figure}[htbp]
    \centering
    \includegraphics[width=0.9\linewidth]{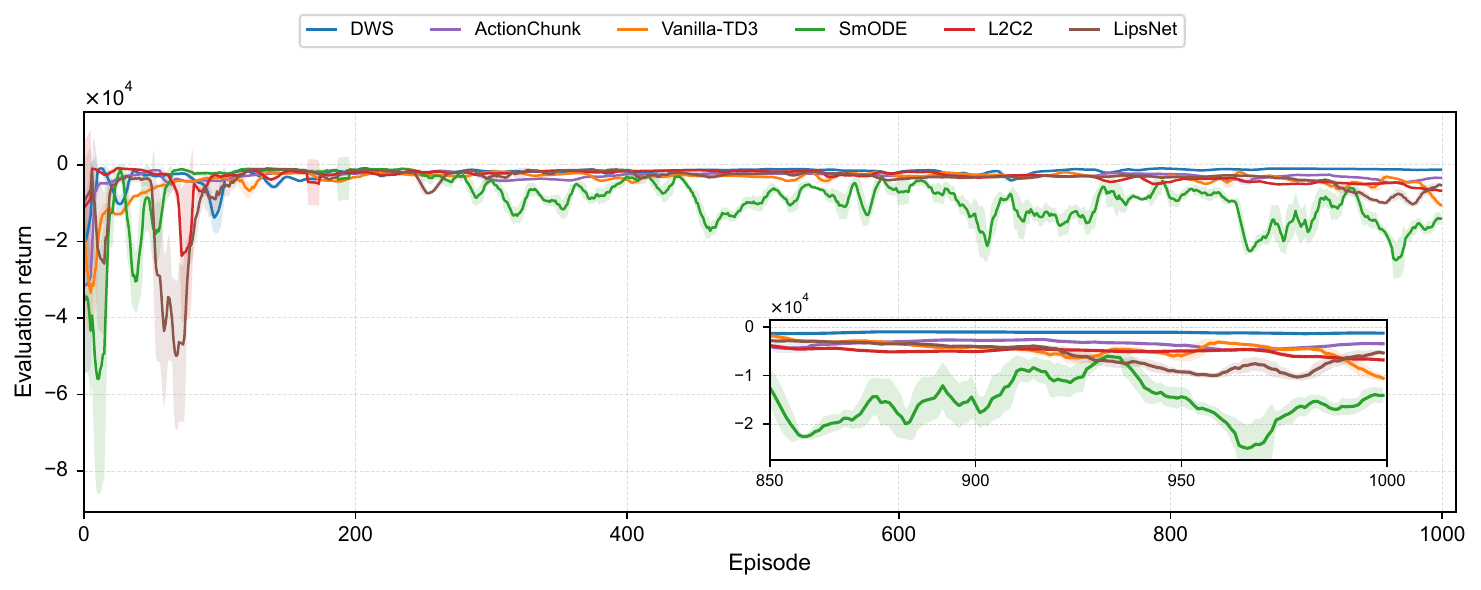}
    \caption{Training dynamics on the FCEV energy management task. The curves display the mean evaluation return $\pm$ standard deviation over 1,000 episodes. The zoomed-in inset (bottom right) highlights the asymptotic performance in the final phase, where \textbf{DWS} demonstrates superior stability and highest converged return compared to baselines.}
    \label{fig:training_curve}
\end{figure}

To comprehensively evaluate the learning efficiency and asymptotic performance of our proposed method, we conducted a comparative training experiment spanning 1,000 episodes within the LearningEMS FCEV environment. The primary evaluation metric is the cumulative episodic return, which is defined as the negative sum of hydrogen fuel costs, fuel cell degradation costs, and SOC violation penalties. Consequently, a higher return (closer to zero) indicates a more optimal energy management strategy with lower operational costs and extended component lifespan.

We compare \textbf{DWS} against five competitive baselines: ActionChunk, TD3, SmODE, L2C2, and LipsNet. The training curves, illustrated in ~\cref{fig:training_curve}, reveal the following insights:

\begin{itemize}
    \item \textbf{Convergence and Stability:} As shown in the figure, \textbf{DWS} (blue line) demonstrates superior sample efficiency, converging rapidly to a high-return policy within the initial 200 episodes. Unlike baselines such as SmODE (green line), which exhibits significant variance and performance collapses—likely due to frequent violations of the power slope constraints and subsequent penalties—DWS maintains a stable trajectory throughout the training process.
    
    \item \textbf{Asymptotic Performance:} The zoomed-in inset (episodes 850--1000) highlights the asymptotic behavior of the converged policies. DWS consistently achieves the highest evaluation return among all compared methods. This indicates that our method effectively smooths the fuel cell power output, thereby minimizing the degradation cost component ($C_{degr}$) of the reward function while satisfying the load demand.
    
    \item \textbf{Comparison with Baselines:} While TD3 and ActionChunk show competitive performance, they settle at a slightly lower return compared to DWS, suggesting suboptimal trade-offs between energy conservation and battery SOC maintenance. The significant gap between DWS and the other smoothing-based methods (L2C2, LipsNet) further validates the effectiveness of our approach in the highly non-linear FCEV control landscape.
\end{itemize}

\section{Additional Experiments on Autonomous Driving Scenarios}
\label{app:carla_experiments}

We further evaluate \textbf{DWS} on safety-critical driving tasks in CARLA.
These experiments stress-test temporal smoothness under tight safety constraints, where high-frequency control oscillations can directly trigger lane departures or collisions.

Unless otherwise noted, we report metrics computed on the \emph{executed} actions applied to the environment.

\subsection{Backbone and Compared Methods}
\label{app:carla_backbones}

\paragraph{RLHF-style backbone (HG-TD3).}
For CARLA, we build upon a TD3-style actor--critic pipeline augmented with (i) \textbf{human guidance} (behavioral cloning on intervention-labeled samples) and (ii) \textbf{Q-advantage replay} that upweights informative transitions based on a learned advantage signal.
We refer to this common training pipeline as \textbf{HG-TD3}.
All CARLA methods share the same HG-TD3 backbone and differ \emph{only} by the added action-smoothing component.

\paragraph{Baselines.}
We compare against representative smoothing approaches integrated into the same backbone:
\textbf{L2C2}, \textbf{ActionChunk}, \textbf{LipsNet++}, and \textbf{SmODE}, as well as the plain backbone (\textbf{HG-TD3}).
Our method, \textbf{DWS}, couples an actor-side smoothness regularizer with a training-time \emph{value window} (terminal-safe windowed $h$-step auxiliary TD targets constructed from contiguous executed segments), and an optional deployment-time \emph{execution window} that enforces local action coherence under the same horizon $h$.

\paragraph{Fairness.}
All methods use identical environment interaction budgets, network architectures, and evaluation protocols.
We evaluate each trained policy over a fixed number of episodes and report mean$\pm$std for continuous metrics.
Task-level outcomes (success/collision/lane-departure) are reported as percentages.

\subsection{CARLA: LCO (Lane-change overtaking)}
\label{app:carla_dynlane}

\subsubsection{Scenario and Termination}
\label{app:carla_dynlane_setup}

The LCO task requires the ego vehicle to overtake two NPC vehicles via lane changes while avoiding collisions and lane departures.~\cref{fig:dynlane_scene} illustrates representative camera views.

\begin{figure}[t]
  \centering
  \begin{subfigure}{0.49\linewidth}
    \centering
    \includegraphics[width=\linewidth]{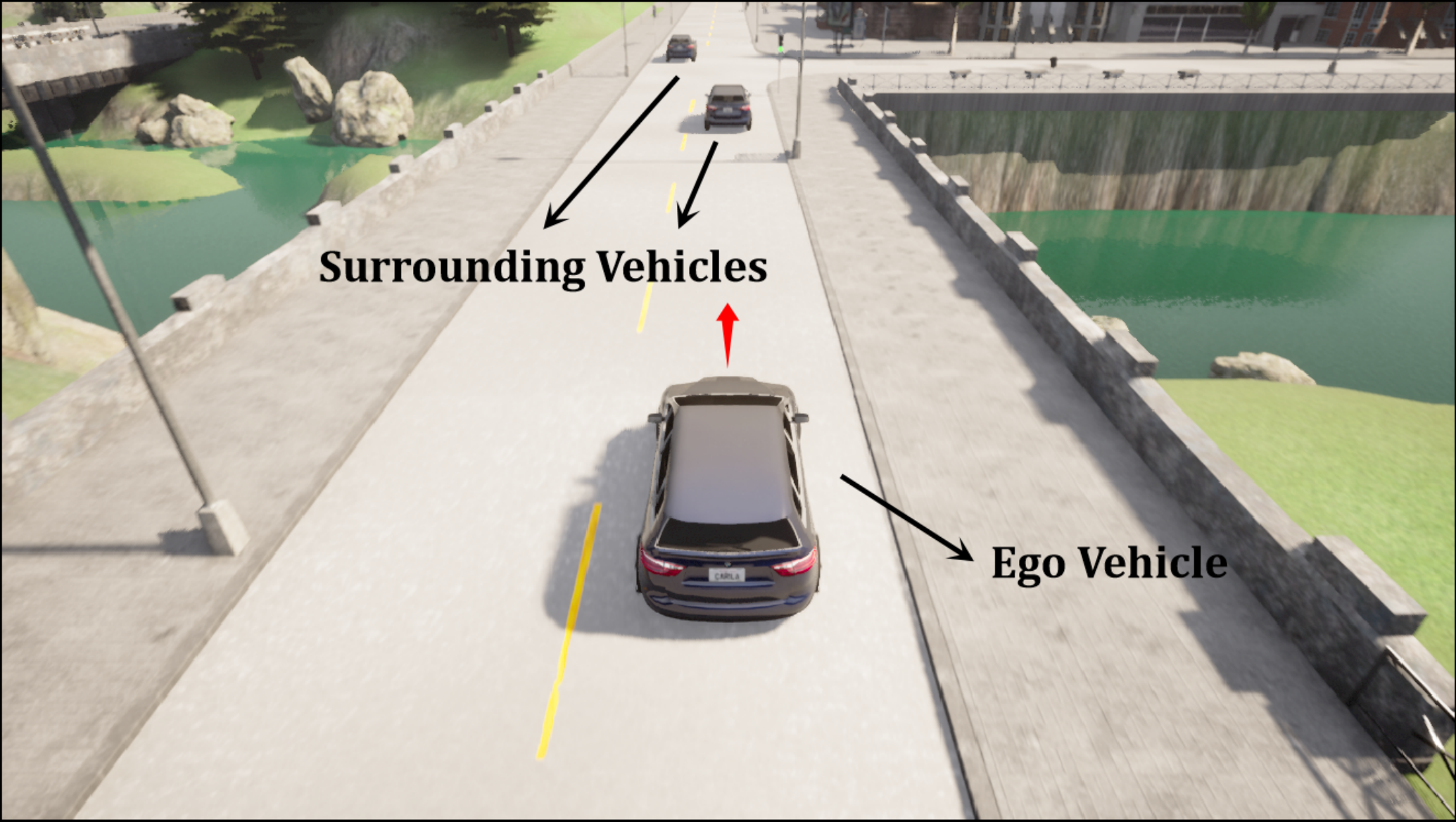}
    \caption{ View 1}
    \label{fig:dlnew1}
  \end{subfigure}
  \hfill
  \begin{subfigure}{0.49\linewidth}
    \centering
    \includegraphics[width=\linewidth]{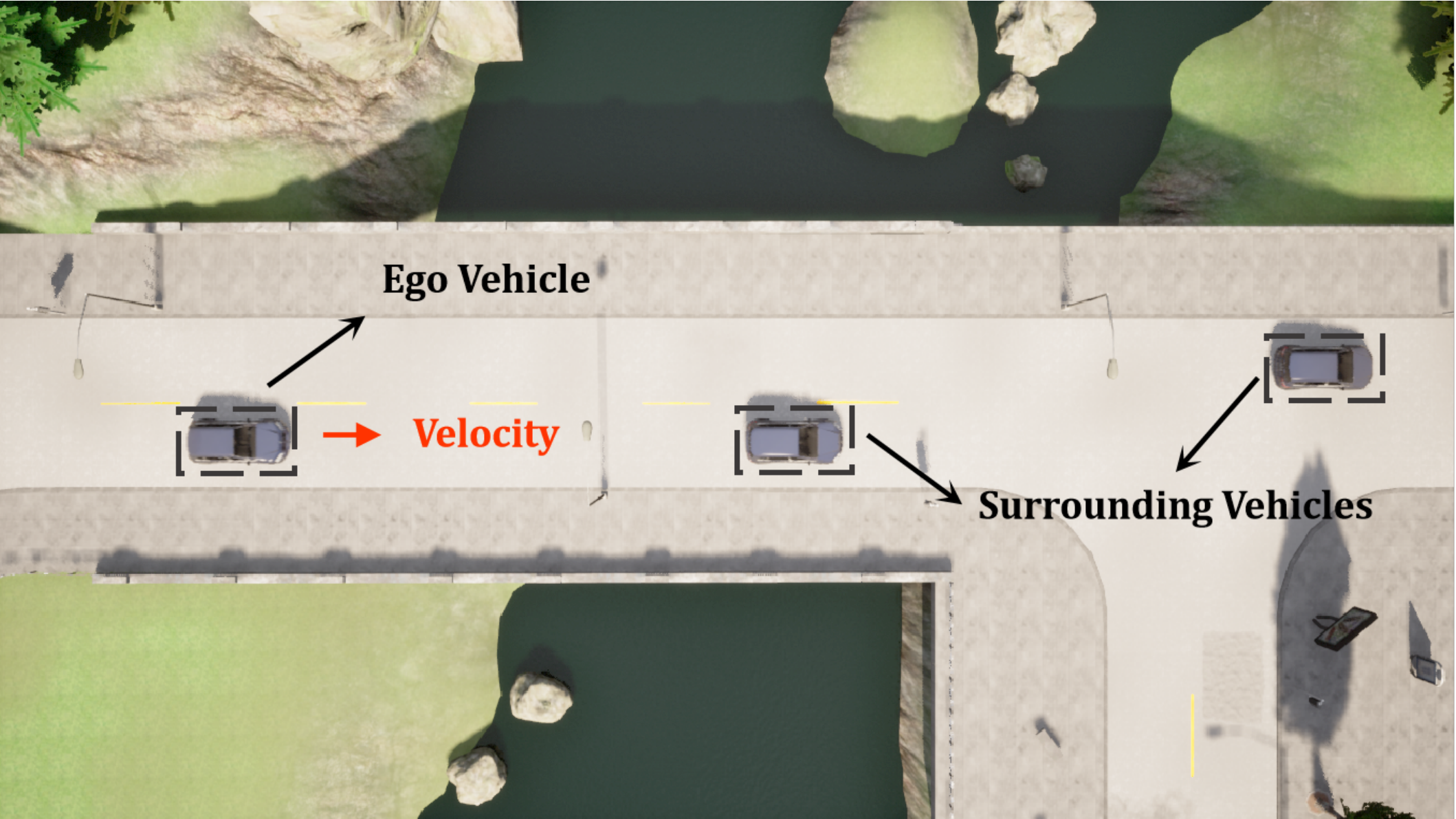}
    \caption{ View 2}
    \label{fig:dlnew2}
  \end{subfigure}
  \caption{\textbf{LCO scenario in CARLA.}}
  \label{fig:dynlane_scene}
\end{figure}

The ego vehicle follows a predefined road segment with a \emph{narrow drivable lane}, making the task highly sensitive to high-frequency steering jitter.
An episode terminates when one of the following conditions is met:
(i) \textbf{success} (the ego completes the overtaking route and reaches the goal region),
(ii) \textbf{collision} with any object/vehicle, or
(iii) \textbf{lane departure} (the ego crosses the lane boundary beyond a tolerance threshold).
Because lane departure immediately ends the episode, even small high-frequency steering jitter can cause repeated boundary violations and early termination, yielding low success despite partial progress.

\subsubsection{State and Action Spaces}
\label{app:carla_dynlane_state_action}

\paragraph{State space.}
We formulate the task as an MDP.
The state is represented by a semantic segmentation observation from the ego-centric camera.
Following common practice, we downsample the segmentation map to reduce computation while preserving driving-relevant structure.
Concretely, the state at time $t$ is a single-channel matrix of size $45 \times 80$ with normalized pixel values:
\begin{equation}
\mathbf{s}_t \in [0,1]^{45\times 80}.
\end{equation}

\paragraph{Action space.}
For LCO, the control objective is primarily lateral, so the action is the \textbf{steering command}:
\begin{equation}
a_t \in [-a_{\max}, a_{\max}],
\end{equation}
where $a_{\max}\in(0,1]$ scales the maximum steering magnitude.
Positive values correspond to right steering and negative values to left steering.
Other low-level controls (e.g., throttle/brake) are handled by the scenario controller to focus learning on stable lateral control during overtaking.

\subsubsection{NPC Speed Settings and Robustness Sweep}
\label{app:carla_dynlane_speed}

Unless otherwise specified, the main setting uses \textbf{NPC speed $5\,\mathrm{m/s}$} (V5), which is the most challenging configuration.
To test robustness, we additionally sweep NPC speeds in $\{0,1,2,3,4,5\}\,\mathrm{m/s}$ while keeping other environment settings unchanged.~\cref{fig:dl_speed_heatmap} reports success rates across speeds for all methods.

\begin{figure}[t]
  \centering
  \includegraphics[width=0.6\linewidth]{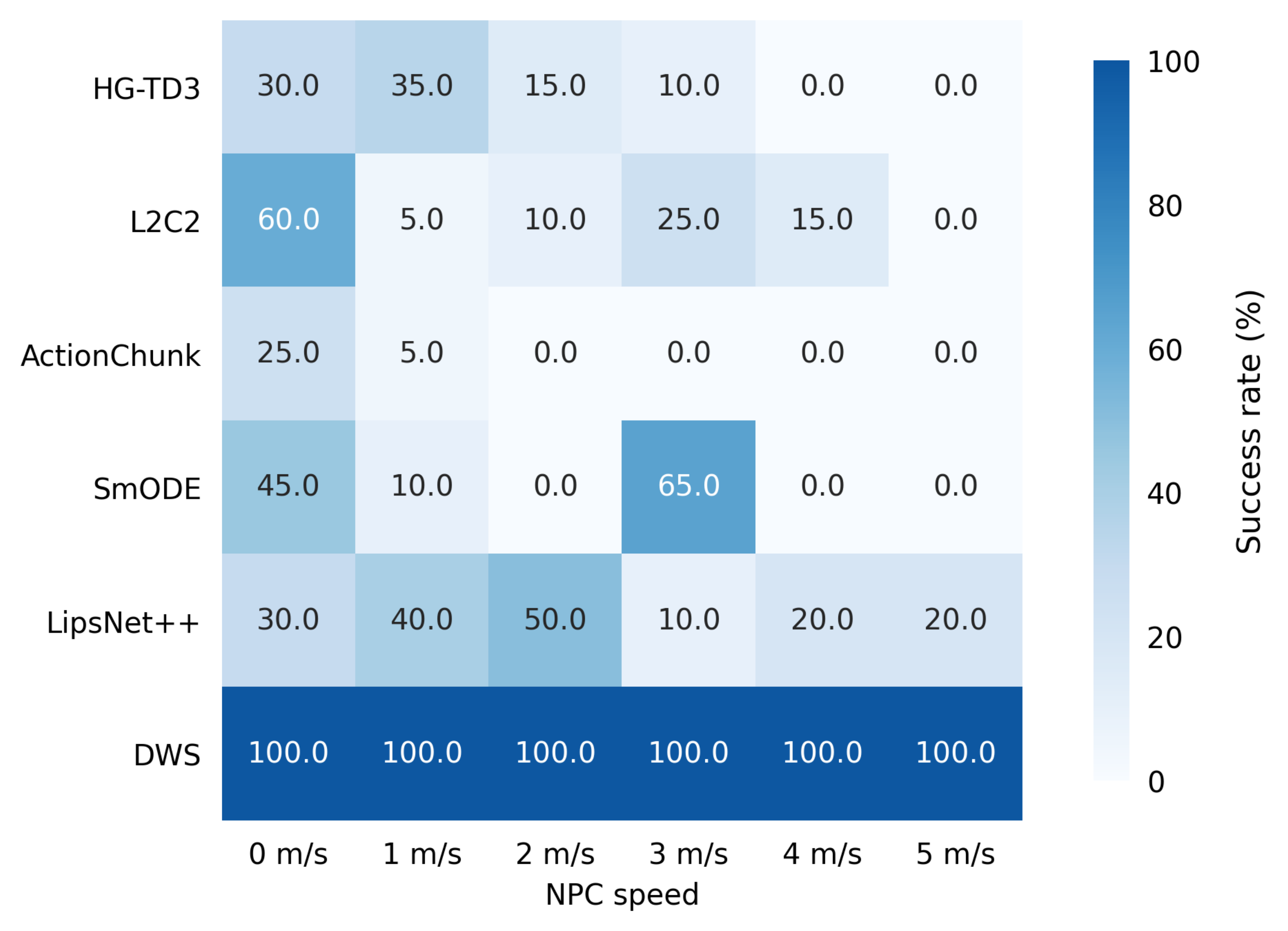}
  \caption{\textbf{LCO: robustness to NPC speed.}
  Success rate (\%) across NPC speeds $\{0,1,2,3,4,5\}\,\mathrm{m/s}$.}
  \label{fig:dl_speed_heatmap}
\end{figure}

\paragraph{Outcome rates across speeds.}
~\cref{tab:dl_rates_by_speed} reports SR/CR/BR/PC by NPC speed.
A key observation is that, under the hardest setting (V5), several baselines exhibit \textbf{0\% success rate}.
This does \emph{not} imply that these methods fail to learn any useful behavior: as shown in ~\cref{tab:dl_rates_by_speed}, at easier settings (e.g., V0), baselines can achieve non-trivial success.
Moreover, even when SR drops to zero at V5, baselines can still obtain moderate \textbf{path completion (PC)}, indicating partial progress before termination.
This is consistent with the termination condition: because lane departure ends the episode immediately, a policy that is broadly competent but exhibits persistent high-frequency steering jitter may repeatedly cross the narrow lane boundary during overtaking, causing early termination and thus near-zero SR despite non-trivial PC.


\begin{table}[t]
\centering
\caption{\textbf{Dynamic Lane}: SR/CR/BR/PC by NPC speed (V0--V5). SR: success rate, CR: collision rate, BR: beyond-road rate, PC: normalized path completion (max-normalized within each speed). All values are in \%.}
\label{tab:dl_rates_by_speed}
\resizebox{\textwidth}{!}{%
\begin{tabular}{lcccccccccccccccccccccccc}
\toprule
& \multicolumn{4}{c}{V0 (0 m/s)} & \multicolumn{4}{c}{V1 (1 m/s)} & \multicolumn{4}{c}{V2 (2 m/s)} & \multicolumn{4}{c}{V3 (3 m/s)} & \multicolumn{4}{c}{V4 (4 m/s)} & \multicolumn{4}{c}{V5 (5 m/s)} \\
\cmidrule(lr){2-5}\cmidrule(lr){6-9}\cmidrule(lr){10-13}\cmidrule(lr){14-17}\cmidrule(lr){18-21}\cmidrule(lr){22-25}
Algorithm
& SR$\uparrow$ & CR$\downarrow$ & BR$\downarrow$ & PC$\uparrow$
& SR$\uparrow$ & CR$\downarrow$ & BR$\downarrow$ & PC$\uparrow$
& SR$\uparrow$ & CR$\downarrow$ & BR$\downarrow$ & PC$\uparrow$
& SR$\uparrow$ & CR$\downarrow$ & BR$\downarrow$ & PC$\uparrow$
& SR$\uparrow$ & CR$\downarrow$ & BR$\downarrow$ & PC$\uparrow$
& SR$\uparrow$ & CR$\downarrow$ & BR$\downarrow$ & PC$\uparrow$ \\
\midrule
HG-TD3
& 30.0 & 35.0 & 35.0 & 83.5
& 35.0 & 45.0 & 20.0 & 88.9
& 15.0 & 30.0 & 55.0 & 61.7
& 10.0 & 35.0 & 55.0 & 60.6
&  0.0 & 100.0 &  0.0 & 62.9
&  0.0 & 75.0 & 25.0 & 68.0 \\
L2C2
& 60.0 & 15.0 & 25.0 & 89.3
&  5.0 & 50.0 & 45.0 & 74.5
& 10.0 & 30.0 & 60.0 & 68.4
& 25.0 & 35.0 & 40.0 & 68.5
& 15.0 & 45.0 & 40.0 & 70.9
&  0.0 & 95.0 &  5.0 & 78.6 \\
ActionChunk
& 25.0 & 60.0 & 15.0 & 85.7
&  5.0 & 75.0 & 20.0 & 81.9
&  0.0 & 70.0 & 30.0 & 78.6
&  0.0 & 70.0 & 30.0 & 77.6
&  0.0 & 80.0 & 20.0 & 81.1
&  0.0 & 50.0 & 50.0 & 38.4 \\
SmODE
& 45.0 & 55.0 &  0.0 & 87.5
& 10.0 & 40.0 & 50.0 & 87.1
&  0.0 & 80.0 & 20.0 & 77.2
& 65.0 & 35.0 &  0.0 & 94.6
&  0.0 & 85.0 & 15.0 & 84.3
&  0.0 & 100.0 &  0.0 & 87.1 \\
LipsNet++
& 30.0 & 70.0 &  0.0 & 80.4
& 40.0 & 30.0 & 30.0 & 93.7
& 50.0 & 20.0 & 30.0 & 95.9
& 10.0 & 90.0 &  0.0 & 90.9
& 20.0 & 65.0 & 15.0 & 92.5
& 20.0 & 55.0 & 25.0 & 74.3 \\
DWS (Ours)
& \textbf{100.0} & \textbf{0.0} & \textbf{0.0} & \textbf{100.0}
& \textbf{100.0} & \textbf{0.0} & \textbf{0.0} & \textbf{100.0}
& \textbf{100.0} & \textbf{0.0} & \textbf{0.0} & \textbf{100.0}
& \textbf{100.0} & \textbf{0.0} & \textbf{0.0} & \textbf{100.0}
& \textbf{100.0} & \textbf{0.0} & \textbf{0.0} & \textbf{100.0}
& \textbf{100.0} & \textbf{0.0} & \textbf{0.0} & \textbf{100.0} \\
\bottomrule
\end{tabular}%
}
\end{table}

\subsubsection{Metrics}
\label{app:carla_dynlane_metrics}

We report both \textbf{task outcome} metrics and \textbf{comfort/smoothness} metrics.
Task performance includes success rate (SR), collision rate (CR), boundary-violation rate (BR), and path completion (PC).
Comfort/smoothness metrics quantify temporal oscillations in executed actions and vehicle motion, including action smoothness, yaw smoothness/yaw rate, acceleration statistics (mean and tail percentile), and sideslip frequency.
All smoothness metrics are computed from the \emph{executed} controls applied to the environment, ensuring consistent evaluation under windowed execution.

\paragraph{Comfort metrics by speed.}
To provide a detailed view beyond the main-table results at V5, ~\cref{tab:comfort_dl_v0}--\cref{tab:comfort_dl_v5} report comfort/smoothness metrics across V0--V5.
Overall, DWS consistently reduces action/yaw oscillations and tail-risk motion statistics (e.g., AbsAcc P95 and sideslip frequency) across all traffic speeds.


\begin{table}[t]
\centering
\caption{Comfort metrics on Dynamic Lane (NPC speed V0), reported as mean$\pm$std. Lower is better ($\downarrow$).}
\label{tab:comfort_dl_v0}
\resizebox{\textwidth}{!}{%
\begin{tabular}{lcccccc}
\toprule
Algorithm & Action smoothness $\downarrow$ & Yaw smoothness $\downarrow$ & Yaw rate $\downarrow$ & AbsAcc mean $\downarrow$ & AbsAcc P95 $\downarrow$ & Sideslip $>3^\circ$ $\downarrow$ \\
\midrule
HG-TD3     & 0.0112$\pm$0.0012 & 0.0036$\pm$0.0004 & 0.0888$\pm$0.0092 & 0.820$\pm$0.118 & 5.857$\pm$0.539 & 0.647$\pm$0.060 \\
L2C2    & 0.0077$\pm$0.0028 & 0.0024$\pm$0.0010 & 0.0591$\pm$0.0240 & 0.643$\pm$0.115 & 4.661$\pm$0.543 & 0.347$\pm$0.162 \\
ActionChunk & 0.0792$\pm$0.0150 & 0.0015$\pm$0.0003 & 0.0359$\pm$0.0071 & 0.563$\pm$0.089 & 3.761$\pm$0.246 & 0.203$\pm$0.043 \\
SmODE   & 0.0094$\pm$0.0018 & 0.0038$\pm$0.0007 & 0.0941$\pm$0.0186 & 0.872$\pm$0.185 & 5.192$\pm$0.709 & 0.630$\pm$0.071 \\
LipsNet++ & 0.0108$\pm$0.0022 & 0.0034$\pm$0.0011 & 0.0843$\pm$0.0280 & 1.110$\pm$0.134 & 5.296$\pm$0.506 & 0.485$\pm$0.089 \\
DWS (Ours)     & \textbf{0.0029$\pm$0.0005} & \textbf{0.0014$\pm$0.0002} & \textbf{0.0341$\pm$0.0048} & \textbf{0.332$\pm$0.014} & \textbf{2.780$\pm$0.247} & \textbf{0.115$\pm$0.041} \\
\bottomrule
\end{tabular}%
}
\end{table}

\begin{table}[t]
\centering
\caption{Comfort metrics on Dynamic Lane (NPC speed V1), reported as mean$\pm$std. Lower is better ($\downarrow$).}
\label{tab:comfort_dl_v1}
\resizebox{\textwidth}{!}{%
\begin{tabular}{lcccccc}
\toprule
Algorithm & Action smoothness $\downarrow$ & Yaw smoothness $\downarrow$ & Yaw rate $\downarrow$ & AbsAcc mean $\downarrow$ & AbsAcc P95 $\downarrow$ & Sideslip $>3^\circ$ $\downarrow$ \\
\midrule
HG-TD3     & 0.0077$\pm$0.0015 & 0.0024$\pm$0.0005 & 0.0609$\pm$0.0123 & 0.631$\pm$0.084 & 4.330$\pm$0.362 & 0.392$\pm$0.074 \\
L2C2    & 0.0183$\pm$0.0026 & 0.0041$\pm$0.0009 & 0.1026$\pm$0.0225 & 0.908$\pm$0.066 & 6.303$\pm$0.375 & 0.672$\pm$0.077 \\
ActionChunk & 0.0771$\pm$0.0135 & 0.0016$\pm$0.0002 & 0.0391$\pm$0.0059 & 0.508$\pm$0.040 & 3.758$\pm$0.266 & 0.187$\pm$0.029 \\
SmODE   & 0.0041$\pm$0.0006 & 0.0021$\pm$0.0003 & 0.0528$\pm$0.0068 & 0.542$\pm$0.084 & 3.923$\pm$0.162 & 0.294$\pm$0.048 \\
LipsNet++ & 0.0132$\pm$0.0015 & 0.0038$\pm$0.0006 & 0.0952$\pm$0.0141 & 0.905$\pm$0.075 & 5.390$\pm$0.350 & 0.451$\pm$0.070 \\
DWS (Ours)     & \textbf{0.0032$\pm$0.0005} & \textbf{0.0015$\pm$0.0001} & \textbf{0.0378$\pm$0.0037} & \textbf{0.324$\pm$0.014} & \textbf{2.547$\pm$0.291} & \textbf{0.173$\pm$0.026} \\
\bottomrule
\end{tabular}%
}
\end{table}

\begin{table}[t]
\centering
\caption{Comfort metrics on Dynamic Lane (NPC speed V2), reported as mean$\pm$std. Lower is better ($\downarrow$).}
\label{tab:comfort_dl_v2}
\resizebox{\textwidth}{!}{%
\begin{tabular}{lcccccc}
\toprule
Algorithm & Action smoothness $\downarrow$ & Yaw smoothness $\downarrow$ & Yaw rate $\downarrow$ & AbsAcc mean $\downarrow$ & AbsAcc P95 $\downarrow$ & Sideslip $>3^\circ$ $\downarrow$ \\
\midrule
HG-TD3     & 0.0048$\pm$0.0008 & 0.0017$\pm$0.0004 & 0.0432$\pm$0.0105 & 0.498$\pm$0.042 & 4.019$\pm$0.168 & 0.254$\pm$0.043 \\
L2C2    & 0.0082$\pm$0.0012 & 0.0025$\pm$0.0005 & 0.0629$\pm$0.0136 & 0.688$\pm$0.060 & 4.994$\pm$0.368 & 0.438$\pm$0.091 \\
ActionChunk & 0.0795$\pm$0.0148 & 0.0011$\pm$0.0002 & 0.0275$\pm$0.0043 & 0.470$\pm$0.032 & 3.652$\pm$0.245 & 0.173$\pm$0.028 \\
SmODE   & 0.0093$\pm$0.0024 & 0.0028$\pm$0.0005 & 0.0693$\pm$0.0127 & 0.783$\pm$0.096 & 5.172$\pm$0.524 & 0.478$\pm$0.104 \\
LipsNet++ & 0.0083$\pm$0.0008 & 0.0029$\pm$0.0003 & 0.0726$\pm$0.0068 & 0.774$\pm$0.078 & 4.797$\pm$0.269 & 0.333$\pm$0.034 \\
DWS (Ours)     & \textbf{0.0010$\pm$0.0002} & \textbf{0.0006$\pm$0.0001} & \textbf{0.0145$\pm$0.0022} & \textbf{0.230$\pm$0.004} & \textbf{1.035$\pm$0.072} & \textbf{0.042$\pm$0.011} \\
\bottomrule
\end{tabular}%
}
\end{table}

\begin{table}[t]
\centering
\caption{Comfort metrics on Dynamic Lane (NPC speed V3), reported as mean$\pm$std. Lower is better ($\downarrow$).}
\label{tab:comfort_dl_v3}
\resizebox{\textwidth}{!}{%
\begin{tabular}{lcccccc}
\toprule
Algorithm & Action smoothness $\downarrow$ & Yaw smoothness $\downarrow$ & Yaw rate $\downarrow$ & AbsAcc mean $\downarrow$ & AbsAcc P95 $\downarrow$ & Sideslip $>3^\circ$ $\downarrow$ \\
\midrule
HG-TD3     & 0.0104$\pm$0.0020 & 0.0032$\pm$0.0006 & 0.0789$\pm$0.0141 & 0.774$\pm$0.080 & 5.522$\pm$0.752 & 0.594$\pm$0.111 \\
L2C2    & 0.0087$\pm$0.0019 & 0.0023$\pm$0.0005 & 0.0564$\pm$0.0134 & 0.662$\pm$0.089 & 4.818$\pm$0.357 & 0.413$\pm$0.090 \\
ActionChunk & 0.0833$\pm$0.0108 & 0.0012$\pm$0.0001 & 0.0292$\pm$0.0036 & 0.465$\pm$0.025 & 3.625$\pm$0.225 & 0.178$\pm$0.038 \\
SmODE   & 0.0050$\pm$0.0017 & 0.0022$\pm$0.0004 & 0.0543$\pm$0.0091 & 0.534$\pm$0.095 & 4.011$\pm$0.533 & 0.354$\pm$0.072 \\
LipsNet++ & 0.0079$\pm$0.0014 & 0.0031$\pm$0.0004 & 0.0770$\pm$0.0110 & 0.780$\pm$0.072 & 4.865$\pm$0.481 & 0.395$\pm$0.072 \\
DWS (Ours)     & \textbf{0.0010$\pm$0.0001} & \textbf{0.0008$\pm$0.0001} & \textbf{0.0194$\pm$0.0023} & \textbf{0.208$\pm$0.006} & \textbf{0.866$\pm$0.097} & \textbf{0.106$\pm$0.058} \\
\bottomrule
\end{tabular}%
}
\end{table}

\begin{table}[t]
\centering
\caption{Comfort metrics on Dynamic Lane (NPC speed V4), reported as mean$\pm$std. Lower is better ($\downarrow$).}
\label{tab:comfort_dl_v4}
\resizebox{\textwidth}{!}{%
\begin{tabular}{lcccccc}
\toprule
Algorithm & Action smoothness $\downarrow$ & Yaw smoothness $\downarrow$ & Yaw rate $\downarrow$ & AbsAcc mean $\downarrow$ & AbsAcc P95 $\downarrow$ & Sideslip $>3^\circ$ $\downarrow$ \\
\midrule
HG-TD3     & 0.0039$\pm$0.0014 & 0.0013$\pm$0.0004 & 0.0324$\pm$0.0109 & 0.469$\pm$0.058 & 3.688$\pm$0.313 & 0.215$\pm$0.072 \\
L2C2    & 0.0077$\pm$0.0007 & 0.0024$\pm$0.0003 & 0.0594$\pm$0.0079 & 0.607$\pm$0.066 & 4.535$\pm$0.308 & 0.422$\pm$0.088 \\
ActionChunk & 0.0788$\pm$0.0096 & 0.0012$\pm$0.0002 & 0.0297$\pm$0.0051 & 0.435$\pm$0.033 & 3.475$\pm$0.160 & 0.161$\pm$0.042 \\
SmODE   & 0.0061$\pm$0.0007 & 0.0020$\pm$0.0003 & 0.0512$\pm$0.0068 & 0.550$\pm$0.043 & 3.837$\pm$0.256 & 0.306$\pm$0.043 \\
LipsNet++ & 0.0106$\pm$0.0013 & 0.0037$\pm$0.0003 & 0.0915$\pm$0.0074 & 0.843$\pm$0.054 & 5.246$\pm$0.395 & 0.456$\pm$0.063 \\
DWS (Ours)     & \textbf{0.0014$\pm$0.0002} & \textbf{0.0007$\pm$0.0001} & \textbf{0.0182$\pm$0.0016} & \textbf{0.228$\pm$0.016} & \textbf{1.309$\pm$0.215} & \textbf{0.081$\pm$0.020} \\
\bottomrule
\end{tabular}%
}
\end{table}

\begin{table}[t]
\centering
\caption{Comfort metrics on Dynamic Lane (NPC speed V5), reported as mean$\pm$std. Lower is better ($\downarrow$).}
\label{tab:comfort_dl_v5}
\resizebox{\textwidth}{!}{%
\begin{tabular}{lcccccc}
\toprule
Algorithm & Action smoothness $\downarrow$ & Yaw smoothness $\downarrow$ & Yaw rate $\downarrow$ & AbsAcc mean $\downarrow$ & AbsAcc P95 $\downarrow$ & Sideslip $>3^\circ$ $\downarrow$ \\
\midrule
HG-TD3     & 0.0068$\pm$0.0022 & 0.0023$\pm$0.0006 & 0.0580$\pm$0.0160 & 0.562$\pm$0.095 & 4.084$\pm$0.553 & 0.348$\pm$0.105 \\
L2C2    & 0.0085$\pm$0.0026 & 0.0027$\pm$0.0009 & 0.0678$\pm$0.0225 & 0.632$\pm$0.132 & 4.734$\pm$0.887 & 0.428$\pm$0.162 \\
ActionChunk & 0.1024$\pm$0.0151 & 0.0012$\pm$0.0004 & 0.0299$\pm$0.0089 & 0.525$\pm$0.027 & 4.113$\pm$0.171 & 0.315$\pm$0.074 \\
SmODE   & 0.0072$\pm$0.0019 & 0.0027$\pm$0.0006 & 0.0667$\pm$0.0145 & 0.627$\pm$0.097 & 4.504$\pm$0.775 & 0.423$\pm$0.116 \\
LipsNet++ & 0.0124$\pm$0.0019 & 0.0035$\pm$0.0008 & 0.0871$\pm$0.0196 & 0.833$\pm$0.077 & 5.257$\pm$0.570 & 0.413$\pm$0.087 \\
DWS (Ours)     & \textbf{0.0014$\pm$0.0002} & \textbf{0.0007$\pm$0.0001} & \textbf{0.0187$\pm$0.0022} & \textbf{0.211$\pm$0.015} & \textbf{1.195$\pm$0.144} & \textbf{0.071$\pm$0.019} \\
\bottomrule
\end{tabular}%
}
\end{table}

\subsubsection{Training Protocol and Evaluation}
\label{app:carla_dynlane_protocol}

Training follows the HG-TD3 pipeline with off-policy replay.
All methods are trained for \textbf{300 training rounds} under the same interaction budget, network architecture, and optimization hyperparameters (unless otherwise noted).
We periodically evaluate the current policy (every fixed number of episodes) and record evaluation return throughout training, as shown in~\cref{fig:dynlane_eval_return}.
For the final comparison, we evaluate each method over a fixed set of episodes under the main NPC speed (V5, $5\,\mathrm{m/s}$) and report the aggregated statistics in ~\cref{tab:dynlane_v5_main} in the main paper.

\subsubsection{Additional Discussion: Why Smoothness Matters Here}
\label{app:carla_dynlane_why_smooth}

LCO places an unusually strong premium on smooth control.
The lane is narrow and overtaking requires sustained lateral tracking with minimal margin.
If the learned policy exhibits high-frequency steering oscillations, the ego vehicle can ``wobble'' laterally, repeatedly triggering lane departures and early terminations.
This explains why some baselines may show partial progress (e.g., non-trivial PC) yet still achieve near-zero success in the hardest traffic setting: small but persistent action jitter can prevent completing a full overtaking maneuver even when the policy is broadly competent.
In contrast, DWS reduces high-frequency oscillations in steering and steering increments (~\cref{fig:app_dynlane_pairwise_traj}), improving stability and enabling consistent completion without boundary violations.

\begin{figure*}[t]
  \centering
  \begin{subfigure}{0.49\linewidth}
    \centering
    \includegraphics[width=\linewidth]{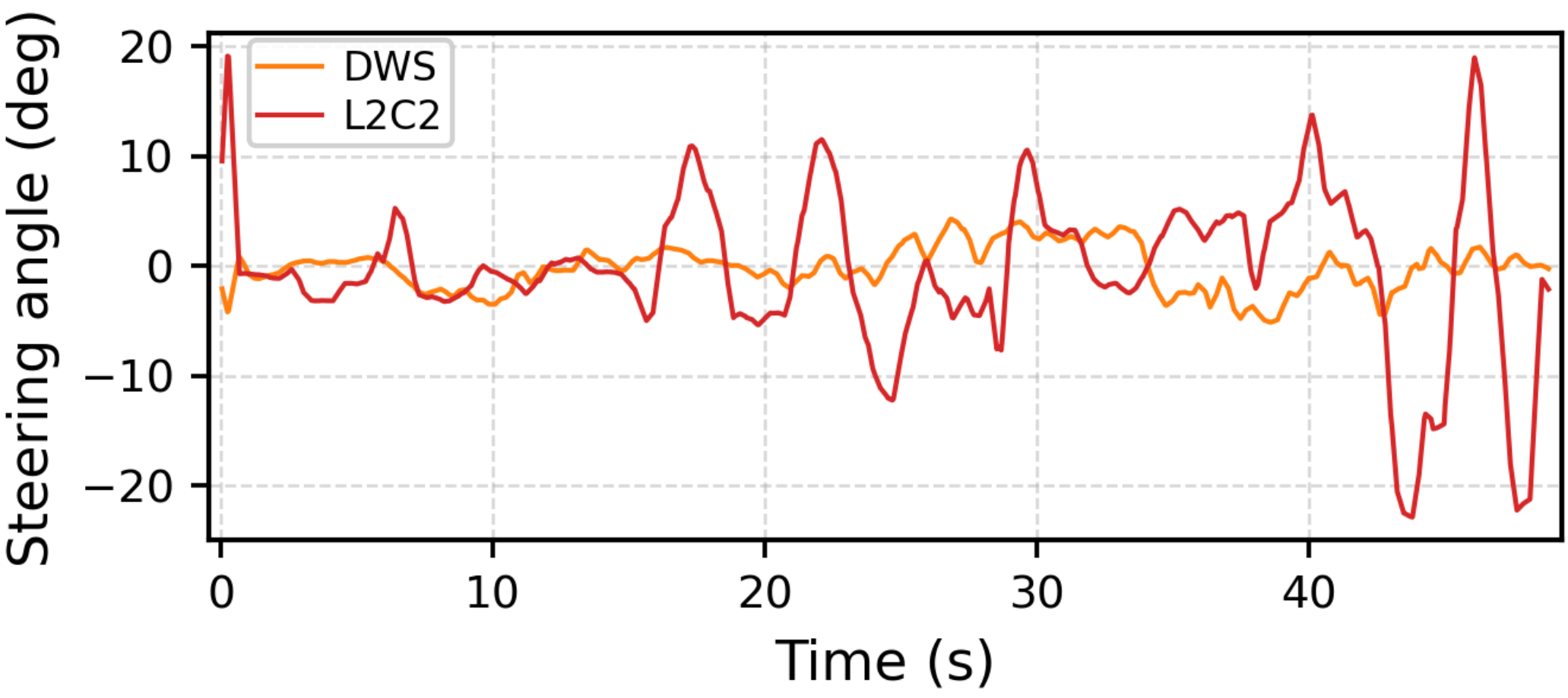}
    \vspace{1.5mm}
    \includegraphics[width=\linewidth]{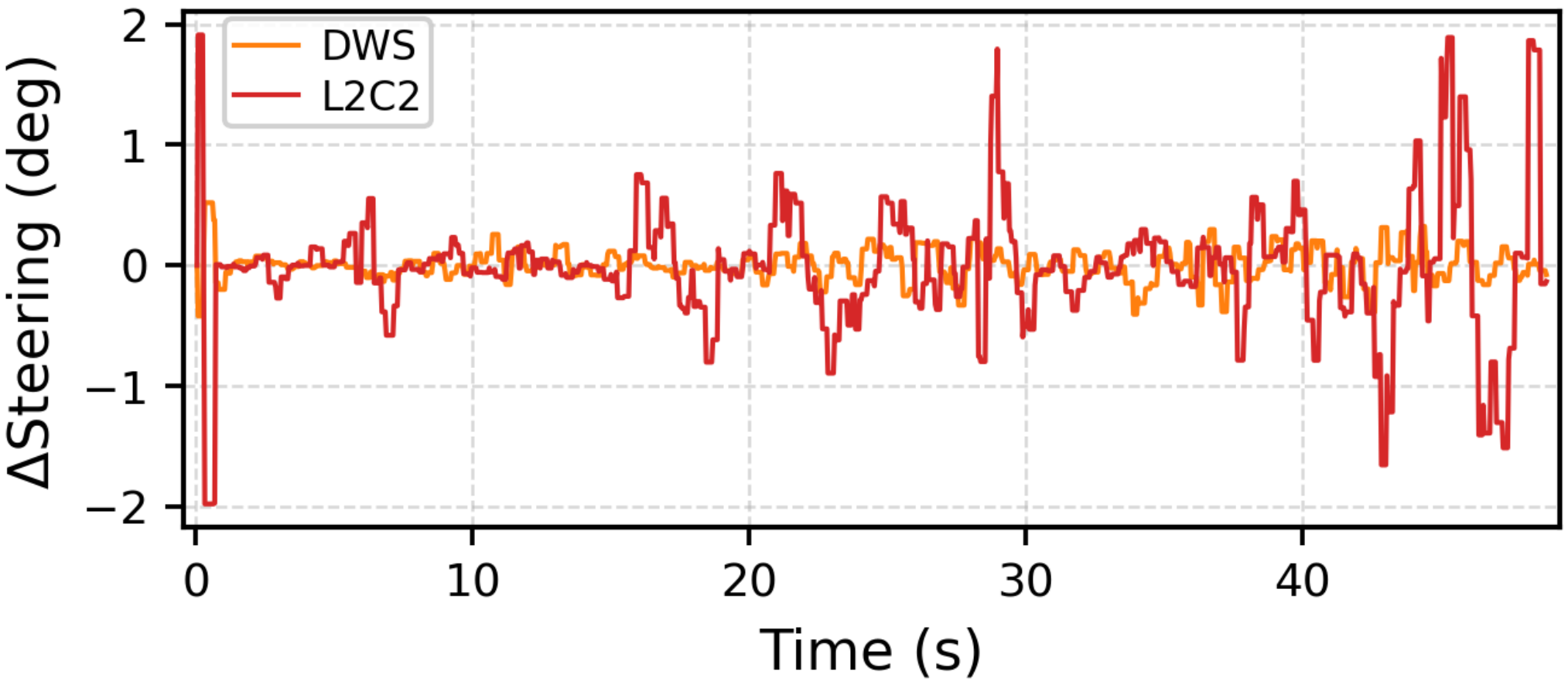}
    \caption{ DWS vs L2C2}
  \end{subfigure}
  \hfill
  \begin{subfigure}{0.49\linewidth}
    \centering
    \includegraphics[width=\linewidth]{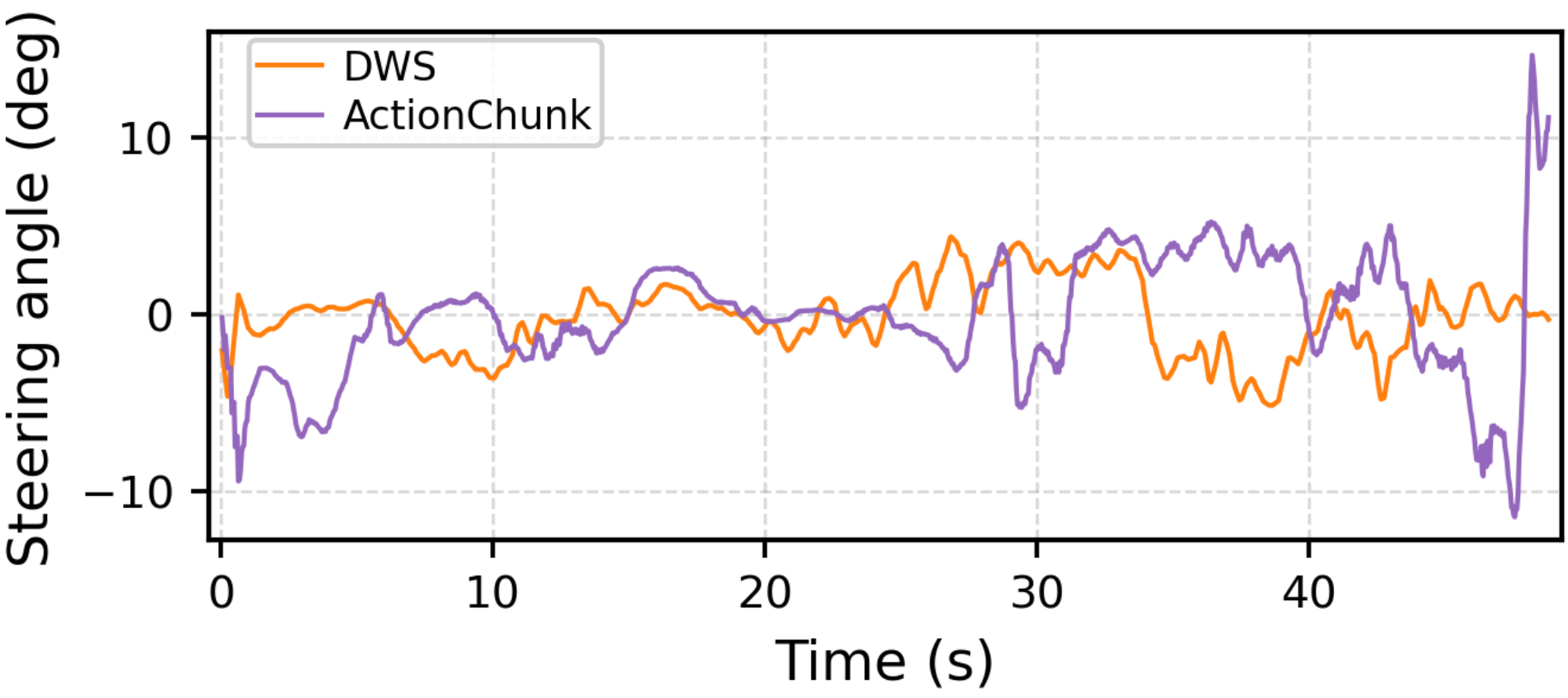}
    \vspace{1.5mm}
    \includegraphics[width=\linewidth]{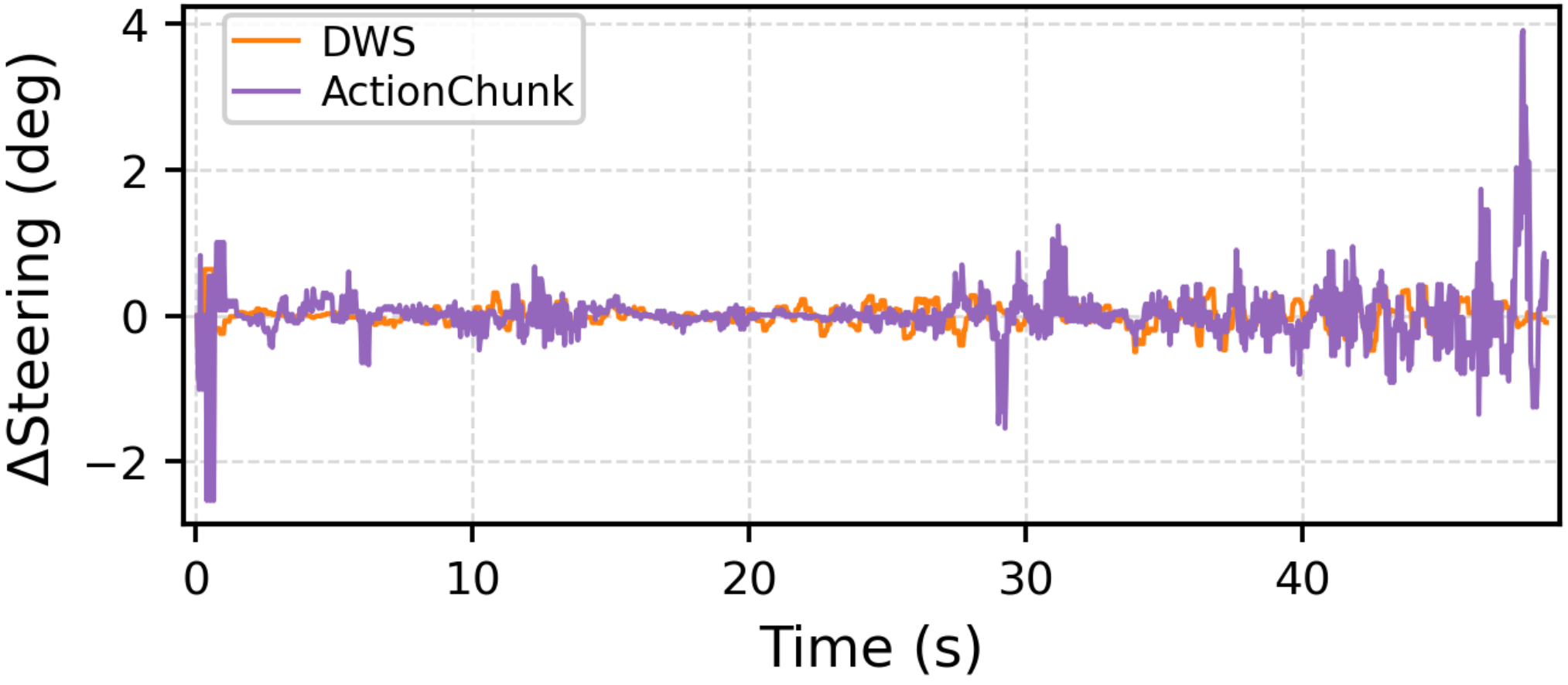}
    \caption{ DWS vs ActionChunk}
  \end{subfigure}

  \vspace{2mm}

  \begin{subfigure}{0.49\linewidth}
    \centering
    \includegraphics[width=\linewidth]{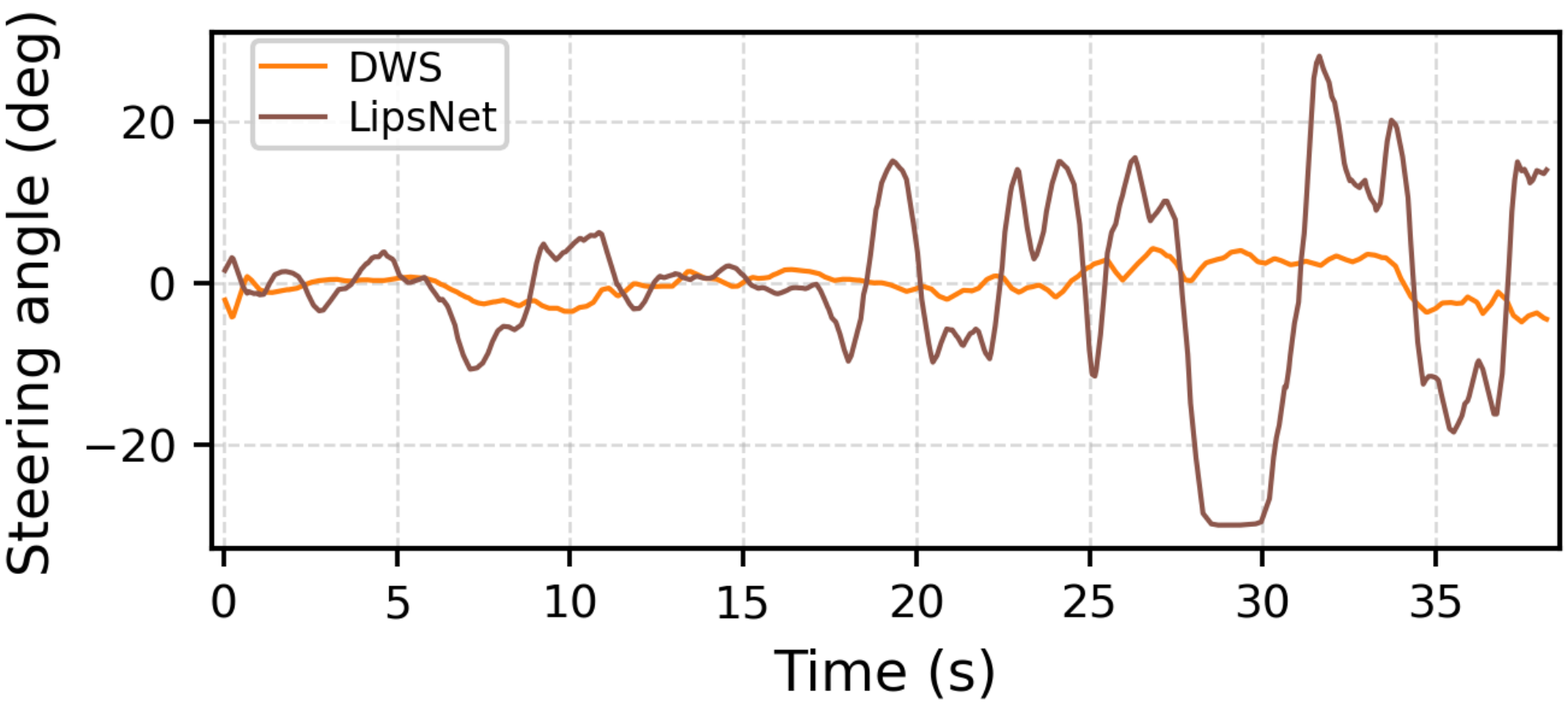}
    \vspace{1.5mm}
    \includegraphics[width=\linewidth]{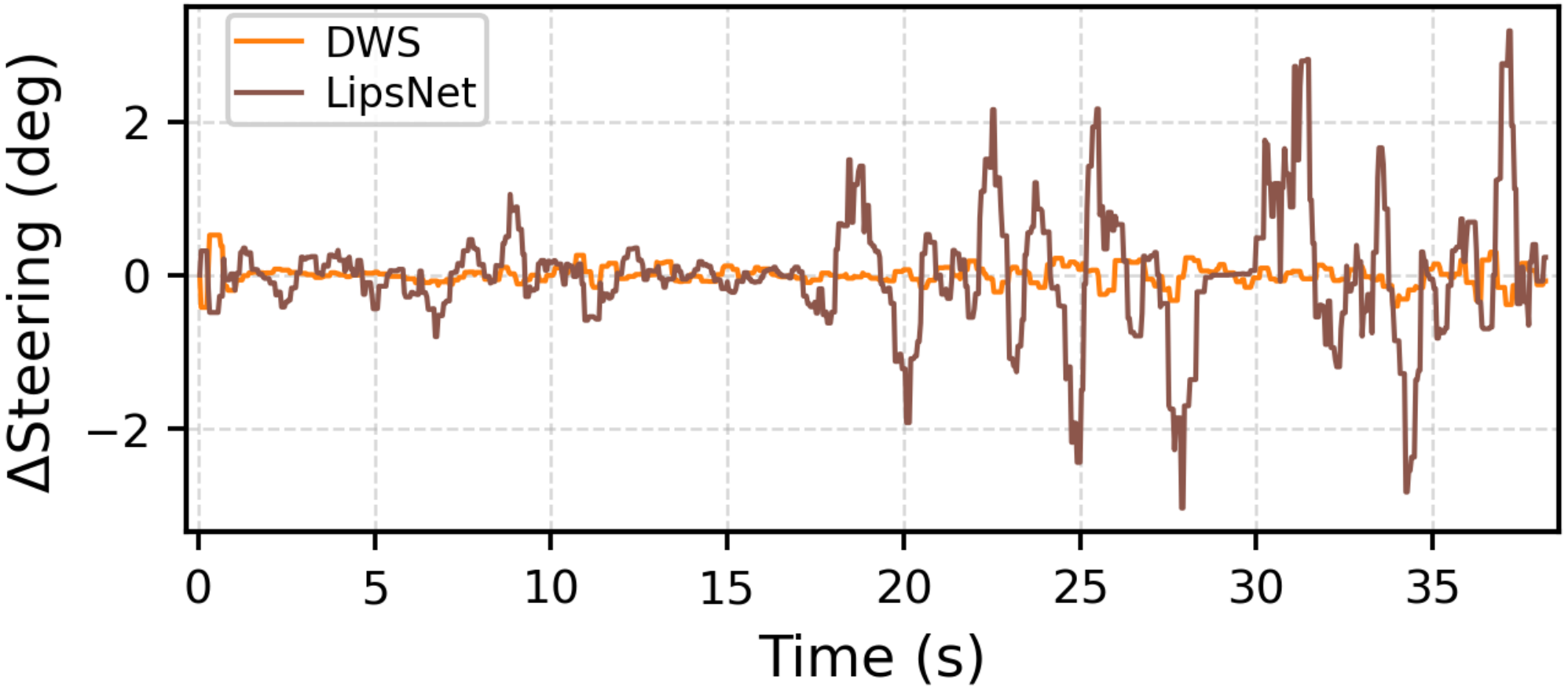}
    \caption{ DWS vs LipsNet++}
  \end{subfigure}
  \hfill
  \begin{subfigure}{0.49\linewidth}
    \centering
    \includegraphics[width=\linewidth]{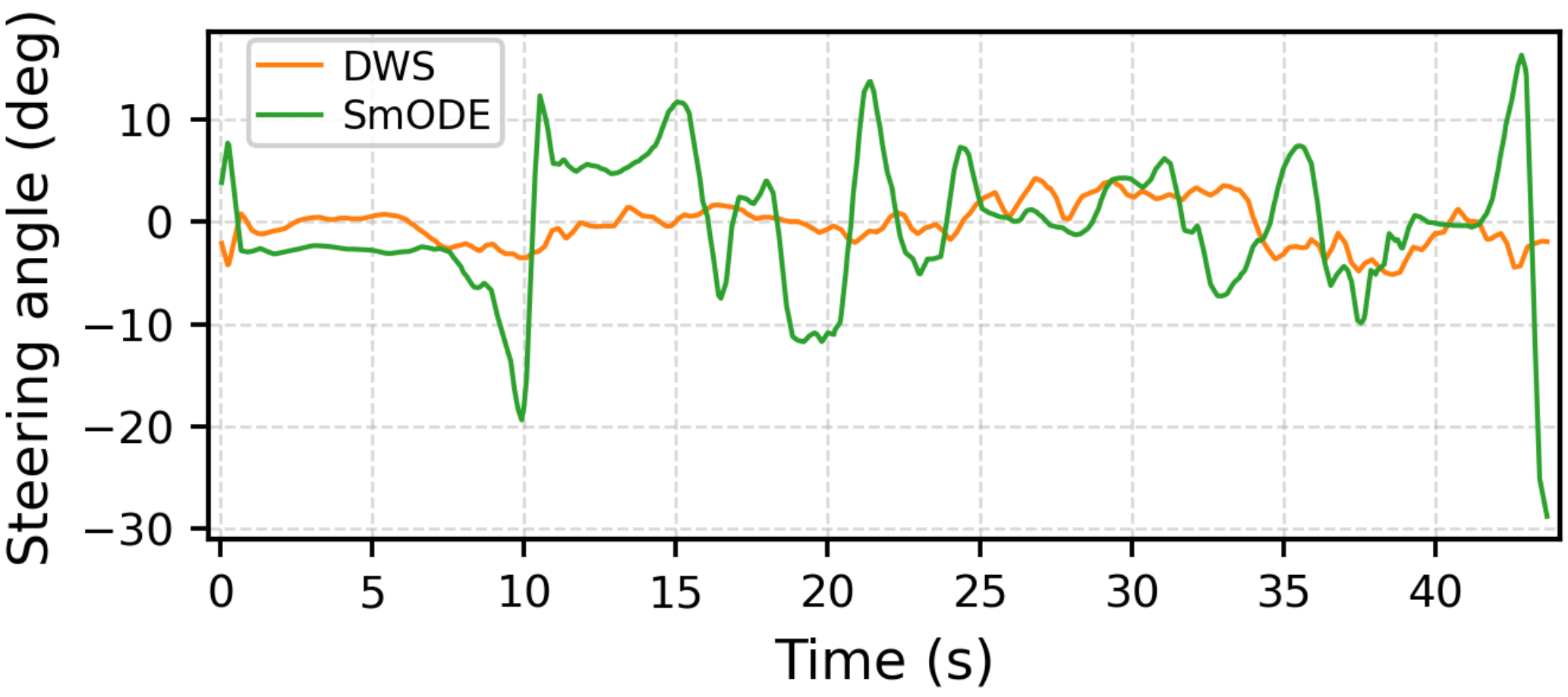}
    \vspace{1.5mm}
    \includegraphics[width=\linewidth]{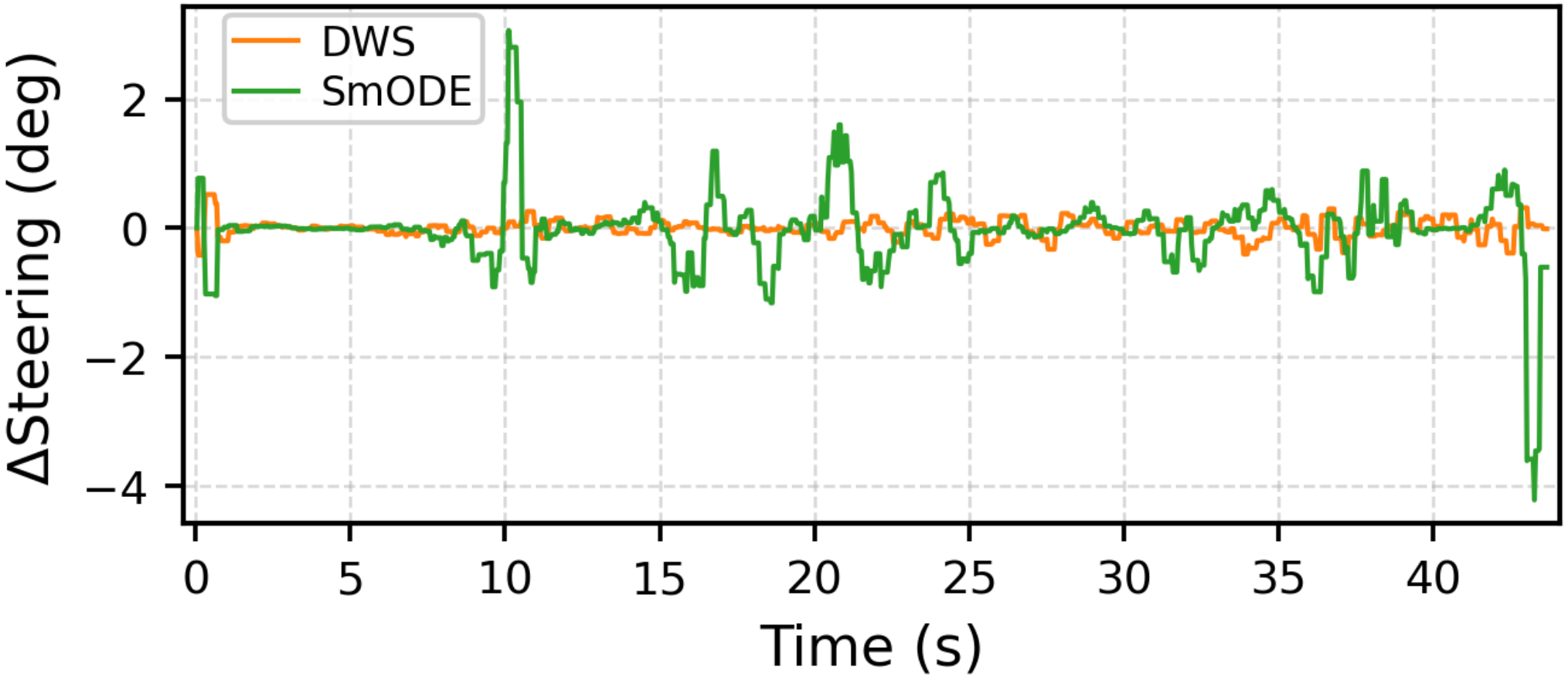}
    \caption{ DWS vs SmODE}
  \end{subfigure}

  \caption{\textbf{Pairwise steering trajectories in LCO (appendix).}
  For each baseline, we show steering angle (top) and steering increment $\Delta$steer (bottom) from representative successful episodes under identical plotting/smoothing settings.
  Compared to each baseline, DWS consistently suppresses high-frequency oscillations in $\Delta$steer, which reduces lateral ``wobbling'' and helps avoid boundary-violation terminations in the narrow-lane overtaking route.}
  \label{fig:app_dynlane_pairwise_traj}
\end{figure*}

\FloatBarrier
\clearpage

\subsection{CARLA: AEB (Autonomous Emergency Braking)}
\label{app:carla_aeb}

\subsubsection{Scenario and Termination}
We specify the AEB scenario in CARLA Town01 as follows.
The ego vehicle starts from a fixed spawn point $(x,y)=(338.5, 190.0)$ and cruises at a target speed of $60\,\mathrm{km/h}$ ($16.7\,\mathrm{m/s}$).
A pedestrian is spawned ahead at $(x,y)=(331.0, 260.0)$ and walks laterally with a constant speed of $1.0\,\mathrm{m/s}$ (WalkerControl).
Two static obstacle vehicles are also placed ahead at $(x,y)=(335.0,248.0)$ and $(335.0,255.0)$.

An episode terminates when any of the following occurs:
(i) \textbf{collision} (triggered by the collision sensor);
(ii) \textbf{finish} after the pedestrian has fully passed (defined as pedestrian $x>341$) and the ego reaches a terminal longitudinal position ($y>265$);
or (iii) reaching the episode horizon.
Lane departure is not considered in AEB, as we disable lane-departure termination for this scenario.

\begin{figure}[!tb]
  \centering
  \begin{subfigure}{0.49\linewidth}
    \centering
    \includegraphics[width=\linewidth,trim=0 2pt 0 2pt,clip]{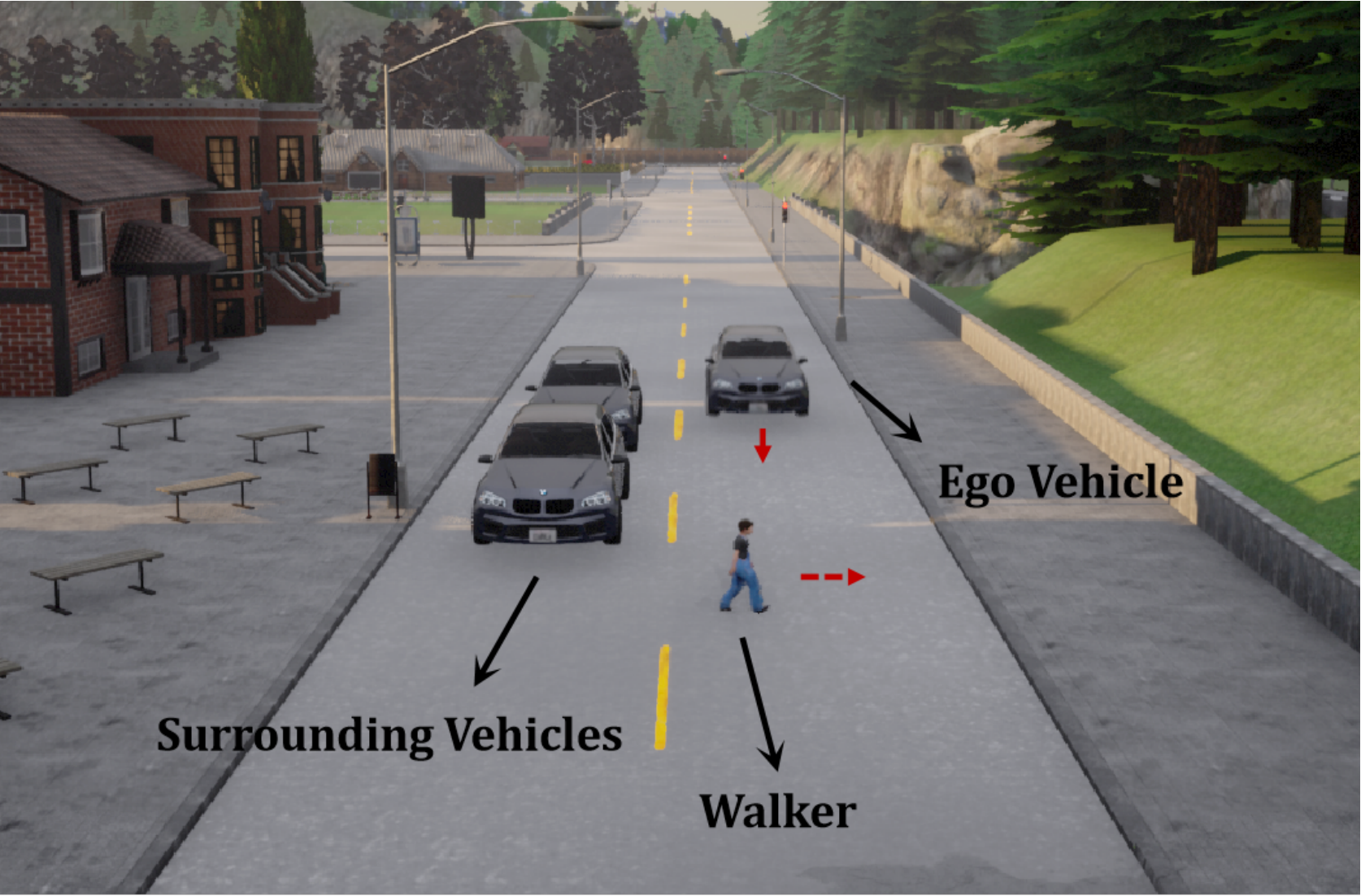}
    \caption{ View 1}
    \label{fig:app_aebnew1}
  \end{subfigure}
  \hfill
  \begin{subfigure}{0.49\linewidth}
    \centering
    \includegraphics[width=\linewidth,trim=0 10pt 0 10pt,clip]{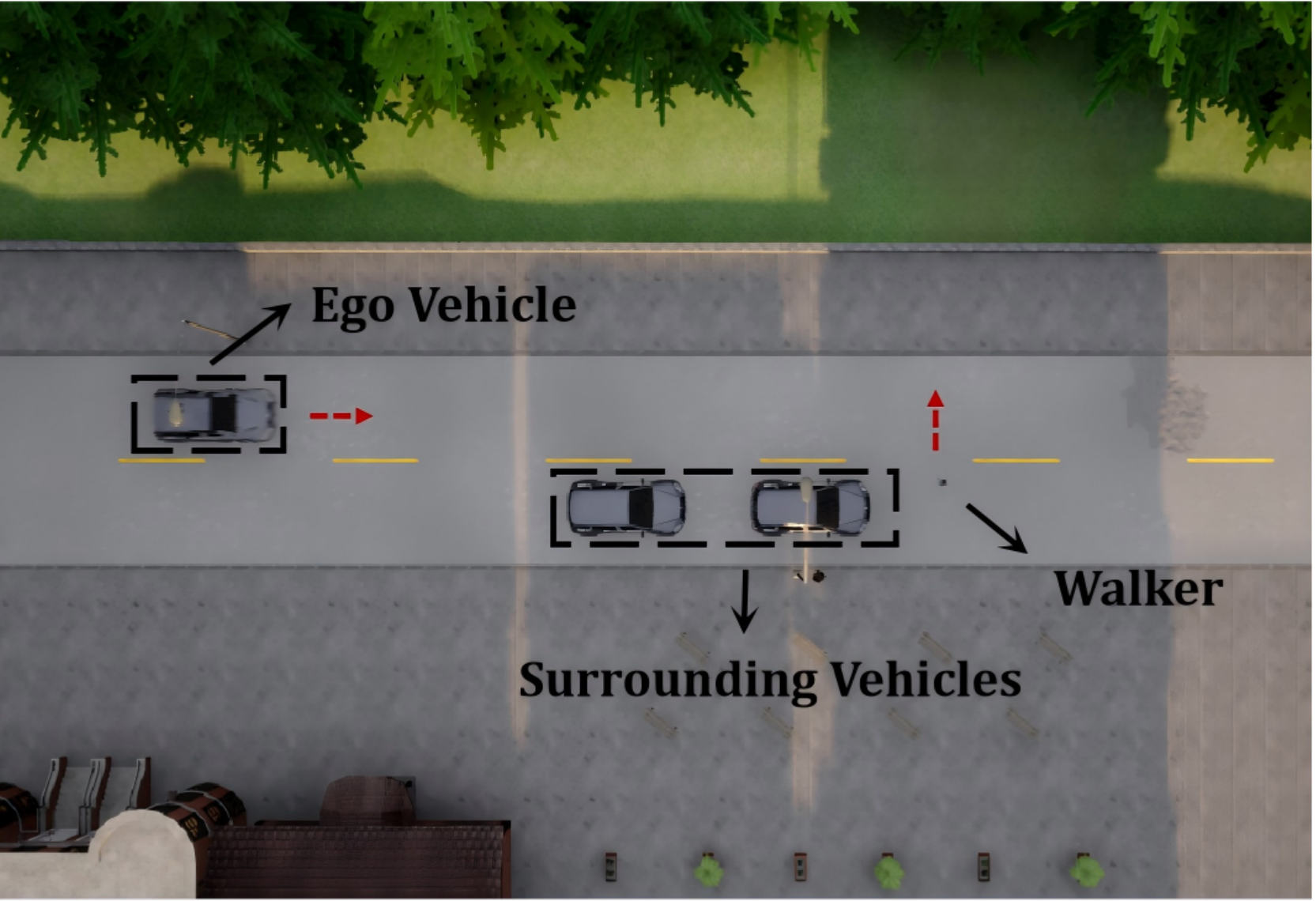}
    \caption{ View 2}
    \label{fig:app_aebnew2}
  \end{subfigure}
  \caption{\textbf{AEB scenario in CARLA.} Representative camera views.}
  \label{fig:app_aeb_scene}
\end{figure}

\subsubsection{State and Action Spaces}
\paragraph{State space.}
The observation is a semantic segmentation image from an ego-centric camera.
We use the semantic camera output and downsample it from $1280\times720$ to $80\times45$, then flatten and normalize to $[0,1]$:
\begin{equation}
\mathbf{s}_t \in [0,1]^{45\times 80}.
\end{equation}
During training, we randomize the camera configuration by sampling the mounting height from $\{1.2,1.3,1.4\}$ (scaled by vehicle bounds) and the semantic camera FoV from $\{80^\circ,90^\circ,100^\circ\}$ each episode.

\paragraph{Action space.}
The agent outputs a single continuous braking command:
\begin{equation}
a_t \in [0,1],
\end{equation}
where larger values correspond to stronger braking intensity.
Throttle is controlled by a simple proportional cruise controller to track the target speed when braking is inactive; steering is fixed to $0$ (straight driving).

\subsubsection{Metrics and Active Braking Window}
\label{app:carla_aeb_metrics}
We report both task outcomes and safety/comfort metrics.
Task outcomes include success rate (SR) and collision rate (CR).
Safety margin is measured by time-to-collision (TTC) statistics (mean/min), and comfort is measured by Acc$_{\mathrm{RMS}}$ and Jerk$_{\mathrm{RMS}}$, all computed from the executed trajectory.
Acc$_{\mathrm{RMS}}$ and Jerk$_{\mathrm{RMS}}$ are reported in two variants: (i) full-episode statistics, and (ii) statistics restricted to an \emph{active braking window} that focuses on the safety-critical interaction phase.

The active window is defined in the evaluation script as:
\[
\text{active} \Leftrightarrow (\neg \texttt{ped\_passed})\ \wedge\ (d_y^{\text{signed}}>0.5)\ \wedge\ (v>0.5),
\]
where $d_y^{\text{signed}} = y_{\text{ped}} - y_{\text{ego}}$ denotes the signed longitudinal distance (positive when the pedestrian is ahead of the ego vehicle), and $v$ is the ego speed in $\mathrm{m/s}$.
We estimate acceleration and jerk from finite differences of the logged ego speed using timestep $\Delta t = 1/f$ with $f=25\,\mathrm{Hz}$:
$a_t = (v_t - v_{t-1})/\Delta t$ and $j_t = (a_t - a_{t-1})/\Delta t$.
TTC statistics (mean/min) are computed on the same active window.

\subsubsection{Training and Evaluation Protocol}
All CARLA methods share the same RLHF-style backbone (HG-TD3) with human guidance:
when manual intervention is detected (keyboard/steering wheel input), the environment returns a human action and the transition is stored with an intervention flag; otherwise the agent action is stored normally.
We evaluate every 5 episodes and summarize training-time curves such as evaluation return and cumulative collision rate.
Unless otherwise specified, training runs for \textbf{151 episodes} (set by \texttt{maximum\_episode}) with identical environment settings and evaluation protocol across methods.

\subsubsection{Learning Curves for Comfort and Safety Margin}
\label{app:aeb_curves}

\paragraph{Additional quantitative results.}
~\cref{tab:aeb_appendix_summary} reports outcome, safety margin, and comfort metrics for AEB (mean$\pm$std).
All methods achieve identical task performance (100\% success and 0\% collision), so the comparison focuses on \emph{how} braking is executed.
DWS achieves the \textbf{highest} TTC$_{\text{mean}}$ on the active braking window (1.791 vs.\ 1.327--1.708), indicating a larger safety margin during the critical interaction phase.
Meanwhile, DWS substantially improves braking comfort by reducing both Acc$_{\mathrm{RMS}}$ and Jerk$_{\mathrm{RMS}}$ on the active window (1.386 and 47.900) compared to the best baseline (2.633 and 90.284).
These gains also persist on the full-episode metrics, where DWS yields the lowest Acc$_{\mathrm{RMS}}$ and Jerk$_{\mathrm{RMS}}$ overall (0.836 and 28.921), suggesting consistently smoother longitudinal control beyond the active braking interval.

We further visualize training/evaluation dynamics in ~\cref{fig:app_aeb_curves}.

Following our protocol, we run a noise-free evaluation every 5 episodes and log (i) comfort via Jerk RMS (~\cref{fig:app_aeb_eval_jerk}) and (ii) safety margin via the mean TTC on the active braking window (~\cref{fig:app_aeb_eval_ttc}).
We also plot the training-phase smoothness loss (Smoothness MSE) in ~\cref{fig:app_aeb_train_smooth}, which is most informative during optimization as it quickly saturates near zero in evaluation.

\begin{table}[!htbp]

\centering
\caption{\textbf{AEB evaluation summary (expanded).} Mean$\pm$std over test episodes. All metrics are computed on executed actions. Metrics with “active” are computed on the active braking window.}
\label{tab:aeb_appendix_summary}
 
\setlength{\tabcolsep}{4pt}
\begin{tabular}{lccccccc}
\toprule
Algorithm
& SR\,(\%)
& CR\,(\%)
& TTC$_{\text{mean}}$\,active$\uparrow$
& Acc$_{\mathrm{RMS}}$\,active$\downarrow$
& Acc$_{\mathrm{RMS}}$$\downarrow$
& Jerk$_{\mathrm{RMS}}$\,active$\downarrow$
& Jerk$_{\mathrm{RMS}}$$\downarrow$ \\
\midrule
HG-TD3
& 100.0$\pm$0.0 & 0.0$\pm$0.0
& 1.708$\pm$0.108
& 2.770$\pm$0.072
& 1.487$\pm$0.026
& 97.527$\pm$2.567
& 52.338$\pm$0.947 \\

L2C2
& 100.0$\pm$0.0 & 0.0$\pm$0.0
& 1.677$\pm$0.084
& 2.819$\pm$0.131
& 1.502$\pm$0.036
& 99.013$\pm$4.531
& 52.783$\pm$1.271 \\

ActionChunk
& 100.0$\pm$0.0 & 0.0$\pm$0.0
& 1.667$\pm$0.069
& 1.431$\pm$0.083
& 1.516$\pm$0.028
& 94.273$\pm$2.945
& 50.438$\pm$1.018 \\

LipsNet++
& 100.0$\pm$0.0 & 0.0$\pm$0.0
& 1.682$\pm$0.052
& 2.633$\pm$0.129
& 1.434$\pm$0.036
& 90.284$\pm$4.692
& 49.268$\pm$1.450 \\

SmODE
& 100.0$\pm$0.0 & 0.0$\pm$0.0
& 1.327$\pm$0.131
& 2.669$\pm$0.114
& 1.547$\pm$0.035
& 93.958$\pm$3.976
& 54.467$\pm$1.227 \\

DWS(Ours)
& 100.0$\pm$0.0 & 0.0$\pm$0.0
& \textbf{1.791$\pm$0.075}
& \textbf{1.386$\pm$0.128}
& \textbf{0.836$\pm$0.037}
& \textbf{47.900$\pm$4.656}
& \textbf{28.921$\pm$1.391} \\
\bottomrule
\end{tabular}
\end{table}

\begin{figure}[!htbp]
  \centering
  \begin{subfigure}{0.49\linewidth}
    \centering
    \includegraphics[width=\linewidth]{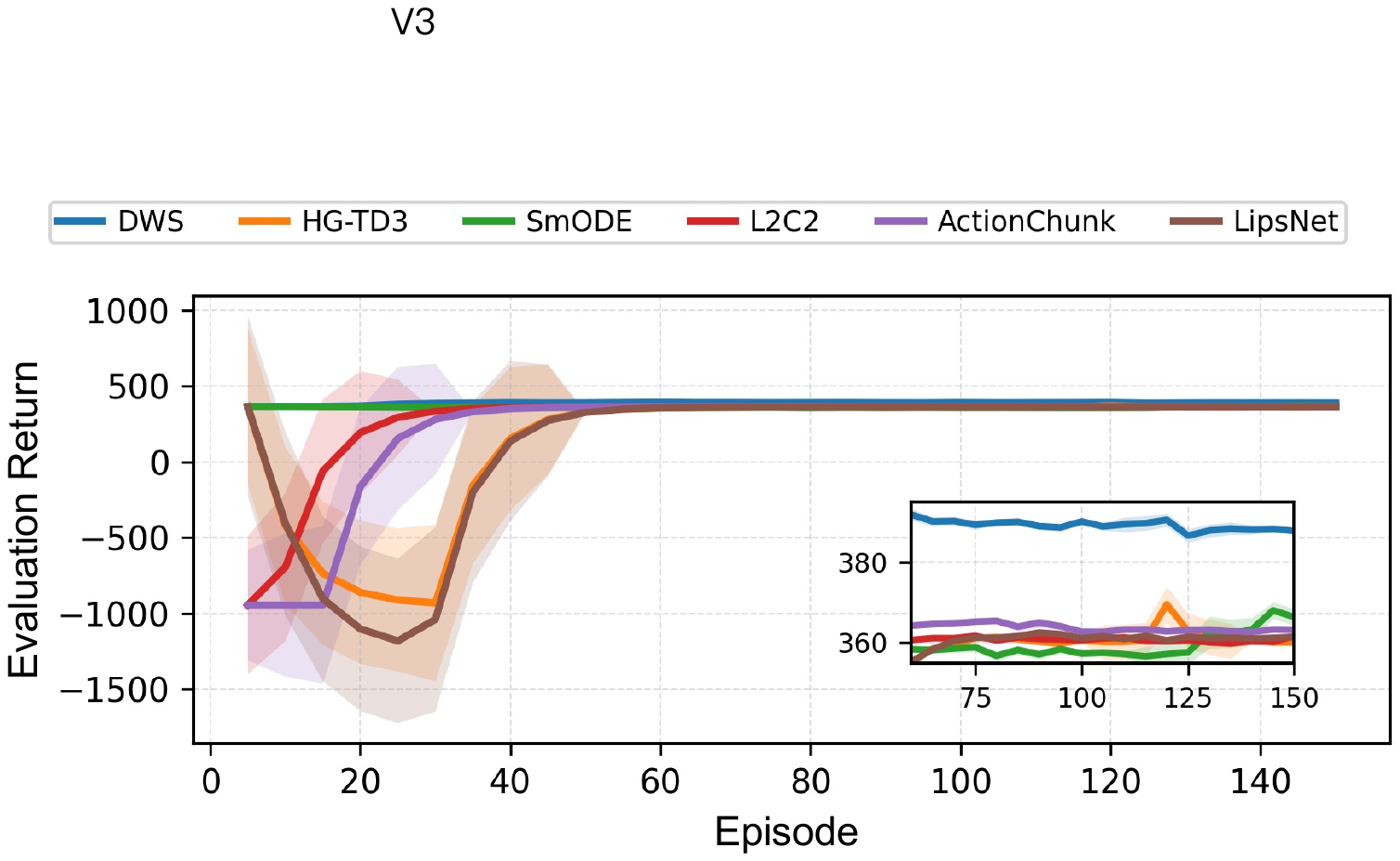}
    \caption{Evaluation return.}
    \label{fig:aeb_eval_return}
  \end{subfigure}
  \hfill
  \begin{subfigure}{0.49\linewidth}
    \centering
    \includegraphics[width=\linewidth]{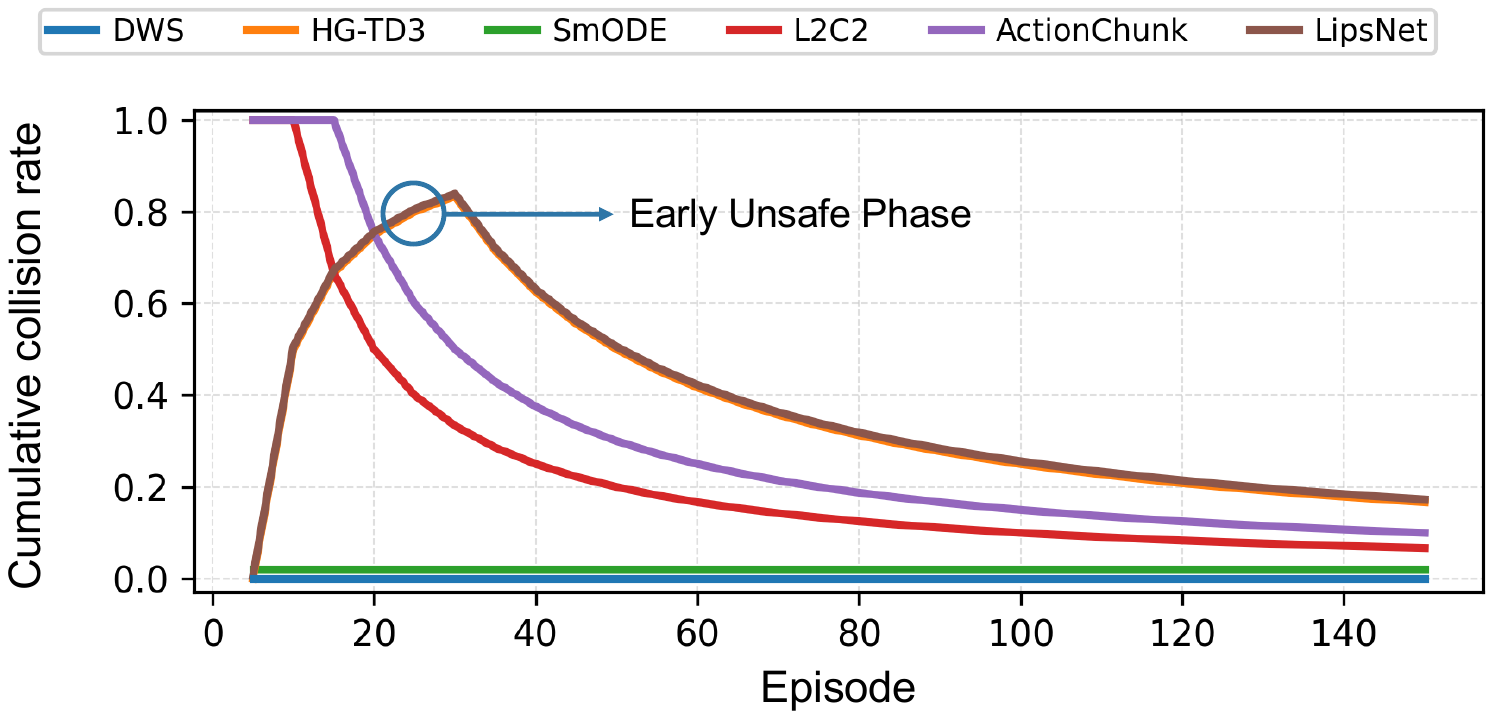}
    \caption{Cumulative collision rate.}
    \label{fig:aeb_eval_collision}
  \end{subfigure}

  \caption{\textbf{AEB training-time evaluation curves (every 5 episodes).}}
  \label{fig:aeb_learning_curves}
\end{figure}

\clearpage

\begin{figure}[!t]
  \centering
  \begin{subfigure}{0.75\linewidth}
    \centering
    \includegraphics[width=\linewidth]{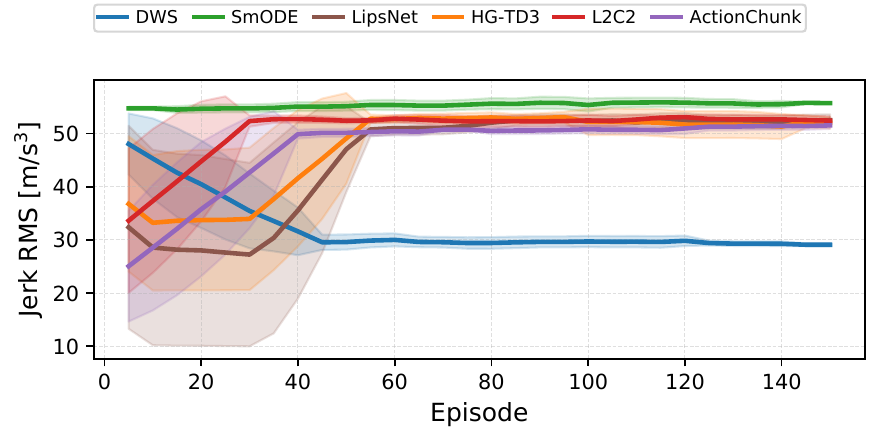}
    \caption{Eval Jerk RMS.}
    \label{fig:app_aeb_eval_jerk}
  \end{subfigure}

  \vspace{1.5mm}

  \begin{subfigure}{0.75\linewidth}
    \centering
    \includegraphics[width=\linewidth]{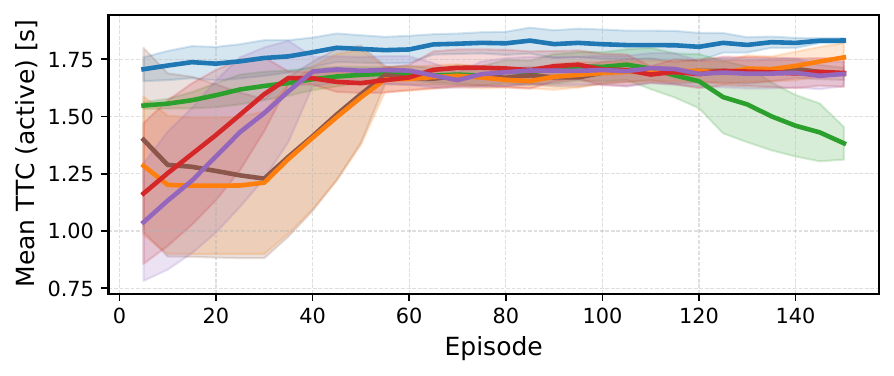}
    \caption{Eval mean TTC (active).}
    \label{fig:app_aeb_eval_ttc}
  \end{subfigure}

  \vspace{1.5mm}

  \begin{subfigure}{0.75\linewidth}
    \centering
    \includegraphics[width=\linewidth]{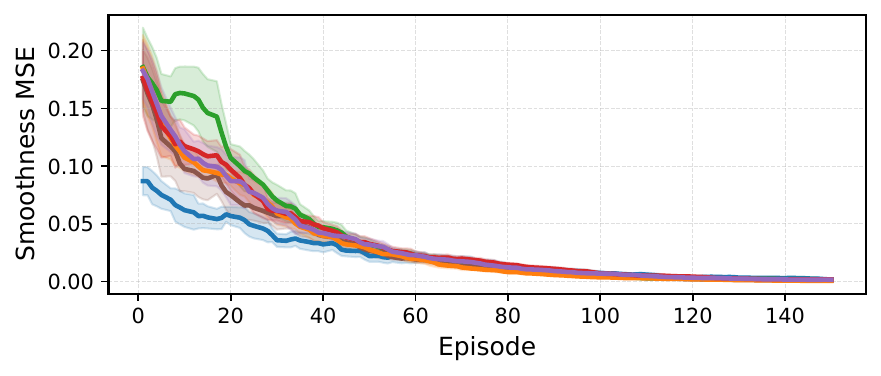}
    \caption{Train Smoothness MSE.}
    \label{fig:app_aeb_train_smooth}
  \end{subfigure}

  \caption{\textbf{AEB learning curves.} Noise-free evaluation every 5 episodes; shaded regions show variability.}
  \label{fig:app_aeb_curves}
\end{figure}






\end{document}